\def\1{\bm{1}}
\DeclareMathAlphabet{\mathsfit}{\encodingdefault}{\sfdefault}{m}{sl}
\SetMathAlphabet{\mathsfit}{bold}{\encodingdefault}{\sfdefault}{bx}{n}
\def\gD{{\mathcal{D}}}
\DeclareMathOperator*{\argmax}{arg\,max}
\DeclareMathOperator*{\argmin}{arg\,min}
\theoremstyle{plain}
\newtheorem{theorem}{Theorem}[section]
\newtheorem{lemma}[theorem]{Lemma}
\newtheorem{corollary}[theorem]{Corollary}
\newtheorem{definition}[theorem]{Definition}
\newtheorem{assumption}[theorem]{Assumption}
\newtheorem{remark}[theorem]{Remark}
\newcommand{\Ac}{\mathcal{A}}
\newcommand{\Dc}{\mathcal{D}}
\newcommand{\Ec}{\mathcal{E}}
\newcommand{\Fc}{\mathcal{F}}
\newcommand{\Lc}{\mathcal{L}}
\newcommand{\Rc}{\mathcal{R}}
\newcommand{\Sc}{\mathcal{S}}
\newcommand{\Xc}{\mathcal{X}}
\newcommand{\Pf}{\mathfrak{P}}
\newcommand{\Rf}{\mathfrak{R}}
\newcommand{\oneb}{\mathds{1}}
\newcommand{\Eb}{\mathbb{E}}
\newcommand{\Nb}{\mathbb{N}}
\newcommand{\Rb}{\mathbb{R}}
\newcommand{\Vb}{\mathbb{V}}
\newcommand{\Reg}{\textup{Reg}}
\newcommand*{\tikzmk}[1]{\tikz[remember picture,overlay,] \node (#1) {};\ignorespaces}
\newcommand{\boxita}[1]{\tikz[remember picture,overlay]{\node[yshift=0pt,fill=#1,opacity=.08,fit={($(A)+(-0.01\linewidth, 0.5\baselineskip)$)($(A)+(0.97\linewidth, -8\baselineskip)$)}] {};}\ignorespaces}
\newcommand{\boxitb}[1]{\tikz[remember picture,overlay]{\node[yshift=0pt,fill=#1,opacity=.15,fit={($(B)+(0.46\linewidth, -0.1\baselineskip)$)($(B)+(-0.52\linewidth, 2.5\baselineskip)$)}] {};}\ignorespaces}
\title{Harnessing the Power of Federated Learning in Federated Contextual Bandits}
\author{\name Chengshuai Shi \email cs7ync@virginia.edu \\
      \addr Department of Electrical and Computer Engineering\\
      University of Virginia
      \AND
      \name Ruida Zhou \email ruida@g.ucla.edu \\
      \addr Department of Electrical and Computer Engineering\\
      University of California, Los Angeles
      \AND
      \name Kun Yang \email ky9tc@virginia.edu \\
      \addr Department of Electrical and Computer Engineering\\
      University of Virginia
      \AND\name Cong Shen \email cong@virginia.edu \\
      \addr Department of Electrical and Computer Engineering\\
      University of Virginia}
\begin{document}

\maketitle

\begin{abstract}
    Federated learning (FL) has demonstrated great potential in revolutionizing distributed machine learning, and tremendous efforts have been made to extend it beyond the original focus on supervised learning. Among many directions, federated contextual bandits (FCB), a pivotal integration of FL and sequential decision-making, has garnered significant attention in recent years. Despite substantial progress, existing FCB approaches have largely employed their tailored FL components, often deviating from the canonical FL framework. Consequently, even renowned algorithms like FedAvg remain under-utilized in FCB, let alone other FL advancements. Motivated by this disconnection, this work takes one step towards building a tighter relationship between the canonical FL study and the investigations on FCB. In particular, a novel FCB design, termed FedIGW, is proposed to leverage a regression-based CB algorithm, i.e., inverse gap weighting. Compared with existing FCB approaches, the proposed FedIGW design can better harness the entire spectrum of FL innovations, which is concretely reflected as (1) flexible incorporation of (both existing and forthcoming) FL protocols; (2) modularized plug-in of FL analyses in performance guarantees; (3) seamless integration of FL appendages (such as personalization, robustness, and privacy). We substantiate these claims through rigorous theoretical analyses and empirical evaluations.
\end{abstract}

\section{Introduction}\label{sec:intro}
Federated learning (FL), initially proposed by \citet{mcmahan2017communication, konevcny2016federated}, has garnered significant attention for its effectiveness in enabling distributed machine learning with heterogeneous agents \citep{li2020federated,kairouz2021advances}. As FL has gained popularity, numerous endeavors have sought to extend its applicability beyond the original realm of supervised learning, e.g., to unsupervised and semi-supervised learning \citep{zhang2020federated, van2020towards,zhuang2022divergence,lubana2022orchestra}. 
Among these directions, the exploration of federated contextual bandits (FCB) has emerged as a particularly compelling area of research, representing a pivotal fusion of FL and sequential decision-making, which has found various practical applications in cognitive radio and recommendation systems, among others.

Over the past several years, substantial progress has been made in the field of FCB \citep{wang2019distributed, li2022generalized, li2022kernelized,li2023learning, dai2023federated}, particularly those involving varying function approximations (e.g., linear models, as discussed in \citet{huang2021federated, dubey2020differentially,li2022asynchronous, he2022simple, amani2022distributed,fan2023federated}). Despite their different focuses, it can be observed that these existing designs all employ certain FL components to enable the participating agents to collaboratively update their CB parameterization via locally collected interaction data. 

However, these FL components adopted in the previous FCB works are often over-simplified. In particular, the canonical FL framework (traced back to the celebrated FedAvg algorithm \citep{mcmahan2017communication}) typically takes an optimization view of incorporating the local data through \emph{multi-round} aggregation of \emph{model parameters} (such as gradients). In contrast, the FL protocol in many existing FCB works is \emph{one-shot} aggregation of some \emph{compressed local data} per epoch (e.g., combining local estimates and local covariance matrices in the study of federated linear bandits). Admittedly, for some simple cases, such straightforward aggregation is sufficient and allows problem-specific finetuning for tight performance bounds. However, such a deviation from the canonical FL studies prohibits existing FCB designs from leveraging the vast FL advances, and thus largely limits the connection between FL and FCB.

Motivated by this disconnection, this work, instead of pursuing tighter performance bounds, aims to utilize the canonical FL framework as the FL component of FCB to harness the full power of FL studies in FCB. We propose FedIGW -- an exploring design that demonstrates the ability to leverage a comprehensive array of FL advancements, encompassing canonical algorithmic approaches (like FedAvg \citep{mcmahan2017communication} and SCAFFOLD \citep{karimireddy2020scaffold}), rigorous convergence analyses, and critical appendages (such as personalization, robustness, and privacy). To the best of our knowledge, this is the first paper that explicitly focuses on the close connection between FL and FCB, which we hope can inspire a new line of FCB studies. The distinctive contributions of FedIGW can be succinctly summarized as follows:

$\bullet$ \textbf{Flexible incorporation of FL protocols.} In the FCB setting with stochastic contexts and a realizable reward function, FedIGW employs the inverse gap weighting (IGW) algorithm for CB while versatile FL protocols can be incorporated (e.g., FedAvg and SCAFFOLD), provided they can solve a standard FL problem. These two parts iterate according to designed epochs: FL, drawing from previously gathered interaction data, supplies estimated reward functions for the forthcoming IGW interactions. A pivotal advantage is that the flexible FL component in FedIGW provides substantial adaptability, meaning that existing and future FL protocols can be seamlessly leveraged. Experimental results using real-world data with several different FL choices corroborate the practicability and flexibility of FedIGW.

$\bullet$ \textbf{Modularized plug-in of FL analyses.} A general theoretical analysis of FedIGW is developed to demonstrate its provably efficient performance. The influence of the adopted FL protocol is captured through its optimization error, delineating the excess risk of the learned reward function. Notably, any theoretical breakthroughs in FL convergence rates can be immediately integrated into the obtained analysis framework and supply the corresponding guarantees of FedIGW. Concretized results are further provided through the utilization of FedAvg and SCAFFOLD in FedIGW.

$\bullet$ \textbf{Seamless integration of FL appendages.} Beyond its inherent generality and efficiency, FedIGW exhibits exceptional extensibility. Various appendages from FL studies can be flexibly integrated without necessitating alterations to the CB component. We explore the extension of FedIGW to personalized learning and the incorporation of privacy and robustness guarantees. Similar investigations in prior FCB works would entail substantial algorithmic modifications, while FedIGW can effortlessly leverage corresponding FL advancements to obtain these appealing attributes.

\textbf{Key related works.} Most of the previous studies on FCB are discussed in Sec.~\ref{sec:principle}, and more comprehensively reviewed in Appendix~\ref{app:related}. We note that these FCB designs with tailored FL protocols in previous works sometimes can achieve near-optimal performance bounds in specific settings, while our proposed FedIGW is more practical and extendable. We believe these two types of designs are valuable supplements to each other. A high-level comparison between the proposed FedIGW and existing FCB designs is listed in Table~\ref{tab:comparison}. 

{Here we particularly note a recent paper \citep{agarwal2023empirical} that is closely related to this work. It also proposes to decouple the FL components in FCB by leveraging regression-based CB designs. However, \citet{agarwal2023empirical} mainly focuses on empirical investigations, while our work offers valuable complementary contributions by conducting thorough theoretical analyses (see Sec.~\ref{sec:theory}), building a modularized connection between theoretical studies in FL and FCB. Moreover, experiments reported in Sec.~\ref{sec:exp} provide empirical results on two datasets that are different than \citet{agarwal2023empirical}, offering additional practical insights.}

\section{Federated Contextual Bandits}\label{sec:FCB}
This section introduces federated contextual bandits (FCB). A concise formulation is first provided. Then, the existing works are re-visited with a focus on revealing the disconnection between FL and FCB.

\subsection{Problem Formulation}\label{subsec:formulation}

\textbf{Agents.} In the FCB setting, a total of $M$ agents simultaneously participate in solving a contextual bandit (CB) problem. For generality, we consider an asynchronous system: each of the $M$ agents has a clock indicating her time step, which is denoted as $t_m = 1, 2, \cdots$ for agent $m$. For convenience, we also introduce a global time step $t$. Denote by $t_m(t)$ the agent $m$'s local time step when the global time is $t$, and $t(t_m, m)$ the global time step when the agent $m$'s local time is $t_m$.

Agent $m$ at each of her local time step $t_m = 1,2, \cdots$ observes a context $x_{m,t_m}$, selects an action $a_{m,t_m}$ from an action set $\Ac_{m,t_m}$, and then receives the associated reward $r_{m,t_m}(a_{m,t_m})$ (possibly depends on both $x_{m,t_m}$ and $a_{m, t_m}$) as in the standard CB \citep{lattimore2020bandit}. Each agent's goal is to collect as many rewards as possible given a time horizon.

\textbf{Federation.} While many efficient single-agent (centralized) algorithms have been proposed for CB \citep{lattimore2020bandit}, FCB targets building a federation among agents to perform collaborative learning such that the performance can be improved from learning independently. Especially, common interests shared among agents motivate their collaboration. Thus, FCB studies typically assume that the agents' environments are either fully \citep{wang2019distributed, huang2021federated,dubey2020differentially,he2022simple, amani2022distributed, li2022kernelized, li2022generalized, dai2023federated} or partially \citep{li2022asynchronous,agarwal2020federated} shared in the global federation.

In federated learning, the following two modes are commonly considered: (1) There exists a central server in the system, and the agents can share information with the server, which can then broadcast aggregated information back to the agents; or (2) There exists a communication graph between agents, who can share information with their neighbors on the graph. In the later discussions, we mainly consider the first scenario, i.e., collaborating through the server, which is also the main focus in FL, while both modes can be effectively encompassed in the proposed FedIGW design.

\subsection{The Current Disconnection Between FCB and FL}\label{sec:principle}
The exploration of FCB traces its origins to distributed multi-armed bandits \citep{wang2019distributed}. Since then, FCB research has predominantly focused on enhancing performance in broader problem domains, encompassing various types of reward functions, such as linear \citep{wang2019distributed, huang2021federated,dubey2020differentially}, kernelized \citep{li2022kernelized,li2023learning}, generalized linear \citep{li2022generalized} and neural \citep{dai2023federated} (see Appendix~\ref{app:related} for a comprehensive review).

\begin{table}[tb]
    \centering
    \caption{\centering A comparison between existing FCB designs and the proposed FedIGW.}
    \begin{tabular}{c|c|c}
        \hline
         & Existing FCB designs & FedIGW \\
        \hline
        FL components & Develop tailored FL protocols & \makecell[c]{Leverage versatile FL protocols, \\such as FedAvg and SCAFFOLD} \\
        \hline
        Theoretical guarantees &\makecell[c]{Analyse tailored FL protocols \\for the focused instance} & \makecell[c]{Plugin FL convergence rates\\ in a modularized fashion}\\
        \hline
        \makecell[c]{Extensions (e.g., personalization, \\robustness, privacy)} & Require further tailored protocols & \makecell[c]{Integrate corresponding \\FL advances directly} \\
        \hline
    \end{tabular}
     \vspace{-0.2in}
    \label{tab:comparison}
\end{table}

\begin{table}[tb]
    \centering
    \caption{\centering A compact summary of investigations on FCB with their adopted FL and CB components; \newline a more comprehensive review is in Appendix~\ref{app:related}.}
    \begin{tabular}{c|c|>{\columncolor[RGB]{251, 229, 214}}c|>{\columncolor[RGB]{222, 235, 247}}c}
        \hline
        %\multicolumn{4}{c}{\textbf{Design Principle: FCB = \textcolor{orange}{FL} + \textcolor{cyan}{CB}}}\\
        %\hline
        Reference & Setting & FL & CB \\
        \hline
        \multicolumn{4}{c}{Globally Shared Full Model (See Section~\ref{sec:FedIGW})}\\
        \hline
       \cite{wang2019distributed}  & Tabular & Mean Averaging & AE\\
       \cite{wang2019distributed, huang2021federated} & Linear & Linear Regression & AE\\
       \makecell{\cite{li2022asynchronous, he2022simple}} & Linear &  Ridge Regression & UCB\\
       \cite{li2022generalized} & Gen. Linear & Distributed AGD & UCB\\
       \cite{li2022kernelized,li2023learning} & Kernel & Nystr\"om Approximation & UCB\\
       \cite{dai2023federated} & Neural & NTK Approximation & UCB\\
       FedIGW (this work) & Realizable & Flexible (e.g., FedAvg) & IGW \\
       \hline
        \multicolumn{4}{c}{Globally Shared Partial Model (see Section~\ref{subsec:per})}\\
        \hline
        \cite{li2022asynchronous} & Linear &  Alternating Minimization & UCB\\
       \cite{agarwal2020federated} & Realizable  & FedRes.SGD & $\varepsilon$-greedy\\
       FedIGW (this work) & Realizable & Flexible (e.g., LSGD-PFL) & IGW\\
       \hline
    \end{tabular}
    \begin{center}
        { AE: arm elimination; Gen. Linear: generalized linear model; AGD: accelerated gradient descent}
    \end{center}
    \label{tab:summary}
    \vspace{-0.2in}
\end{table}

\begin{wrapfigure}{R}{0.45\textwidth}	
        \setlength{\abovecaptionskip}{-2pt} 
	\centering
	\includegraphics[width=0.45\textwidth]{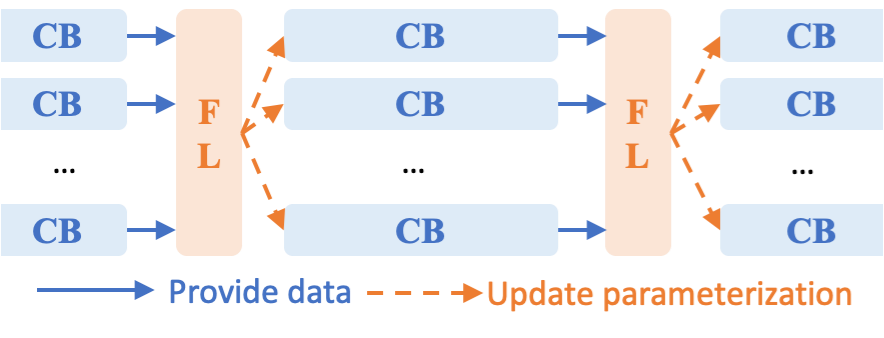}
	\caption{\centering The FCB design principle of periodically alternating between the employed CB and FL components.}
        \label{fig:principle}
\end{wrapfigure}

Upon a holistic review of these works, it becomes apparent that each of them employs a particular FL protocol to update the parameters required by CB.  To be more specific, a periodically alternating design between CB and FL is commonly adopted as reflected in Fig.~\ref{fig:principle}: \textcolor{cyan}{CB} (collects one epoch of data in parallel) $\rightarrow$ \textcolor{orange}{FL} (proceeds with CB data together and outputs CB's parameterization) $\rightarrow$ updated \textcolor{cyan}{CB} (collects another epoch of data in parallel) $\rightarrow$ $\cdots$. A compact summary, including the components of FL and CB employed in previous FCB works, is presented in Table~\ref{tab:summary}.

However, with a deeper look into the existing works, it is evident that the adopted FL components are not well investigated and even have some mismatches from canonical FL designs \citep{mcmahan2017communication,konevcny2016federated}. For example, in federated linear bandits \citep{wang2019distributed,dubey2020differentially,li2022asynchronous,he2022simple, amani2022distributed,fan2023federated} and its extensions \citep{li2022kernelized,li2023learning,li2022generalized,dai2023federated}, the adopted FL protocols typically involve the direct transmission of local reward aggregates and covariance matrices, constituting a \emph{one-shot aggregation} of \emph{compressed local data} per epoch (albeit with subtle variations, such as synchronous or asynchronous communications); a concrete example is given in Appendix~\ref{subapp:example},  Due to both efficiency and privacy concerns, such choices are rare (and even undesirable) in canonical FL studies, where agents typically communicate and aggregate their \emph{model parameters} (e.g., gradients) over \emph{multiple rounds}, e.g., the renowned FedAvg algorithm \citep{mcmahan2017communication} (see details in Appendix~\ref{subapp:example}).

We believe that this disparity represents a significant drawback in current FCB studies, as it limits the connection between FL and FCB to merely philosophical, i.e., benefiting individual learning by collaborating through a federation, while vast FL studies cannot be leveraged to benefit FCB as illustrated in Fig.~\ref{fig:comparison}. Driven by this gap, this work aims to take one step towards establishing a closer relationship between FCB and FL through the introduction of an exploring design, FedIGW, that is detailed in the subsequent sections. This approach provides the flexibility to integrate any FL protocol following the standard FL framework, which allows us to effectively harness the progress made in FL studies, encompassing canonical algorithmic designs, convergence analyses, and useful appendages.

\begin{figure}[bth]
    \centering
    \includegraphics[width=0.95\linewidth]{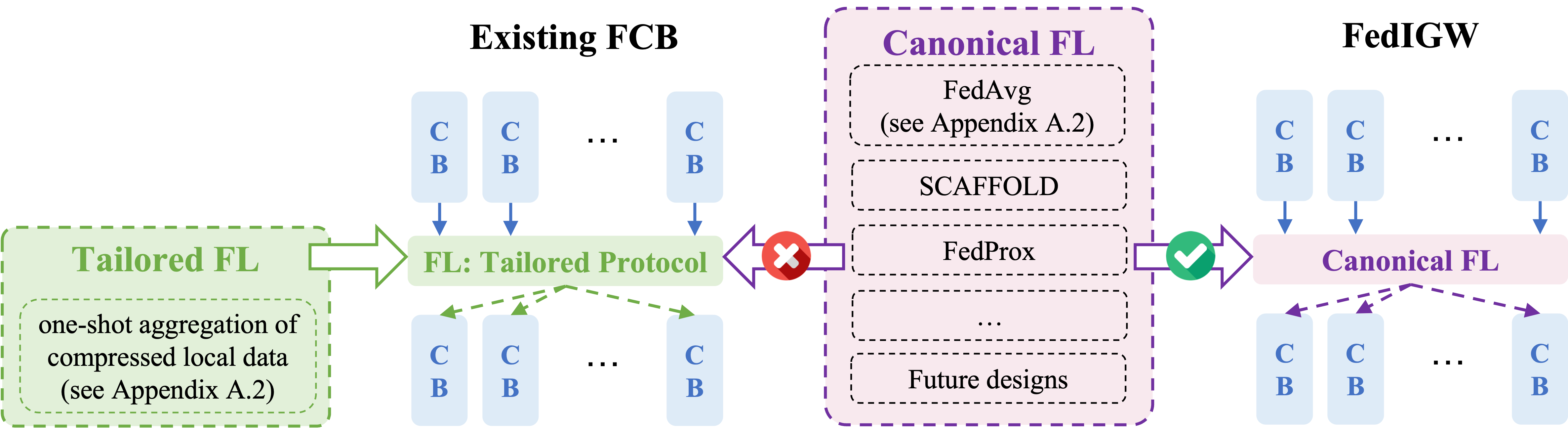}
    \caption{\centering Comparison between the FL components in existing FCB approaches and the FedIGW design proposed in this work, where the former requires tailored FL protocols while the latter can flexibly leverage both existing and forthcoming protocols in canonical FL studies. Additional comparisons regarding the FL components can be found in Appendix~\ref{subapp:example}.}
    \label{fig:comparison}
    \vspace{-0.2in}
\end{figure}

\section{FedIGW: Flexible Incorporation of FL Protocols}\label{sec:FedIGW}
In this section, we present FedIGW, a novel FCB algorithm proposed in this work. Before delving into the algorithmic details, a more concrete system model with stochastic contexts and a realizable reward function is introduced. Subsequently, we outline the specifics of FedIGW, emphasizing its principal strength in seamlessly integrating canonical FL protocols.

\subsection{System Model}\label{subsec:system}
Built on the formulation in Sec.~\ref{sec:FCB}, for each agent $m\in [M]$, denote $\Xc_m$ a context space, and $\Ac_m$ a finite set of $K_m$ actions. At each time step $t_m$ of each agent $m$, the environment samples a context $x_{m,t_m} \in \Xc_m$ and a context-dependent reward vector $r_{m,t_m}\in [0,1]^{\Ac_m}$ according to a fixed but unknown distribution $\Dc_m$. The agent $m$, as in Sec.~\ref{sec:FCB}, then observes the context $x_{m,t_m}$, picks an action $a_{m,t_m}\in \Ac_m$, and receives the reward $r_{m,t_m}(a_{m,t_m})$. The expected reward of playing action $a_m$ given context $x_m$ is denoted as $\mu_m(x_m,a_m): = \Eb[r_{m,t_m}(a_m)|x_{m,t_m}=x_m]$.

With no prior information about the rewards, the agents gradually learn their optimal policies, denoted by $\pi^*_m(x_m):= \argmax_{a_m\in \Ac_m} \mu_m(x_m, a_m) $ for agent $m$ with context $x_{m}$. Following a standard notation \citep{wang2019distributed, huang2021federated, dubey2020differentially, li2022asynchronous, he2022simple, amani2022distributed, li2022generalized, li2022kernelized,li2023learning, dai2023federated}, the overall regret of $M$ agents in this environment is
\begin{align*}
    \Reg(T): = \Eb\left[\sum_{m\in [M]}\sum_{t_m\in [T_m]} \big[\mu_m(x_{m,t_m}, \pi^*_m(x_{m,t_m})) - \mu_m(x_{m,t_m},a_{m,t_m})\big]\right],
\end{align*}
where $T_m = t_m(T)$ is the effective time horizon for agent $m$ given a global horizon $T$ and the expectation is taken over the randomness in contexts and rewards and the agents' algorithms. 
This overall regret can be interpreted as the sum of each agent $m$'s individual regret with respect to (w.r.t.) her optimal strategy $\pi^*_m$. Hence, it is ideal to be sub-linear w.r.t. the number of agents $M$, which indicates the agents' learning processes are accelerated on average due to federation.

\textbf{Realizablilty.}
Despite not knowing the true expected reward functions, we consider the scenario that they are the same across agents and are within a function class $\Fc$, to which the agents have access. This assumption, rigorously stated in the following, is often referred to as the \emph{realizability} assumption.
\begin{assumption}[Realizability]\label{asp:realizable}
    There exists $f^*$ in $\Fc$ such that $f^*(x_m, a_m) = \mu_m(x_m, a_m)$ for all $m\in [M]$, $x_m \in \Xc_m$ and $a_m \in \Ac_m$.
\end{assumption}
This assumption is a natural extension from its commonly-adopted single-agent version \citep{agarwal2012contextual,simchi2022bypassing,xu2020upper,sen2021top} to a federated one. Note that it does not imply that the agents' environments are the same since they may face different contexts $\Xc_m$, arms $\Ac_m$, and distributions $\Dc_m^{\Xc_m}$, where $\Dc^{\Xc_m}_m$ is the marginal distribution of the joint distribution $\Dc_m$ on the context space $\Xc_m$. We study a general FCB setting only with this assumption, which incorporates many previously studied FCB scenarios as special cases. 
For example, the federated linear bandits \citep{huang2021federated, dubey2020differentially,li2022asynchronous, he2022simple, amani2022distributed} are with a linear function class $\Fc$. {Furthermore, Assumption~\ref{asp:realizable} aligns with considerations in the canonical FL studies, where all clients learn one common model (i.e., $f^*$) although their data distributions can be different (i.e., varying distribution $\gD_m$). Additional studies on personalization can be found in Sec.~\ref{subsec:per}.}

\begin{algorithm}[tb]
	\caption{FedIGW (Agent $m$)}
	\label{alg:FedIGW_agent}
	\begin{algorithmic}[1]
        \Require epoch number $l = 1$, reward function $\widehat{f}_{m}^l(\cdot, \cdot)=0$, local dataset $\Sc^{l}_m = \emptyset$
        \For{time step $t_m = 1, 2, \cdots$}
        \State\tikzmk{A} observe context $x_{m,t_m}$ \Comment{\textit{\textcolor{cyan}{CB: IGW}}}
        \State compute $\widehat{a}^*_m = \argmax_{a_m\in \Ac_m} \widehat{f}^{l}(a_m, x_{m,t_m})$ and action selection distribution $$p^{l}_m(a_{m}|x_{m,t_m}) \gets \begin{cases} 1/\left(K_m + \gamma^l\left(\widehat{f}^l(\widehat{a}^*_{m}, x_{m,t_m}) -  \widehat{f}^l(a_m, x_{m,t_m})\right)\right) & \text{if $a_m \neq \widehat{a}^*_m$} \\ 1 - \sum_{a'_m\neq \widehat{a}^*_m} p^{l}_m(a'_{m}|x_{m,t_m})& \text{if $a_m =\widehat{a}^*_m$}\end{cases}$$
        \State select action $a_{m,t_m} \sim p_{m}^l(\cdot|x_{m,t_m})$; observe reward $r_{m,t_m}(a_{m,t_m})$
        \State update the local dataset $\Sc^l_m \gets \Sc^l_m \cup \{(x_{m,t_m}, a_{m,t_m}, r_{m,t_m}(a_{m,t_m}))\}$\tikzmk{B}
        \boxita{cyan}
        \If{$t_m = t_m(\tau^l)$} \Comment{\textit{\textcolor{orange}{FL}}}
        \State perform FL $\widehat{f}^{l+1} \gets \texttt{FLroutine}(\Sc_m^l)$
        \State\tikzmk{A} update dataset $\Sc^{l+1}_m \gets \emptyset$; update epoch $l \gets l+1$\tikzmk{B}
        \EndIf \boxitb{orange} 
        \EndFor
	\end{algorithmic}
	\end{algorithm}
    \subsection{Algorithm Design}\label{subsec:alg}
    The FedIGW algorithm proceeds in epochs, which are separated at time slots $\tau^1, \tau^2, \cdots$ w.r.t. the global time step $t$, i.e., the $l$-th epoch starts from $t = \tau^{l-1}+1$ and ends at $t = \tau^{l}$. The overall number of epochs is denoted as $l(T)$. In each epoch $l$, we describe the FL and CB components as follows, while emphasizing that the FL component is decoupled and follows the standard FL framework.
    
    \textbf{CB: inverse gap weighting (IGW).} For CB, we use inverse gap weighting \citep{abe1999associative}, which has received growing interest in the single-agent setting recently \citep{foster2020beyond,simchi2022bypassing, krishnamurthy2021adapting,ghosh2021model} but has not been fully investigated in the federated setting. At any time step in epoch $l$, when encountering the context $x_m$, agent $m$ first identifies the optimal arm by $\widehat{a}^*_m = \argmax_{a_m \in \Ac_m} \widehat{f}^{l}(x_m, a_m)$ from an estimated reward function $\widehat{f}^{l}$ (provided by the to-be-discussed FL component). Then, she randomly selects her action $a_m$ according to the following distribution, which is inversely proportional to each action's  estimated reward gap from the identified optimal action $\widehat{a}^*_m$:
    \begin{equation*}
        \begin{aligned}%\label{eqn:IGW}
        p^{l}_m(a_{m}|x_{m}) \gets \begin{cases} 1/\left(K_m + \gamma^l\left(\widehat{f}^l(\widehat{a}^*_{m}, x_{m}) -  \widehat{f}^l(a_m, x_{m})\right)\right) & \text{if $a_m \neq \widehat{a}^*_m$} \\ 1 - \sum_{a'_m\neq \widehat{a}^*_m} p^{l}_m(a'_{m}|x_{m})& \text{if $a_m =\widehat{a}^*_m$}\end{cases},
    \end{aligned}
    \end{equation*}
    where $\gamma^l$ is the learning rate in epoch $l$ that controls the exploration-exploitation tradeoff. 
    
    Besides being a valuable supplement to the currently dominating UCB-based studies in FCB, the main merit of leveraging IGW as the CB component is that it only requires an estimated reward function instead of other complicated data analytics, e.g., upper confidence bounds.
    
    \textbf{FL: flexible choices.} By IGW, each agent $m$ performs local stochastic arm sampling and collects a set of data samples $\Sc^{l}_m:=\{(x_{m,t_m}, a_{m,t_m}, r_{m,t_m}: t_m\in [t_m(\tau^{l-1})+1, t_m(\tau^l)])\}$ in epoch $l$. To enhance the performance of IGW in the subsequent epoch $l+1$, an improved estimate $\widehat{f}^{l+1}$ based on all agents' data is desired. This objective aligns precisely with the aim of canonical FL studies, which aggregates local data for better global estimates \citep{mcmahan2017communication,konevcny2016federated}. Thus, the agents can target solving the following standard FL problem:
    \begin{align}\label{eqn:FL}
        \min_{f\in \Fc} \widehat{\Lc}(f; \Sc^l_{[M]}) := \sum_{m\in [M]}(n_m/n)\cdot \widehat{\Lc}_m(f; \Sc^l_m),
    \end{align}
    where $n_m := |\Sc^l_m|$ is the number of samples in dataset $\Sc^l_m$, $n := \sum_{m\in [M]}n_m$ is the total number of samples, and $\widehat{\Lc}_m(f; \Sc^l_m) := (1/n_m) \cdot \sum_{i\in [n_m]} \ell_m(f(x_m^i, a_m^i); r_m^i)$
    is the empirical local loss of agent $m$ with $\ell_m(\cdot; \cdot): \Rb^2 \to \Rb$ as the loss function and $(x_m^i, a_m^i, r_m^i)$ as the $i$-th sample in $\Sc^l_m$. 
    
    As Eqn.~\eqref{eqn:FL} exactly follows the standard formulation of FL, the agents and the server can employ any FL protocol to solve this optimization, such as FedAvg \citep{mcmahan2017communication}, SCAFFOLD \citep{karimireddy2020scaffold} and FedProx \citep{li2020federated}. These wildly-adopted FL protocols typically perform iterative communications of local model parameters (e.g., gradients), instead of one-shot aggregations of compressed local data in previous FCB studies.
    To highlight the remarkable flexibility, we denote the adopted FL protocol as $\texttt{FLroutine}(\cdot)$. With datasets $\Sc^l_{[M]}:=\{\Sc^l_m: m\in [M]\}$, the output function of this FL process, denoted as $\widehat{f}^{l+1}\gets \texttt{FLroutine}(\Sc^l_{[M]})$, is used as the estimated reward function for IGW sampling in the next epoch $l+1$.

    The FedIGW algorithm for agent $m$ is summarized in Alg.~\ref{alg:FedIGW_agent}. The key, as aforementioned, is that the component of FL in FedIGW is highly flexible as it only requires an estimated reward function for later IGW interactions. In particular, any existing or forthcoming FL protocol following the standard FL framework in Eqn.~\eqref{eqn:FL} can be leveraged as the $\texttt{FLroutine}(\cdot)$ in FedIGW.

    \begin{remark}\label{rmk:IGW}\normalfont
        The main underlying reason for selecting IGW  as the CB component is that it is a regression-based CB algorithm, i.e., IGW only requires a learned reward function $\hat{f}^l$ for the CB interaction in one epoch $l$. The canonical FL framework with an optimization perspective is exactly targeted at learning such a function via collaboratively solving Eqn.~\eqref{eqn:FL}, which thus can be integrated with IGW. In contrast, previous FCB designs are predominated by UCB-based CB components as reflected in Table~\ref{tab:summary}. However, obtaining the upper confidence bounds (UCBs) estimates for an unknown reward function is not usually the target of the canonical FL framework. Thus, tailored FL components are developed to fulfill this purpose, e.g., sharing covariance matrices to obtain UCBs for linear reward functions. {We note that there are also other regression-based CB algorithms, e.g., greedy and softmax. IGW is adopted here mainly due to its theoretical superiority demonstrated in Sec.~\ref{sec:theory}, while its strong empirical performances have also been observed in Sec.~\ref{sec:exp}.}
    \end{remark}

    \section{Theoretical Guarantees: Modularized Plug-in of FL Analyses}\label{sec:theory}
    In this section, we theoretically analyze the performance of the FedIGW algorithm, where the impact of the adopted FL choice is modularized as a plug-in component of its optimization error. 

    \subsection{A General Guarantee}\label{subsec:general}
    Denoting $E^l_m := t_m(\tau^{l}) - t_m(\tau^{l-1})$ as the length of epoch $l$ for agent $m$, $E^l_{[M]} : = \{E^l_m: m\in [M]\}$ as the epoch length set, $\underline{c} := \min_{m\in [M], l\in [2, l(T)]} E^l_m/E^{l-1}_m$, $\overline{c} := \max_{m\in [M], l\in [2, l(T)]} E^l_m/E^{l-1}_m$ and $c := \overline{c}/\underline{c}$, the following global regret guarantee can be established.
    \begin{theorem}\label{thm:global_regret}
    Using a learning rate $\gamma^ l = O\left(\sqrt{\sum_{m\in [M]}E^{l-1}_m K_m/(\sum_{m\in [M]}E^{l-1}_m\Ec(E^{l-1}_{[M]}))} \right)$ in epoch $l$, denoting $\bar{K}^l := \sum_{m\in [M]}E^l_m K_m/\sum_{m\in [M]}E^l_m$,
    the regret of FedIGW can be bounded as
        \begin{align}\label{eqn:general_regret}
            \Reg(T) = O\left(\sum_{m\in [M]}E^1_m  + \sum_{l\in [2,l(T)]}c^\frac{5}{2}\sqrt{\bar{K}^l \Ec(E^{l-1}_{[M]})}\sum_{m\in [M]}E^l_m\right).
        \end{align}
        Here $\Ec(E^{l}_{[M]})$ (abbreviated from $\Ec(\Fc; E^{l}_{[M]})$) denotes the excess risk of the output from the adopted $\textup{\texttt{FLroutine}}(\Sc^l_{[M]})$ using the datasets $\Sc^{l}_{[M]}$, whose formal definition is deferred to Definition~\ref{def:error_general}.
    \end{theorem}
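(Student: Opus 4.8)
The plan is to reduce the federated regret to a sum of single-agent, single-epoch IGW regret terms and then to insert the FL excess risk through a change-of-measure between consecutive epochs. The first ingredient I would establish is the standard per-round IGW inequality under realizability: for the fixed estimator $\widehat{f}^l$ and any context $x$, the instantaneous expected regret of sampling $a\sim p^l_m(\cdot|x)$ against the true optimum satisfies
\begin{align*}
\mu_m(x,\pi^*_m(x)) - \sum_{a}p^l_m(a|x)\,\mu_m(x,a) \;\le\; O\!\left(\frac{K_m}{\gamma^l}\right) + O(\gamma^l)\cdot\sum_{a}p^l_m(a|x)\big(\widehat{f}^l(x,a)-f^*(x,a)\big)^2 .
\end{align*}
This is the workhorse estimate: it trades the uniform exploration floor $K_m/\gamma^l$ against the squared prediction error of $\widehat{f}^l$ measured on the policy's own action distribution. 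Because Assumption~\ref{asp:realizable} supplies a single common $f^*$, the inequality applies verbatim to every agent.

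Next I would aggregate. Summing over the $E^l_m$ rounds of agent $m$ in epoch $l$, then over $m\in[M]$, and taking expectation over contexts and history, the exploration terms collect into $(\sum_m E^l_m K_m)/\gamma^l = \bar{K}^l(\sum_m E^l_m)/\gamma^l$, while the estimation terms become $\gamma^l(\sum_m E^l_m)$ times the $E^l_m$-weighted average of the population squared errors $\Eb_{x\sim\Dc_m^{\Xc_m}}\Eb_{a\sim p^l_m}[(\widehat{f}^l-f^*)^2]$. The epoch-$1$ contribution is handled separately: there $\widehat{f}^1\equiv 0$ forces uniform sampling, so its regret is trivially at most $\sum_m E^1_m$, which is exactly the leading term of Eqn.~\eqref{eqn:general_regret}.

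The crux is to replace the weighted squared error above by the FL excess risk $\Ec(E^{l-1}_{[M]})$. By construction $\widehat{f}^l=\texttt{FLroutine}(\Sc^{l-1}_{[M]})$ is trained on data drawn from the epoch-$(l-1)$ policy $p^{l-1}$, so under realizability and the square loss its population squared error measured against $p^{l-1}$ is precisely $\Ec(E^{l-1}_{[M]})$ (this is where Definition~\ref{def:error_general} enters). I would therefore perform a change of measure from $p^l_m$ back to $p^{l-1}_m$. Conditioning on the history generated by epochs $1,\dots,l-1$, both $p^{l-1}_m$ and $p^l_m$ are IGW distributions that assign every arm probability in a controlled band (rewards lie in $[0,1]$, so all estimated gaps are at most $1$), and the deployed rate, which simplifies to $\gamma^l=\sqrt{\bar{K}^{l-1}/\Ec(E^{l-1}_{[M]})}$, differs from $\gamma^{l-1}$ only through the epoch-length ratios. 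Bounding the resulting density ratio, together with the ratio $\bar{K}^l/\bar{K}^{l-1}$ of $E$-weighted arm counts across the two epochs, is what produces the polynomial factor $c^{5/2}$: since every $E^l_m/E^{l-1}_m\in[\underline{c},\overline{c}]$, each such ratio is pinned between $1/c$ and $c$.

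Finally I would optimize and telescope. The choice of $\gamma^l$ in the theorem balances the exploration term $\bar{K}^l/\gamma^l$ against the change-of-measure-inflated estimation term $\gamma^l\,c^{\,O(1)}\Ec(E^{l-1}_{[M]})$, so that each epoch $l\ge 2$ contributes $O\!\big(c^{5/2}\sqrt{\bar{K}^l\,\Ec(E^{l-1}_{[M]})}\,\sum_m E^l_m\big)$; summing over $l\in[2,l(T)]$ and adding the epoch-$1$ term yields Eqn.~\eqref{eqn:general_regret}. I expect the change-of-measure step to be the main obstacle: a naive inverse-propensity bound scales with $\gamma$ and is far too lossy, so the argument must exploit that consecutive IGW policies are genuinely comparable and that the epoch schedule keeps $\gamma^l/\gamma^{l-1}$ bounded, which is precisely the role of the constant $c$. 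The remaining federated bookkeeping — heterogeneous $K_m$, $E^l_m$, and marginals $\Dc_m^{\Xc_m}$ assembling into the weighted objective of Eqn.~\eqref{eqn:FL} — is routine once the single-agent estimate and the change of measure are in place.
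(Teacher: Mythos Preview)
Your per-round IGW inequality is valid, but the change-of-measure step is a genuine gap that your proposal does not close. You need to control the squared error of $\widehat{f}^l$ under the \emph{current} action distribution $p^l_m$ by the excess risk $\Ec(E^{l-1}_{[M]})$, which by definition is the squared error under the \emph{previous} distribution $p^{l-1}_m$. You correctly flag that a naive inverse-propensity bound is too lossy, but your suggested fix --- that $p^{l-1}_m$ and $p^l_m$ are ``genuinely comparable'' because $\gamma^l/\gamma^{l-1}$ is controlled by $c$ --- does not work: the two IGW distributions are built from \emph{different} estimators $\widehat{f}^{l-1}$ and $\widehat{f}^l$, so the empirically best arm can switch between epochs, and the ratio $p^l_m(a|x)/p^{l-1}_m(a|x)$ can be as large as $\Theta(K_m+\gamma^{l-1})$ regardless of how close the two learning rates are. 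Feeding such a factor back into your estimation term produces an $\Theta(K_m)$ per-step contribution, which destroys the bound.

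The paper (following \citet{simchi2022bypassing}) sidesteps a direct change of measure entirely. It works at the policy level: a Cauchy--Schwarz step gives $|\widehat{\Rc}^l_m(\pi_m) - \Rc_m(\pi_m)| \le \sqrt{V_m(p^{l-1}_m,\pi_m)}\sqrt{\Ec^{l-1}_m}$, and then the exact algebraic identity $V_m(p^{l-1}_m,\pi_m) \le K_m + \gamma^{l-1}\widehat{\Reg}^{l-1}_m(\pi_m)$ (which is where the IGW form is essential) is applied. After AM--GM, the dangerous term $\gamma^{l-1}\widehat{\Reg}^{l-1}_m(\pi_m)/\gamma^l$ is controlled not by any density ratio but by an \emph{inductive hypothesis} asserting that the virtual regret $\widehat{\Reg}^{l-1}_m$ and the true regret $\Reg_m$ track each other up to an additive $\eta^{l-1}$. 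The epoch-ratio constants $\underline{c},\overline{c},c$ appear when the inductive hypothesis (stated with $E^{l-1}_m$-weights) is converted to $E^l_m$-weights, and when $\eta^{l-1}$ is compared to $\eta^l$ --- not through any comparison of sampling densities. This inductive coupling of true and virtual regrets across epochs is the key mechanism your proposal is missing.
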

    It can be observed that in Eqn.~\eqref{eqn:general_regret}, the first term bounds the regret in the first epoch. The obtained bounds for the regrets incurred within each later epoch (i.e., the term inside the sum over $l$ in the second epoch) can be interpreted as the epoch length times the expected per-step suboptimality, which then relates to the estimation quality of $\widehat{f}^l$ and thus $\Ec(E^{l-1}_{[M]})$ as $\widehat{f}^l$ is learned with the interaction data collected from epoch $l-1$ as in the design of FedIGW shown in Alg.~\ref{alg:FedIGW_agent}.

    \subsection{Some Concretized Discussions}\label{subsec:concrete}
    Theorem~\ref{thm:global_regret} is notably general in the sense that a corresponding regret can be established as long as an upper bound on the excess risk $\Ec(E^{l-1}_{[M]})$ can be obtained for a certain class of reward functions and the adopted FL protocol. In the following, we provide several more concrete illustrations, and especially, a modularized framework to leverage FL convergence analyses.
    To ease the notation, we discuss synchronous systems with a shared number of arms in the following, i.e., $t_m =t, \forall m\in [M]$, and $K_m = K, \forall m\in [M]$,  while noting similar results can be easily obtained for general systems. With this simplification, we can unify all $E^l_m$ as $E^l$ and $\bar{K}^l$ as $K$. 
    
    To initiate the concretized discussions, we start with considering a finite function class $\Fc$, i.e., $|\Fc| <\infty$, which can be extended to a function class $\Fc$ with a finite covering number of the metric space $(\Fc, l_\infty)$. In particular, the following corollary can be established via establishing $\Ec(n_{[M]})= O(\log(|\Fc|n)/n)$ in the considered case as in Lemma~\ref{lem:finite_error}.
    \begin{corollary}[A Finite Function Class]\label{col:finite}
        If $|\Fc|< \infty$ and the adopted FL protocol provides an exact minimizer for Eqn.~\eqref{eqn:FL} with quadratic losses, with $\tau^l = 2^l$, FedIGW incurs a regret of $\Reg(T) = O(\sqrt{KMT\log(|\Fc|MT)})$ and a total $O(\log(T))$ calls of the adopted FL protocol.
    \end{corollary}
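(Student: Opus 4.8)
The plan is to obtain Corollary~\ref{col:finite} by specializing the general bound in Theorem~\ref{thm:global_regret} to the synchronous doubling schedule and feeding in the excess-risk rate for a finite function class. First I would record the consequences of $\tau^l = 2^l$ in the synchronous, common-arm regime where $E^l_m = E^l$ and $\bar K^l = K$. Since $\tau^l-\tau^{l-1}=2^{l-1}$, the epoch lengths satisfy $E^l = 2E^{l-1}$ exactly, so $\underline c = \overline c = 2$ and hence $c = \overline c/\underline c = 1$, which collapses the $c^{5/2}$ prefactor to a constant. The same doubling gives $2^{l(T)} \asymp T$, so the number of epochs is $l(T) = O(\log T)$; as each epoch triggers exactly one call of $\texttt{FLroutine}(\cdot)$, this immediately yields the claimed $O(\log T)$ FL calls.

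Next I would supply the excess-risk input. Because $|\Fc|<\infty$, the loss is quadratic, and $\texttt{FLroutine}(\cdot)$ returns an exact minimizer of Eqn.~\eqref{eqn:FL}, the rate $\Ec(E^{l-1}_{[M]}) = O(\log(|\Fc| n)/n)$ holds (Lemma~\ref{lem:finite_error}) with $n = \sum_{m\in[M]} E^{l-1}_m = M E^{l-1}$ the total number of samples gathered in epoch $l-1$. Substituting this together with $c=1$, $\bar K^l = K$, and $\sum_{m\in[M]} E^l_m = M E^l = 2 M E^{l-1}$ into the $l$-th summand of Eqn.~\eqref{eqn:general_regret}, the factor $M E^l$ cancels one power of $M E^{l-1}$ against the $1/\sqrt{M E^{l-1}}$ coming from the excess risk, leaving each term as $O\big(\sqrt{K\, M E^{l-1}\,\log(|\Fc| M E^{l-1})}\big)$. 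Using $E^{l-1}\le T$ I would then replace the logarithm by its uniform upper bound $\log(|\Fc| M T)$ and pull the common factor $\sqrt{K\log(|\Fc|MT)}$ out of the sum.

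The final step is the summation $\sum_{l=2}^{l(T)} \sqrt{M E^{l-1}}$. With $E^{l-1}=2^{l-2}$ this is a geometric series in ratio $\sqrt 2$, so it is dominated, up to the constant $1/(\sqrt2-1)$, by its largest term $\sqrt{M\,2^{l(T)-2}} = O(\sqrt{MT})$; this geometric growth is precisely what makes the answer $O(\sqrt{MT})$ rather than $O(l(T)\sqrt{MT})$. Combining, the second term of Eqn.~\eqref{eqn:general_regret} becomes $O(\sqrt{KMT\log(|\Fc|MT)})$, while the first-epoch term $\sum_{m\in[M]} E^1_m = O(M)$ is of lower order, giving $\Reg(T) = O(\sqrt{KMT\log(|\Fc|MT)})$ as claimed.

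The genuinely substantive obstacle is not the arithmetic above but the excess-risk estimate $\Ec(n_{[M]}) = O(\log(|\Fc|n)/n)$ of Lemma~\ref{lem:finite_error}, i.e. bounding the risk of the exact empirical-risk minimizer over a finite class under realizability; this is a uniform-convergence argument (a concentration bound over the $|\Fc|$ candidates, exploiting that the squared loss of the realizable $f^*$ has variance controlled by its mean), which the corollary treats as a black box. Within the corollary proof itself, the only points demanding care are verifying that the geometric sum is governed by its last term and that the per-epoch logarithmic factors admit a uniform bound, both of which hinge on the exact doubling $E^l = 2E^{l-1}$ that forces $c=1$.
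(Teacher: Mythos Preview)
Your proposal is correct and follows essentially the same approach as the paper's proof: both specialize Theorem~\ref{thm:global_regret} with the doubling schedule $\tau^l=2^l$, plug in the excess-risk rate $\Ec(n_{[M]})=O(\log(|\Fc|n)/n)$ from Lemma~\ref{lem:finite_error}, uniformly bound the logarithm by $\log(|\Fc|MT)$, and sum the resulting geometric series in $\sqrt{E^l}$ to obtain $O(\sqrt{KMT\log(|\Fc|MT)})$, with the $O(\log T)$ FL calls following directly from the number of epochs. Your write-up is in fact more explicit about why $c=O(1)$ and why the geometric sum is dominated by its last term, but the underlying argument is the same.
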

     We note that the obtained regret approaches the optimal regret $\Omega(\sqrt{KMT\log(|\Fc|)/\log(K)})$ of a single agent playing for $MT$ rounds \citep{agarwal2012contextual} up to logarithmic factors, which demonstrates the \emph{statistical efficiency} of the proposed FedIGW. Moreover, the total $O(\log(T))$ times call of the FL protocol indicates that only a limited number of agents-server information-sharing are required, which further illustrates its \emph{communication efficiency}.
    
    As the finite function class is not often practically useful, we then focus on the canonical FL setting that each $f\in \Fc$ is parameterized by a $d$-dimensional parameter $\omega \in \Rb^{d}$ as $f_{\omega}$, e.g., a neural network.
    To facilitate discussions, we abbreviate $\Sc: = \Sc_{[M]}$ while denoting $\omega^*_{\Sc} := \argmin_{\omega}\widehat{\Lc}(f_{\omega}; \Sc)$ as the empirical optimal parameter given a fixed dataset $\Sc$ and $\widehat{\omega}_{\Sc}$ as the output of the adopted FL protocol. We further assume $f^*$ is parameterized by the true model parameter $\omega^*$, and for a fixed $\omega$, define $\Lc(f_\omega):= \Eb_{\Sc}[\widehat{\Lc}(f_{\omega}; \Sc)]$ as its expected loss w.r.t. the data distribution. 
    
    Following standard learning-theoretic analyses, the key task excess risk $\Ec(\Fc; n_{[M]})$ can be bounded via a combination of errors stemming from optimization and generalization.
    \begin{lemma}\label{lem:error_decompose}
        If the loss function $l_m(\cdot; \cdot)$ is $\mu_f$-strongly convex in its first coordinate for all $m\in [M]$, it holds that $
            \Ec(\Fc; n_{[M]})\leq  2\left(\varepsilon_{\text{opt}}(\Fc; n_{[M]}) + \varepsilon_{\text{gen}}(\Fc; n_{[M]})\right)/\mu_f$,
        where $\varepsilon_{\text{gen}}(\Fc; n_{[M]}):= \Eb_{\Sc,\xi}[\Lc(f_{\widehat{\omega}_{\Sc}})- \widehat{\Lc}(f_{\widehat{\omega}_{\Sc}}; \Sc)]$ and $\varepsilon_{\text{opt}}(\Fc; n_{[M]}):= \Eb_{\Sc, \xi}[\widehat{\Lc}(f_{\widehat{\omega}_{\Sc}}; \Sc) - \widehat{\Lc}(f_{\omega^*_{\Sc}}; \Sc)]$.
    \end{lemma}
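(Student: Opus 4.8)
The plan is to read off the excess risk $\Ec(\Fc; n_{[M]})$ from Definition~\ref{def:error_general} as the expected squared prediction error $\Eb_{\Sc,\xi}[\|f_{\widehat{\omega}_{\Sc}} - f^*\|_{L^2}^2]$, where $\|g\|_{L^2}^2$ is the expectation of $g^2$ under the relevant data distribution and $\xi$ denotes the internal randomness of the FL protocol producing $\widehat{\omega}_{\Sc}$, and then to control this $L^2$ error by the population excess loss through the assumed strong convexity. The strong convexity hypothesis is precisely what bridges the $L^2$ quantity needed by the IGW analysis and the loss gap that decomposes into optimization and generalization pieces.

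First I would lift the per-sample strong convexity to the population loss. Since each $\ell_m(\cdot; r)$ is $\mu_f$-strongly convex in its first coordinate, writing the strong-convexity inequality at the scalars $f(x,a)$ and $f^*(x,a)$ and taking the weighted expectation over $(m,x,a,r)$ gives
\begin{align*}
\Lc(f) - \Lc(f^*) \geq \Eb\!\left[\ell_m'(f^*(x,a); r)\,(f(x,a) - f^*(x,a))\right] + \frac{\mu_f}{2}\Eb\!\left[(f(x,a) - f^*(x,a))^2\right].
\end{align*}
Under realizability (Assumption~\ref{asp:realizable}), $f^*(x,a) = \mu_m(x,a) = \Eb[r\mid x,a]$ is the unconstrained population minimizer, so the conditional expectation of $\ell_m'(f^*(x,a); r)$ given $(x,a)$ vanishes (immediate for the quadratic loss), killing the cross term. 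Hence $\tfrac{\mu_f}{2}\|f - f^*\|_{L^2}^2 \leq \Lc(f) - \Lc(f^*)$; instantiating $f = f_{\widehat{\omega}_{\Sc}}$ and taking $\Eb_{\Sc,\xi}$ yields $\tfrac{\mu_f}{2}\Ec(\Fc; n_{[M]}) \leq \Eb_{\Sc,\xi}[\Lc(f_{\widehat{\omega}_{\Sc}}) - \Lc(f^*)]$.

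Next I would decompose the population excess loss by adding and subtracting the empirical losses at both the FL output $\widehat{\omega}_{\Sc}$ and the empirical minimizer $\omega^*_{\Sc}$:
\begin{align*}
\Lc(f_{\widehat{\omega}_{\Sc}}) - \Lc(f^*) = \big[\Lc(f_{\widehat{\omega}_{\Sc}}) - \widehat{\Lc}(f_{\widehat{\omega}_{\Sc}}; \Sc)\big] + \big[\widehat{\Lc}(f_{\widehat{\omega}_{\Sc}}; \Sc) - \widehat{\Lc}(f_{\omega^*_{\Sc}}; \Sc)\big] + \big[\widehat{\Lc}(f_{\omega^*_{\Sc}}; \Sc) - \Lc(f^*)\big].
\end{align*}
Taking $\Eb_{\Sc,\xi}$, the first bracket is exactly $\varepsilon_{\text{gen}}(\Fc; n_{[M]})$ and the second is exactly $\varepsilon_{\text{opt}}(\Fc; n_{[M]})$ by their definitions. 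For the third bracket, optimality of $\omega^*_{\Sc}$ gives $\widehat{\Lc}(f_{\omega^*_{\Sc}}; \Sc) \leq \widehat{\Lc}(f^*; \Sc)$ (using that $f^* = f_{\omega^*}$ is feasible), and since $f^*$ is fixed, $\Eb_{\Sc}[\widehat{\Lc}(f^*; \Sc)] = \Lc(f^*)$ by the definition of $\Lc$, so this bracket is non-positive in expectation. Combining with the previous display gives $\Ec(\Fc; n_{[M]}) \leq 2(\varepsilon_{\text{opt}}(\Fc; n_{[M]}) + \varepsilon_{\text{gen}}(\Fc; n_{[M]}))/\mu_f$.

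The main obstacle is the first step: converting strong convexity of the scalar loss in its prediction argument into a quadratic lower bound on the population excess loss in the $L^2$ metric. This hinges on realizability forcing the first-order cross term to vanish, i.e.\ on $f^*$ being the unconstrained population minimizer with vanishing functional gradient, so one must confirm the loss is such that the true conditional mean is its population minimizer (which holds for the quadratic loss used in the concretizations). The remaining steps are a routine add-and-subtract decomposition; the only care needed is that $f_{\omega^*_{\Sc}}$, $f^*$, and $\Lc(f^*)$ do not depend on the FL randomness $\xi$, so the sign argument on the third bracket survives taking the full expectation $\Eb_{\Sc,\xi}$.
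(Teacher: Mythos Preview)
Your proposal is correct and follows essentially the same approach as the paper: first use strong convexity in the prediction argument plus the optimality of $f^*$ to bound the $L^2$ error by the population excess loss, then decompose the latter via the empirical loss at $\widehat{\omega}_{\Sc}$ and $\omega^*_{\Sc}$, using $\widehat{\Lc}(f_{\omega^*_{\Sc}};\Sc)\leq \widehat{\Lc}(f_{\omega^*};\Sc)$ and $\Eb_{\Sc}[\widehat{\Lc}(f_{\omega^*};\Sc)]=\Lc(f_{\omega^*})$. You even correctly flag the hidden assumption that the cross term vanishes (i.e.\ that $f^*$ is the unconstrained population minimizer of the loss), which the paper's complete version in the appendix makes explicit as an additional hypothesis.
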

    For the generalization error term $\varepsilon_{\text{gen}}(\Fc; n_{[M]})$, we can utilize standard results in learning theory (e.g., uniform convergence). For the sake of simplicity, we here leverage a distributional-independent upper bound on the Rademacher complexity, denoted as $\Rf(\Fc; n_{[M]})$ (rigorously defined in Eqn.~\eqref{eqn:rademacher}), which provides that $\varepsilon_{\text{gen}}(\Fc; n_{[M]}) \leq 2\Rf(\Fc; n_{[M]})$ using the classical uniform convergence result (see Lemma~\ref{lem:rademacher}). We do not further particularize this upper bound while noting it can be specified following standard procedures \citep{mohri2018foundations, bartlett2005local}.

    On the other hand, the optimization error term $\varepsilon_{\text{opt}}(\Fc; n_{[M]})$ is exactly the standard convergence error in the analysis of FL protocols. Thus, once any theoretical breakthrough on the convergence of one FL protocol is reported, the obtained result can be immediately incorporated into our analysis framework to characterize the performance of FedIGW using that FL protocol. In particular, the following corollary is established to demonstrate the \emph{modularized plug-in} of analyses of different FL protocols, where FedAvg \citep{mcmahan2017communication} and SCAFFOLD \citep{karimireddy2020scaffold} are adopted as further specific instances. To the best of our knowledge, this is the first time that convergence analyses of FL protocols can directly benefit the analysis of FCB designs.
    \begin{corollary}[Modularized Plug-in of FL Analyses; A Simplified Version of Corollary~\ref{col:convex_raw_full}]\label{col:convex_raw}
    Under the condition of Lemma~\ref{lem:error_decompose}, the regret of FedIGW can be bounded as
        \begin{align*}
            \Reg(T) = O\left(ME^1 + \sum_{l\in [2,l(T)]}\sqrt{K\left(\Rf^{l-1} + \varepsilon_{\text{opt}}^l)\right)/\mu_f}ME^l\right),
        \end{align*}
        where $\Rf^{l}: =\Rf(\Fc; \{E^l: m \in [M]\})$ and using $\rho^l$ rounds of communications (i.e., global aggregations) and $\kappa^l$ rounds of local updates in epoch $l$, under a few other standard conditions,
        \begin{itemize}[leftmargin=*, nolistsep, topsep=0pt]
            \item with \textbf{FedAvg} as the adopted $\texttt{FLroutine}(\cdot)$, it holds that $\varepsilon_{opt}^l \leq \tilde{O}((\rho^l \kappa^l M)^{-1} + (\rho^l)^{-2})$;
            \item with \textbf{SCAFFOLD} as the adopted $\texttt{FLroutine}(\cdot)$, it holds that $\varepsilon_{opt}^l \leq \tilde{O}((\rho^l \kappa^l M)^{-1})$.
        \end{itemize}
    \end{corollary}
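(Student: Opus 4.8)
The plan is to assemble the bound from three already-established pieces — the general regret guarantee of Theorem~\ref{thm:global_regret}, the error decomposition of Lemma~\ref{lem:error_decompose}, and the Rademacher bound on the generalization term — and then to insert off-the-shelf FL convergence rates for the optimization term. No new machinery is needed; the work is in matching the abstract quantities to the cited FL guarantees.

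First I would specialize Theorem~\ref{thm:global_regret} to the synchronous, homogeneous-$K$ regime fixed at the start of Section~\ref{subsec:concrete}. There $K_m = K$ and $E^l_m = E^l$ for all $m$, so $\bar K^l = K$ and $\sum_{m\in[M]} E^l_m = ME^l$; moreover, since the epoch schedule grows geometrically, the consecutive-length ratios $\underline{c},\overline{c}$ are bounded by absolute constants, whence $c^{5/2}=O(1)$ is absorbed into the big-$O$. This reduces Eqn.~\eqref{eqn:general_regret} to $\Reg(T)=O(ME^1 + \sum_{l\in[2,l(T)]}\sqrt{K\,\Ec(E^{l-1}_{[M]})}\,ME^l)$, with the first term handling the initial epoch.

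Next I would bound the excess risk inside the sum. By Lemma~\ref{lem:error_decompose}, the FL call on the epoch-$(l-1)$ datasets satisfies $\Ec(E^{l-1}_{[M]})\le 2(\varepsilon_{\text{opt}}+\varepsilon_{\text{gen}})/\mu_f$, and the classical uniform-convergence result (Lemma~\ref{lem:rademacher}) controls the generalization term by $\varepsilon_{\text{gen}}\le 2\Rf^{l-1}$. Writing the optimization term as $\varepsilon_{\text{opt}}^l$ in the corollary's superscript convention and pulling the constants out of the square root gives $\sqrt{K\,\Ec(E^{l-1}_{[M]})}=O(\sqrt{K(\Rf^{l-1}+\varepsilon_{\text{opt}}^l)/\mu_f})$, which substituted back produces exactly the first displayed regret bound. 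The final step is to identify $\varepsilon_{\text{opt}}^l=\Eb[\widehat{\Lc}(f_{\widehat{\omega}_{\Sc}};\Sc)-\widehat{\Lc}(f_{\omega^*_{\Sc}};\Sc)]$ with the expected function-value suboptimality of the chosen \texttt{FLroutine} on the empirical objective of Eqn.~\eqref{eqn:FL}, and then, under the standard regularity conditions spelled out in Corollary~\ref{col:convex_raw_full}, to invoke the published convergence guarantees for FedAvg and for SCAFFOLD \citep{karimireddy2020scaffold} with $\rho^l$ communication rounds and $\kappa^l$ local steps over $M$ agents, yielding $\tilde{O}((\rho^l\kappa^l M)^{-1}+(\rho^l)^{-2})$ and $\tilde{O}((\rho^l\kappa^l M)^{-1})$ respectively.

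The main obstacle is not the algebra but the interface: verifying that the aggregated empirical risk $\widehat{\Lc}(\cdot;\Sc^l_{[M]})$ genuinely meets the hypotheses of these black-box convergence theorems, i.e. translating smoothness and (strong) convexity of the per-sample loss $\ell_m$ into regularity of the federated objective and controlling the inter-client gradient heterogeneity that governs the $(\rho^l)^{-2}$ term for FedAvg and its cancellation under SCAFFOLD's control variates. Stating these conditions consistently is precisely the content deferred to the full Corollary~\ref{col:convex_raw_full}, whereas the reduction from the general theorem is essentially bookkeeping.
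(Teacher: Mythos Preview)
Your proposal is correct and mirrors the paper's own argument: specialize Theorem~\ref{thm:global_regret} to the synchronous, common-$K$ regime, replace the excess risk via Lemma~\ref{lem:error_decompose} by the sum of optimization and generalization errors, control the latter by Lemma~\ref{lem:rademacher}, and then plug in the cited convergence rates of FedAvg and SCAFFOLD from \citet{karimireddy2020scaffold} (restated in Lemmas~\ref{lem:FedAvg_complete} and~\ref{lem:SCAFFOLD_complete}). One small caveat: absorbing $c^{5/2}$ into the $O(\cdot)$ does not require a geometric epoch schedule per se---in the synchronous case $c=\overline{c}/\underline{c}$ depends only on the ratios $E^l/E^{l-1}$, and the paper simply treats this as a constant throughout Section~\ref{subsec:concrete} without tying it to a specific schedule.
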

    From this corollary, we can see that FedIGW enables a general analysis framework to seamlessly leverage theoretical advances in FL, in particular, convergence analyses. Thus, besides FedAvg and SCAFFOLD, when switching the FL component in FedIGW to FedProx \citep{li2020federated}, FedOPT \citep{reddi2020adaptive}, and other existing or forthcoming FL designs, we can effortlessly plug in their optimization errors to obtain corresponding performance guarantees of FedIGW. This convenience highlights the theoretically intimate relationship between FedIGW and canonical FL studies.
    
   Moreover, Corollary~\ref{col:convex_raw} can also guide how to perform the adopted FL protocol. As the generalization error is an inherent property that cannot be bypassed by better optimization results, there is no need to further proceed with the iterative FL process as long as the optimization error does not dominate the generalization error, which is reflected in a more particularized corollary in Corollary~\ref{col:convex}.

    \begin{remark}[A Linear Reward Function Class]\label{rmk:linear}\normalfont
    As a more specified instance, we consider linear reward functions as in federated linear bandits, i.e., $f_{\omega}(\cdot) = \langle \omega,  \phi(\cdot)\rangle$ and $f^*(\cdot) = \langle \omega^*,  \phi(\cdot)\rangle$, where $\phi(\cdot) \in \Rb^{d}$ is a known feature mapping. In this case, the FL problem can be formulated as a standard ridge regression with $\ell_m(f_\omega(x_m, a_m); r_m): = \left(\langle \omega, \phi(x_m, a_m)\rangle - r_m\right)^2 + \lambda \|\omega\|_2^2$. With a properly chosen regularization parameter $\lambda = O(1/n)$, the generalization error can be bounded as $\varepsilon_{\text{gen}}(n_{[M]})= \tilde{O}(d/n)$ \citep{hsu2012random}, while a same-order optimization error can be achieved by many efficient distributed algorithms \citep{nesterov2003introductory} with roughly $O(\sqrt{n}\log(n/d))$ rounds of communications. Then, with an exponentially growing epoch length, FedIGW can have a regret of $\tilde{O}(\sqrt{dMKT})$ with at most $\tilde{O}(\sqrt{MT})$ rounds of communications as illustrated in Appendix~\ref{subapp:linear}, both of which are efficient with sublinear dependencies on the number of agents $M$ and time horizon $T$. It is worth noting that during this process, no raw or compressed data is communicated -- only processed model parameters (e.g., gradients) are exchanged. This aligns with FL studies while is distinctive from previous designs for federated linear bandits \citep{dubey2020differentially,li2022asynchronous,he2022simple,fan2023federated}, which often communicate covariance matrices or aggregated rewards.
    \end{remark} 

    {
    \begin{remark}[Beyond Linear Reward Functions]\label{rmk:gen_linear}\normalfont 
    This modularized framework can be further adopted in analyzing other reward functions as long as the corresponding excess risks can be provided. For example, the optimization errors of FedAvg (and its variants) with neural networks as the function class can be obtained from many recent works, including Theorem 4.1 in \citet{huang2021fl} and Theorem 1 in \citet{song2023fedavg}. The corresponding generalization error can also be established following existing results, e.g., Theorem 4.3 in \citet{huang2021fl} and Chapter 11 in \citet{zhang_2023}. Combining these two parts of analyses can lead to bounds on regret and communication rounds that are sublinear in $M$ and $T$ using the analysis framework.
    \end{remark}
    }

    \begin{figure}[thb]
        \centering
        \subfigure{
        \includegraphics[width=0.48\linewidth]{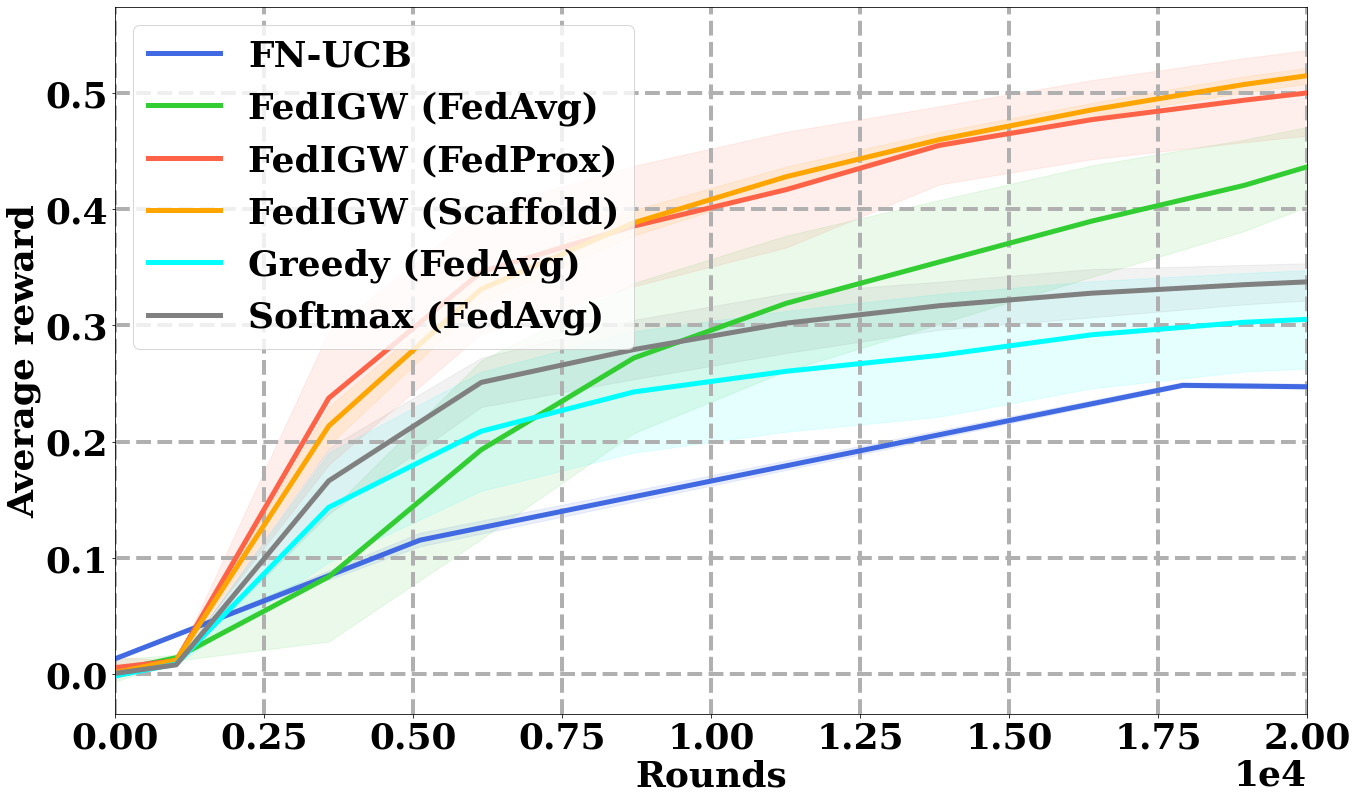}}
        \subfigure{
        \includegraphics[width=0.48\linewidth]{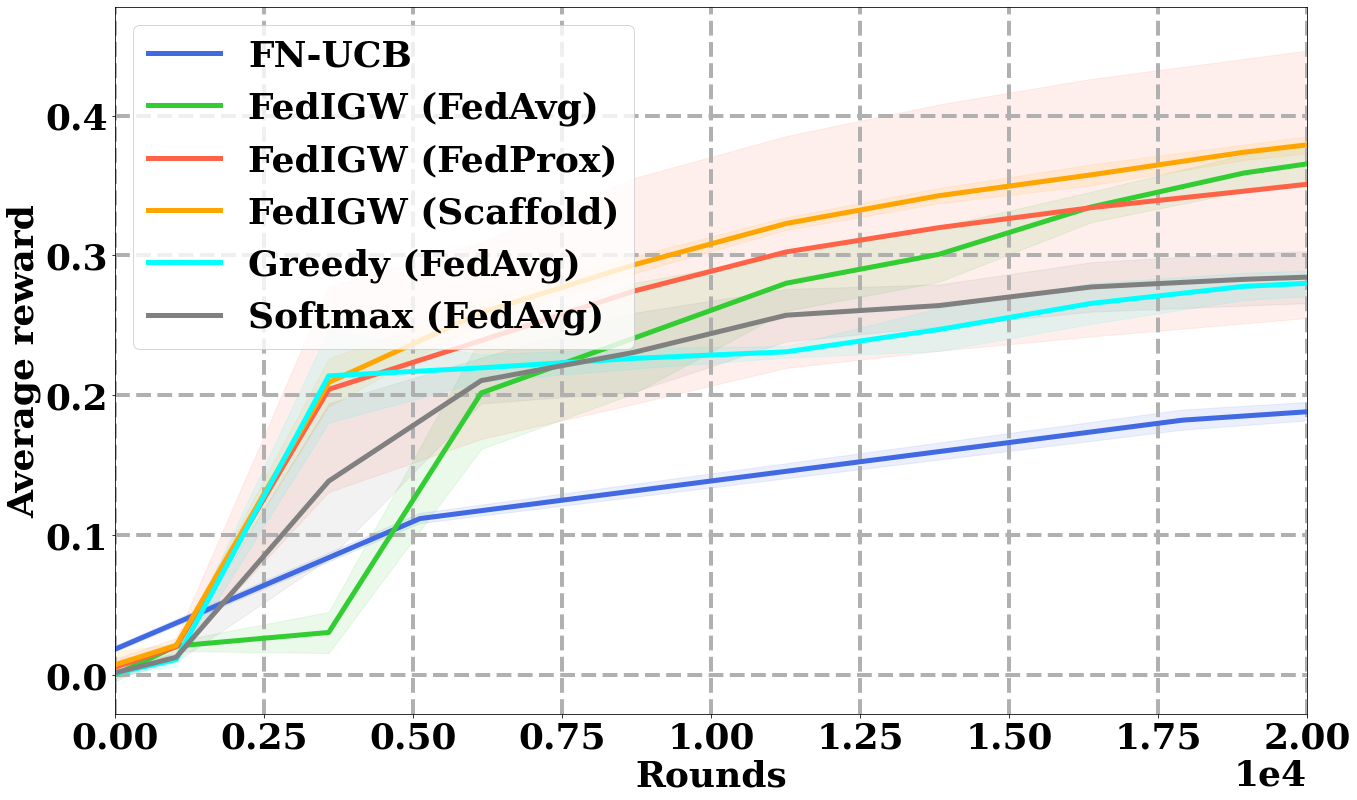}}
        \caption{\centering The averaged reward collected by each agent via FedIGW (using different FL protocols), the state-of-the-art FN-UCB, {and two other naive baselines (i.e., greedy and softmax using FedAvg)} with $M = 10$ participating agents on Bibtex (left) and Delicious (right) datasets.}
        \label{fig:exp_main}
        \vspace{-0.2in}
    \end{figure}
    
    \section{Experimental Results} \label{sec:exp} 
    In this section, we report the empirical performances of FedIGW on two distinct real-world multi-label classification datasets, Bibtex \citep{katakis2008multilabel} and Delicious \citep{tsoumakas2008effective}, which are also used in other practical CB investigations such as \cite{cortes2018adapting}. The aim of CB in these experiments is considered to be recommending one of the correct labels at any given time. Especially, in the experiments, at each time step, a context is randomly sampled from the dataset while the true labels are concealed from the agents. The agents then determine which label to select (i.e., pull one arm) with their CB algorithms; thus, the number of arms is the number of possible labels in each dataset. Upon pulling one arm, a reward of $1$ is granted if the pulled arm corresponds to one of the true labels, while a reward of $0$ is granted otherwise. From Table~\ref{tab:tasks}, we can observe that these tasks are challenging given their high-dimensional contexts ($>500$) and large numbers of arms ($>150$). Additional experimental details and results are discussed in Appendix~\ref{app:exp}, while the codes for the experiments can be found at \url{https://github.com/ShenGroup/FedIGW}.

    \begin{wraptable}{r}{0.5\textwidth}
        \vspace{-0.2in}
        \centering
        \caption{\centering The context dimension and number of arms in Bibtex and Delicious}
        \begin{tabular}{c|c|c}
        \hline
          Task   &  Context dimension & Number of arms  \\ \hline
            Bibtex  & 1835 & 159 \\
            Delicious & 500 & 983 \\
        \hline
        \end{tabular}
        \label{tab:tasks}
    \end{wraptable}

    \textbf{Varying FL choices.} The reported Fig.~\ref{fig:exp_main} first compares the averaged rewards collected by each agent with FedIGW using different FL choices, including FedAvg \citep{mcmahan2017communication}, SCAFFOLD \citep{karimireddy2020scaffold}, and FedProx \citep{li2020federated}. This is the first time, to the best of our knowledge, that FedAvg is practically integrated with FCB experiments, let alone other FL protocols, which largely demonstrate the generality and flexibility of FedIGW. It can be observed that using the more developed SCAFFOLD and FedProx provides improved performance (i.e., collects more rewards) compared with the basic FedAvg, 
    which credits to that FedIGW can flexibly leverage algorithmic advances in FL protocols.

    {\textbf{Comparison with baselines.} To further evaluate the performance of FedIGW, experiments are conducted to compare it with
    several baselines as described in the following. 
    \begin{itemize}[noitemsep,topsep=0pt,leftmargin = *]
        \item \textbf{FN-UCB \citep{dai2023federated}.} The federated neural-upper confidence bound (FN-UCB) design proposed in \cite{dai2023federated} is adopted as a strong FCB baseline due to its capability of leveraging neural networks to approximate rewards and the previously reported good performance. Instead of being compatible with canonical FL protocols, FN-UCB requires a specifically developed communication design, where local neural tangent features are transmitted to the server for global aggregation in a one-shot fashion.
        \item \textbf{Greedy and softmax.} Besides IGW, two other regression-based CB algorithms, greedy selection and softmax selection, are also adopted for empirical validations using FedAvg to collaboratively learn the reward function. In particular, the action is selected as $a_{m,t_m} \gets \argmax_{a_m\in \Ac_m} \widehat{f}^{l}(a_m, x_{m,t_m})$ for greedy and $a_{m,t_m}\sim \text{softmax}(\widehat{f}^{l}(\cdot, x_{m,t_m})/\zeta)$ for softmax, where $\zeta$ is a tempurate parameter.
    \end{itemize}
    In Fig.~\ref{fig:exp_main}, all methods leverage the same-size MLPs to approximate reward functions for fair comparisons. It can be observed that after convergence, FedIGW (even with the basic FedAvg) significantly outperforms FN-UCB with about twice the rewards collected by each agent on average, demonstrating its remarkable superiority. Also, under the FL protocol (i.e., FedAvg), FedIGW exhibits much stronger performance than greedy and softmax, further illustrating the advantage of using IGW as the CB algorithm. 
    }

    \section{Flexible Extensions: Seamless Integration of FL Appendages}\label{sec:appendage}
    Another notable advantage offered by the flexible FL choices is to bring appealing appendages from FL studies to directly benefit FCB. In the following, we discuss how to leverage techniques of personalization, robustness, and privacy from FL in FedIGW while presenting intriguing avenues for future exploration.
    
    \subsection{Personalized Learning}\label{subsec:per}
    In many cases, each agent's true reward function is not globally realizable as in Assumption~\ref{asp:realizable}, but instead only locally realizable in her own function class as in the following assumption.
    \begin{assumption}[Local Realizability]\label{asp:local}
    For each $m\in [M]$, there exists $f^*_m$ in $\Fc_m$ such that $f^*_m(x_m, a_m) =  \mu_m(x_m, a_m)$ for all $x_m \in \Xc_m$ and $a_m \in \Ac_m$
    \end{assumption}
    Following discussions in Sec.~\ref{subsec:concrete}, we consider that each function $f$ in $\Fc_m$ is parameterized by a $d_m$-dimensional parameter $\omega_m \in \Rb^{d_m}$, which is denoted as $f_{\omega_m}$. Correspondingly, the true reward function $f^*_m$ is parameterized by $\omega^*_m$ and denoted as $f_{\omega^*_m}$.
    To still motivate the collaboration and motivated by popular personalized FL studies \citep{hanzely2021personalized, agarwal2020federated}, we study a middle case where only partial parameters are globally shared among $\{f_{\omega_m^*}: m\in [M]\}$ while other parameters are potentially heterogeneous among agents, which can be formulated via the following assumption. 
    \begin{assumption}\label{asp:per}
        For all $m\in [M]$, the true parameter $\omega^*_{m}$ can be decomposed as $[\omega^{\alpha,*}, \omega^{\beta,*}_m]$ with $\omega^{\alpha,*}\in \Rb^{d^\alpha}$ and $\omega^{\beta,*}_m \in \Rb^{d^{\beta}_m}$, where $d^{\alpha} \leq \min_{m\in [M]} d_m$ and $d^{\beta}_m := d_m - d^{\alpha}$. In other words, there are $d^\alpha$-dimensional globally shared parameters among $\{\omega^*_m:m\in [M]\}$.
    \end{assumption}
    A similar setting is studied in \cite{li2022asynchronous} for linear reward functions and in \cite{agarwal2020federated} for realizable cases with a naive $\varepsilon$-greedy design for CB. For FedIGW, we can directly adopt a personalized FL protocol (such as LSGD-PFL in \citet{hanzely2021personalized}) to solve a standard personalized FL problem:
    \begin{equation*}
        \min_{\omega^\alpha, \omega^\beta_{[M]}} \widehat{\Lc}(f_{\omega^\alpha, \omega^\beta_{[M]}}; \Sc_{[M]}) :=\sum_{m\in [M]} (n_m/n) \cdot \widehat{\Lc}_m(f_{\omega^\alpha, \omega^\beta_m}; \Sc_m).
    \end{equation*}
    With outputs $\widehat{\omega}^\alpha$ and $\widehat{\omega}^\beta_{[M]}$, the corresponding $M$ functions $\{f_{\widehat{\omega}^\alpha, \widehat{\omega}^\beta_{m}}:m\in [M]\}$ (instead of the single one $\widehat{f}$ in Sec.~\ref{subsec:alg}) can be used by the $M$ agents, separately, for their CB interactions following the IGW algorithm. Concrete results and more details can be found in Appendix~\ref{app:per}. 

    \begin{remark}[A Linear Reward Function Class]\normalfont
    Similar to Remark~\ref{rmk:linear}, we also consider linear reward functions for the personalized setting with $f^*_m(\cdot) := \langle \omega^*_m, \phi(\cdot) \rangle$ and $\{\omega^*_m: m\in [M]\}$ satisfying Assumption~\ref{asp:per}. Then, FedIGW still can achieve a regret of $\tilde{O}(\sqrt{\tilde{d}MKT})$ with $\tilde{O}(\sqrt{MT})$ rounds of communications, where $\tilde{d}:= d^{\alpha}+ \sum_{m\in [M]}d^\beta_m$; see more details in Appendix~\ref{subapp:per_linear}.
        
    \end{remark}
    
    \subsection{Robustness, Privacy, and Beyond} \label{subsec:beyond}
    Another important direction in FCB studies is to improve robustness against malicious attacks and provide privacy guarantees for local agents. A few progresses have been achieved in attaining these desirable attributes for FCB but they typically require substantial modifications to their base FCB designs, such as robustness in \citet{demirel2022federated, jadbabaie2022byzantine, mitra2022collaborative} and privacy guarantees in \citet{dubey2020differentially,zhou2023differentially,li2022privacy,huang2023federated}. 
    
    With FedIGW, it is more convenient to achieve these attributes as suitable techniques from FL studies can be seamlessly applied. Especially, robustness and privacy protection have been extensively studied for FL in  \citet{yin2018byzantine, pillutla2022robust, fu2019attack} and  \citet{wei2020federated, yin2021comprehensive, liu2022privacy}, respectively, among other works. As long as such FL protocols can provide an estimated function (which is the default goal of FL), they can be adopted in FedIGW to achieve additional robustness and privacy guarantees in FCB; see more details in Appendix~\ref{app:beyond}.
    
    \textbf{Other Possibilities.} There have been many studies on fairness guarantees \citep{mohri2019agnostic, du2021fairness}, client selections \citep{balakrishnan2022diverse,fraboni2021clustered}, and practical communication designs \citep{chen2021distributed, wei2022federated, zheng2020design} in FL among many other directions, which are all conceivably applicable in FedIGW. In addition, \citet{marfoq2023federated} studies FL with data streams, i.e., data comes sequentially instead of being static, which is a suitable design for FCB as CB essentially provides data streams. If similar ideas can be leveraged in FCB, the two components of CB and FL can truly be parallel. 
    
    \section{Conclusions}
    In this work, we studied the problem of federated contextual bandits (FCB). It is first recognized that existing FCB designs are largely disconnected from canonical FL studies in their adopted FL protocols, which hinders the integration of crucial FL advancements. To bridge this gap, we introduced a novel design, FedIGW, capable of accommodating a wide range of FL protocols, provided they address a standard FL problem. A comprehensive theoretical performance guarantee was provided for FedIGW, highlighting its efficiency and versatility. Notably, we demonstrated the modularized incorporation of convergence analysis from FL by employing examples of the renowned FedAvg \citep{mcmahan2017communication} and SCAFFOLD \citep{karimireddy2020scaffold}. Empirical validations on real-world datasets further underscored its practicality and flexibility. Moreover, we explored how advancements in FL can seamlessly bestow additional desirable attributes upon FedIGW. Specifically, we delved into the incorporation of personalization, robustness, and privacy, presenting intriguing opportunities for future research.

    It would be valuable to pursue further exploration of alternative CB algorithms within FCB, e.g., \cite{xu2020upper, foster2020instance, wei2021non}, and investigate whether the FedIGW design can be extended to more general federated RL \citep{dubey2021provably,min2023multi}.

    \section*{Acknowledgement}
    The work of CSs and KY was partially supported by the U.S. National Science Foundation (NSF) under awards ECCS-2033671, ECCS-2143559, CPS-2313110, and CNS-2002902, and the Bloomberg Data Science Ph.D. Fellowship.

\bibliography{ref}

\begin{thebibliography}{105}
\providecommand{\natexlab}[1]{#1}
\providecommand{\url}[1]{\texttt{#1}}
\expandafter\ifx\csname urlstyle\endcsname\relax
  \providecommand{\doi}[1]{doi: #1}\else
  \providecommand{\doi}{doi: \begingroup \urlstyle{rm}\Url}\fi

\bibitem[Abbasi-Yadkori et~al.(2011)Abbasi-Yadkori, P{\'a}l, and Szepesv{\'a}ri]{abbasi2011improved}
Yasin Abbasi-Yadkori, D{\'a}vid P{\'a}l, and Csaba Szepesv{\'a}ri.
\newblock Improved algorithms for linear stochastic bandits.
\newblock \emph{Advances in neural information processing systems}, 24, 2011.

\bibitem[Abe \& Long(1999)Abe and Long]{abe1999associative}
Naoki Abe and Philip~M Long.
\newblock Associative reinforcement learning using linear probabilistic concepts.
\newblock In \emph{ICML}, pp.\  3--11. Citeseer, 1999.

\bibitem[Agarwal et~al.(2012)Agarwal, Dud{\'\i}k, Kale, Langford, and Schapire]{agarwal2012contextual}
Alekh Agarwal, Miroslav Dud{\'\i}k, Satyen Kale, John Langford, and Robert Schapire.
\newblock Contextual bandit learning with predictable rewards.
\newblock In \emph{Artificial Intelligence and Statistics}, pp.\  19--26. PMLR, 2012.

\bibitem[Agarwal et~al.(2020)Agarwal, Langford, and Wei]{agarwal2020federated}
Alekh Agarwal, John Langford, and Chen-Yu Wei.
\newblock Federated residual learning.
\newblock \emph{arXiv preprint arXiv:2003.12880}, 2020.

\bibitem[Agarwal et~al.(2023)Agarwal, McMahan, and Xu]{agarwal2023empirical}
Alekh Agarwal, H~Brendan McMahan, and Zheng Xu.
\newblock An empirical evaluation of federated contextual bandit algorithms.
\newblock \emph{arXiv preprint arXiv:2303.10218}, 2023.

\bibitem[Amani et~al.(2022)Amani, Lattimore, Gy{\"o}rgy, and Yang]{amani2022distributed}
Sanae Amani, Tor Lattimore, Andr{\'a}s Gy{\"o}rgy, and Lin~F Yang.
\newblock Distributed contextual linear bandits with minimax optimal communication cost.
\newblock \emph{arXiv preprint arXiv:2205.13170}, 2022.

\bibitem[Auer et~al.(2002)Auer, Cesa-Bianchi, Freund, and Schapire]{auer2002nonstochastic}
Peter Auer, Nicolo Cesa-Bianchi, Yoav Freund, and Robert~E Schapire.
\newblock The nonstochastic multiarmed bandit problem.
\newblock \emph{SIAM journal on computing}, 32\penalty0 (1):\penalty0 48--77, 2002.

\bibitem[Balakrishnan et~al.(2022)Balakrishnan, Li, Zhou, Himayat, Smith, and Bilmes]{balakrishnan2022diverse}
Ravikumar Balakrishnan, Tian Li, Tianyi Zhou, Nageen Himayat, Virginia Smith, and Jeff Bilmes.
\newblock Diverse client selection for federated learning via submodular maximization.
\newblock In \emph{International Conference on Learning Representations}, 2022.

\bibitem[Bartlett et~al.(2005)Bartlett, Bousquet, and Mendelson]{bartlett2005local}
Peter Bartlett, Olivier Bousquet, and Shahar Mendelson.
\newblock Local rademacher complexities.
\newblock \emph{Annals of Statistics}, 33\penalty0 (4):\penalty0 1497--1537, 2005.

\bibitem[Boursier \& Perchet(2019)Boursier and Perchet]{boursier2019sic}
Etienne Boursier and Vianney Perchet.
\newblock Sic-mmab: synchronisation involves communication in multiplayer multi-armed bandits.
\newblock In \emph{Advances in Neural Information Processing Systems}, pp.\  12071--12080, 2019.

\bibitem[Chakrabarti et~al.(2008)Chakrabarti, Kumar, Radlinski, and Upfal]{chakrabarti2008mortal}
Deepayan Chakrabarti, Ravi Kumar, Filip Radlinski, and Eli Upfal.
\newblock Mortal multi-armed bandits.
\newblock \emph{Advances in neural information processing systems}, 21, 2008.

\bibitem[Chan et~al.(2021)Chan, Pacchiano, Tripuraneni, Song, Bartlett, and Jordan]{chan2021parallelizing}
Jeffrey Chan, Aldo Pacchiano, Nilesh Tripuraneni, Yun~S Song, Peter Bartlett, and Michael~I Jordan.
\newblock Parallelizing contextual bandits.
\newblock \emph{arXiv preprint arXiv:2105.10590}, 2021.

\bibitem[Chen et~al.(2021)Chen, G{\"u}nd{\"u}z, Huang, Saad, Bennis, Feljan, and Poor]{chen2021distributed}
Mingzhe Chen, Deniz G{\"u}nd{\"u}z, Kaibin Huang, Walid Saad, Mehdi Bennis, Aneta~Vulgarakis Feljan, and H~Vincent Poor.
\newblock Distributed learning in wireless networks: Recent progress and future challenges.
\newblock \emph{IEEE Journal on Selected Areas in Communications}, 39\penalty0 (12):\penalty0 3579--3605, 2021.

\bibitem[Chen et~al.(2022)Chen, Karthik, Tan, and Chee]{chen2022federated}
Zhirui Chen, PN~Karthik, Vincent~YF Tan, and Yeow~Meng Chee.
\newblock Federated best arm identification with heterogeneous clients.
\newblock \emph{arXiv preprint arXiv:2210.07780}, 2022.

\bibitem[Chou et~al.(2020)Chou, Sandhu, Wang, and Yu]{chou2020general}
Chi-Ning Chou, Juspreet~Singh Sandhu, Mien~Brabeeba Wang, and Tiancheng Yu.
\newblock A general framework for analyzing stochastic dynamics in learning algorithms.
\newblock \emph{arXiv preprint arXiv:2006.06171}, 2020.

\bibitem[Cisneros-Velarde et~al.(2023)Cisneros-Velarde, Lyu, Koyejo, and Kolar]{cisneros2023one}
Pedro Cisneros-Velarde, Boxiang Lyu, Sanmi Koyejo, and Mladen Kolar.
\newblock One policy is enough: Parallel exploration with a single policy is near-optimal for reward-free reinforcement learning.
\newblock In \emph{International Conference on Artificial Intelligence and Statistics}, pp.\  1965--2001. PMLR, 2023.

\bibitem[Cortes(2018)]{cortes2018adapting}
David Cortes.
\newblock Adapting multi-armed bandits policies to contextual bandits scenarios.
\newblock \emph{arXiv preprint arXiv:1811.04383}, 2018.

\bibitem[Dai et~al.(2023)Dai, Shu, Verma, Fan, Low, and Jaillet]{dai2023federated}
Zhongxiang Dai, Yao Shu, Arun Verma, Flint~Xiaofeng Fan, Bryan Kian~Hsiang Low, and Patrick Jaillet.
\newblock Federated neural bandit.
\newblock \emph{The Eleventh International Conference on Learning Representations}, 2023.

\bibitem[Demirel et~al.(2022)Demirel, Yildirim, and Tekin]{demirel2022federated}
Ilker Demirel, Yigit Yildirim, and Cem Tekin.
\newblock Federated multi-armed bandits under byzantine attacks.
\newblock \emph{arXiv preprint arXiv:2205.04134}, 2022.

\bibitem[Du et~al.(2021)Du, Xu, Wu, and Tong]{du2021fairness}
Wei Du, Depeng Xu, Xintao Wu, and Hanghang Tong.
\newblock Fairness-aware agnostic federated learning.
\newblock In \emph{Proceedings of the 2021 SIAM International Conference on Data Mining (SDM)}, pp.\  181--189. SIAM, 2021.

\bibitem[Dubey \& Pentland(2020)Dubey and Pentland]{dubey2020differentially}
Abhimanyu Dubey and Alex Pentland.
\newblock Differentially-private federated linear bandits.
\newblock \emph{Advances in Neural Information Processing Systems}, 33:\penalty0 6003--6014, 2020.

\bibitem[Dubey \& Pentland(2021)Dubey and Pentland]{dubey2021provably}
Abhimanyu Dubey and Alex Pentland.
\newblock Provably efficient cooperative multi-agent reinforcement learning with function approximation.
\newblock \emph{arXiv preprint arXiv:2103.04972}, 2021.

\bibitem[Fan et~al.(2023)Fan, Zhou, Tian, and Shen]{fan2023federated}
Li~Fan, Ruida Zhou, Chao Tian, and Cong Shen.
\newblock Federated linear bandits with finite adversarial actions.
\newblock \emph{arXiv preprint arXiv:2311.00973}, 2023.

\bibitem[Fan et~al.(2021)Fan, Ma, Dai, Jing, Tan, and Low]{fan2021fault}
Xiaofeng Fan, Yining Ma, Zhongxiang Dai, Wei Jing, Cheston Tan, and Bryan Kian~Hsiang Low.
\newblock Fault-tolerant federated reinforcement learning with theoretical guarantee.
\newblock \emph{Advances in Neural Information Processing Systems}, 34:\penalty0 1007--1021, 2021.

\bibitem[Foster \& Rakhlin(2020)Foster and Rakhlin]{foster2020beyond}
Dylan Foster and Alexander Rakhlin.
\newblock Beyond ucb: Optimal and efficient contextual bandits with regression oracles.
\newblock In \emph{International Conference on Machine Learning}, pp.\  3199--3210. PMLR, 2020.

\bibitem[Foster et~al.(2020)Foster, Rakhlin, Simchi-Levi, and Xu]{foster2020instance}
Dylan~J Foster, Alexander Rakhlin, David Simchi-Levi, and Yunzong Xu.
\newblock Instance-dependent complexity of contextual bandits and reinforcement learning: A disagreement-based perspective.
\newblock \emph{arXiv preprint arXiv:2010.03104}, 2020.

\bibitem[Fraboni et~al.(2021)Fraboni, Vidal, Kameni, and Lorenzi]{fraboni2021clustered}
Yann Fraboni, Richard Vidal, Laetitia Kameni, and Marco Lorenzi.
\newblock Clustered sampling: Low-variance and improved representativity for clients selection in federated learning.
\newblock In \emph{International Conference on Machine Learning}, pp.\  3407--3416. PMLR, 2021.

\bibitem[Fu et~al.(2019)Fu, Xie, Li, and Chen]{fu2019attack}
Shuhao Fu, Chulin Xie, Bo~Li, and Qifeng Chen.
\newblock Attack-resistant federated learning with residual-based reweighting.
\newblock \emph{arXiv preprint arXiv:1912.11464}, 2019.

\bibitem[Ghosh et~al.(2021)Ghosh, Sankararaman, and Ramchandran]{ghosh2021model}
Avishek Ghosh, Abishek Sankararaman, and Kannan Ramchandran.
\newblock Model selection for generic contextual bandits.
\newblock \emph{arXiv preprint arXiv:2107.03455}, 2021.

\bibitem[Girgis et~al.(2021)Girgis, Data, Diggavi, Kairouz, and Suresh]{girgis2021shuffled}
Antonious Girgis, Deepesh Data, Suhas Diggavi, Peter Kairouz, and Ananda~Theertha Suresh.
\newblock Shuffled model of differential privacy in federated learning.
\newblock In \emph{International Conference on Artificial Intelligence and Statistics}, pp.\  2521--2529. PMLR, 2021.

\bibitem[Han et~al.(2020)Han, Zhou, Zhou, Blanchet, Glynn, and Ye]{han2020sequential}
Yanjun Han, Zhengqing Zhou, Zhengyuan Zhou, Jose Blanchet, Peter~W Glynn, and Yinyu Ye.
\newblock Sequential batch learning in finite-action linear contextual bandits.
\newblock \emph{arXiv preprint arXiv:2004.06321}, 2020.

\bibitem[Hanzely et~al.(2021)Hanzely, Zhao, and Kolar]{hanzely2021personalized}
Filip Hanzely, Boxin Zhao, and Mladen Kolar.
\newblock Personalized federated learning: A unified framework and universal optimization techniques.
\newblock \emph{arXiv preprint arXiv:2102.09743}, 2021.

\bibitem[He et~al.(2022)He, Wang, Min, and Gu]{he2022simple}
Jiafan He, Tianhao Wang, Yifei Min, and Quanquan Gu.
\newblock A simple and provably efficient algorithm for asynchronous federated contextual linear bandits.
\newblock \emph{Advances in neural information processing systems}, 2022.

\bibitem[Hillel et~al.(2013)Hillel, Karnin, Koren, Lempel, and Somekh]{hillel2013distributed}
Eshcar Hillel, Zohar~S Karnin, Tomer Koren, Ronny Lempel, and Oren Somekh.
\newblock Distributed exploration in multi-armed bandits.
\newblock \emph{Advances in Neural Information Processing Systems}, 26, 2013.

\bibitem[Hsu et~al.(2012)Hsu, Kakade, and Zhang]{hsu2012random}
Daniel Hsu, Sham~M Kakade, and Tong Zhang.
\newblock Random design analysis of ridge regression.
\newblock In \emph{Conference on learning theory}, pp.\  9--1. JMLR Workshop and Conference Proceedings, 2012.

\bibitem[Huang et~al.(2021{\natexlab{a}})Huang, Li, Song, and Yang]{huang2021fl}
Baihe Huang, Xiaoxiao Li, Zhao Song, and Xin Yang.
\newblock Fl-ntk: A neural tangent kernel-based framework for federated learning analysis.
\newblock In \emph{International Conference on Machine Learning}, pp.\  4423--4434. PMLR, 2021{\natexlab{a}}.

\bibitem[Huang et~al.(2021{\natexlab{b}})Huang, Wu, Yang, and Shen]{huang2021federated}
Ruiquan Huang, Weiqiang Wu, Jing Yang, and Cong Shen.
\newblock Federated linear contextual bandits.
\newblock \emph{Advances in neural information processing systems}, 34:\penalty0 27057--27068, 2021{\natexlab{b}}.

\bibitem[Huang et~al.(2023)Huang, Zhang, Melis, Shen, Hejazinia, and Yang]{huang2023federated}
Ruiquan Huang, Huanyu Zhang, Luca Melis, Milan Shen, Meisam Hejazinia, and Jing Yang.
\newblock Federated linear contextual bandits with user-level differential privacy.
\newblock In \emph{International Conference on Machine Learning}, pp.\  14060--14095. PMLR, 2023.

\bibitem[Jadbabaie et~al.(2022)Jadbabaie, Li, Qian, and Tian]{jadbabaie2022byzantine}
Ali Jadbabaie, Haochuan Li, Jian Qian, and Yi~Tian.
\newblock Byzantine-robust federated linear bandits.
\newblock In \emph{2022 IEEE 61st Conference on Decision and Control (CDC)}, pp.\  5206--5213. IEEE, 2022.

\bibitem[Jin et~al.(2022)Jin, Peng, Yang, Wang, and Zhang]{jin2022federated}
Hao Jin, Yang Peng, Wenhao Yang, Shusen Wang, and Zhihua Zhang.
\newblock Federated reinforcement learning with environment heterogeneity.
\newblock In \emph{International Conference on Artificial Intelligence and Statistics}, pp.\  18--37. PMLR, 2022.

\bibitem[Kairouz et~al.(2021)Kairouz, McMahan, Avent, Bellet, Bennis, Bhagoji, Bonawitz, Charles, Cormode, Cummings, et~al.]{kairouz2021advances}
Peter Kairouz, H~Brendan McMahan, Brendan Avent, Aur{\'e}lien Bellet, Mehdi Bennis, Arjun~Nitin Bhagoji, Kallista Bonawitz, Zachary Charles, Graham Cormode, Rachel Cummings, et~al.
\newblock Advances and open problems in federated learning.
\newblock \emph{Foundations and Trends{\textregistered} in Machine Learning}, 14\penalty0 (1--2):\penalty0 1--210, 2021.

\bibitem[Karbasi et~al.(2021)Karbasi, Mirrokni, and Shadravan]{karbasi2021parallelizing}
Amin Karbasi, Vahab Mirrokni, and Mohammad Shadravan.
\newblock Parallelizing thompson sampling.
\newblock \emph{Advances in Neural Information Processing Systems}, 34:\penalty0 10535--10548, 2021.

\bibitem[Karimireddy et~al.(2020)Karimireddy, Kale, Mohri, Reddi, Stich, and Suresh]{karimireddy2020scaffold}
Sai~Praneeth Karimireddy, Satyen Kale, Mehryar Mohri, Sashank Reddi, Sebastian Stich, and Ananda~Theertha Suresh.
\newblock Scaffold: Stochastic controlled averaging for federated learning.
\newblock In \emph{International Conference on Machine Learning}, pp.\  5132--5143. PMLR, 2020.

\bibitem[Katakis et~al.(2008)Katakis, Tsoumakas, and Vlahavas]{katakis2008multilabel}
Ioannis Katakis, Grigorios Tsoumakas, and Ioannis Vlahavas.
\newblock Multilabel text classification for automated tag suggestion.
\newblock \emph{ECML PKDD discovery challenge}, 75:\penalty0 2008, 2008.

\bibitem[Kone{\v{c}}n{\`y} et~al.(2016)Kone{\v{c}}n{\`y}, McMahan, Ramage, and Richt{\'a}rik]{konevcny2016federated}
Jakub Kone{\v{c}}n{\`y}, H~Brendan McMahan, Daniel Ramage, and Peter Richt{\'a}rik.
\newblock Federated optimization: Distributed machine learning for on-device intelligence.
\newblock \emph{arXiv preprint arXiv:1610.02527}, 2016.

\bibitem[Krishnamurthy et~al.(2021)Krishnamurthy, Hadad, and Athey]{krishnamurthy2021adapting}
Sanath~Kumar Krishnamurthy, Vitor Hadad, and Susan Athey.
\newblock Adapting to misspecification in contextual bandits with offline regression oracles.
\newblock In \emph{International Conference on Machine Learning}, pp.\  5805--5814. PMLR, 2021.

\bibitem[Landgren et~al.(2016)Landgren, Srivastava, and Leonard]{landgren2016distributed}
Peter Landgren, Vaibhav Srivastava, and Naomi~Ehrich Leonard.
\newblock On distributed cooperative decision-making in multiarmed bandits.
\newblock In \emph{2016 European Control Conference (ECC)}, pp.\  243--248. IEEE, 2016.

\bibitem[Lattimore \& Szepesv{\'a}ri(2020)Lattimore and Szepesv{\'a}ri]{lattimore2020bandit}
Tor Lattimore and Csaba Szepesv{\'a}ri.
\newblock \emph{Bandit algorithms}.
\newblock Cambridge University Press, 2020.

\bibitem[Li \& Wang(2022{\natexlab{a}})Li and Wang]{li2022asynchronous}
Chuanhao Li and Hongning Wang.
\newblock Asynchronous upper confidence bound algorithms for federated linear bandits.
\newblock In \emph{International Conference on Artificial Intelligence and Statistics}, pp.\  6529--6553. PMLR, 2022{\natexlab{a}}.

\bibitem[Li \& Wang(2022{\natexlab{b}})Li and Wang]{li2022generalized}
Chuanhao Li and Hongning Wang.
\newblock Communication efficient federated learning for generalized linear bandits.
\newblock \emph{Advances in Neural Information Processing Systems}, 2022{\natexlab{b}}.

\bibitem[Li et~al.(2022)Li, Wang, Wang, and Wang]{li2022kernelized}
Chuanhao Li, Huazheng Wang, Mengdi Wang, and Hongning Wang.
\newblock Communication efficient distributed learning for kernelized contextual bandits.
\newblock \emph{Advances in Neural Information Processing Systems}, 2022.

\bibitem[Li et~al.(2023)Li, Wang, Wang, and Wang]{li2023learning}
Chuanhao Li, Huazheng Wang, Mengdi Wang, and Hongning Wang.
\newblock Learning kernelized contextual bandits in a distributed and asynchronous environment.
\newblock \emph{The Eleventh International Conference on Learning Representations}, 2023.

\bibitem[Li \& Song(2022)Li and Song]{li2022privacy}
Tan Li and Linqi Song.
\newblock Privacy-preserving communication-efficient federated multi-armed bandits.
\newblock \emph{IEEE Journal on Selected Areas in Communications}, 40\penalty0 (3):\penalty0 773--787, 2022.

\bibitem[Li et~al.(2020{\natexlab{a}})Li, Sahu, Talwalkar, and Smith]{li2020federated}
Tian Li, Anit~Kumar Sahu, Ameet Talwalkar, and Virginia Smith.
\newblock Federated learning: Challenges, methods, and future directions.
\newblock \emph{IEEE signal processing magazine}, 37\penalty0 (3):\penalty0 50--60, 2020{\natexlab{a}}.

\bibitem[Li et~al.(2021)Li, Hu, Beirami, and Smith]{li2021ditto}
Tian Li, Shengyuan Hu, Ahmad Beirami, and Virginia Smith.
\newblock Ditto: Fair and robust federated learning through personalization.
\newblock In \emph{International Conference on Machine Learning}, pp.\  6357--6368. PMLR, 2021.

\bibitem[Li et~al.(2020{\natexlab{b}})Li, Huang, Yang, Wang, and Zhang]{li2020convergence}
Xiang Li, Kaixuan Huang, Wenhao Yang, Shusen Wang, and Zhihua Zhang.
\newblock On the convergence of fedavg on non-iid data.
\newblock In \emph{International Conference on Learning Representations}, 2020{\natexlab{b}}.

\bibitem[Lin \& Moothedath(2023)Lin and Moothedath]{lin2023federated}
Jiabin Lin and Shana Moothedath.
\newblock Federated stochastic bandit learning with unobserved context.
\newblock \emph{arXiv preprint arXiv:2303.17043}, 2023.

\bibitem[Liu \& Zhao(2010)Liu and Zhao]{liu2010distributed}
Keqin Liu and Qing Zhao.
\newblock Distributed learning in multi-armed bandit with multiple players.
\newblock \emph{IEEE Transactions on Signal Processing}, 58\penalty0 (11):\penalty0 5667--5681, 2010.

\bibitem[Liu et~al.(2022)Liu, Guo, Yang, Fan, Lam, and Zhao]{liu2022privacy}
Ziyao Liu, Jiale Guo, Wenzhuo Yang, Jiani Fan, Kwok-Yan Lam, and Jun Zhao.
\newblock Privacy-preserving aggregation in federated learning: A survey.
\newblock \emph{IEEE Transactions on Big Data}, 2022.

\bibitem[Lubana et~al.(2022)Lubana, Tang, Kawsar, Dick, and Mathur]{lubana2022orchestra}
Ekdeep Lubana, Chi~Ian Tang, Fahim Kawsar, Robert Dick, and Akhil Mathur.
\newblock Orchestra: Unsupervised federated learning via globally consistent clustering.
\newblock In \emph{International Conference on Machine Learning}, pp.\  14461--14484. PMLR, 2022.

\bibitem[Marfoq et~al.(2023)Marfoq, Neglia, Kameni, and Vidal]{marfoq2023federated}
Othmane Marfoq, Giovanni Neglia, Laetitia Kameni, and Richard Vidal.
\newblock Federated learning for data streams.
\newblock \emph{arXiv preprint arXiv:2301.01542}, 2023.

\bibitem[Mart{\'\i}nez-Rubio et~al.(2019)Mart{\'\i}nez-Rubio, Kanade, and Rebeschini]{martinez2019decentralized}
David Mart{\'\i}nez-Rubio, Varun Kanade, and Patrick Rebeschini.
\newblock Decentralized cooperative stochastic bandits.
\newblock \emph{Advances in Neural Information Processing Systems}, 32, 2019.

\bibitem[McMahan et~al.(2017)McMahan, Moore, Ramage, Hampson, and y~Arcas]{mcmahan2017communication}
Brendan McMahan, Eider Moore, Daniel Ramage, Seth Hampson, and Blaise~Aguera y~Arcas.
\newblock Communication-efficient learning of deep networks from decentralized data.
\newblock In \emph{Artificial intelligence and statistics}, pp.\  1273--1282. PMLR, 2017.

\bibitem[Min et~al.(2023)Min, He, Wang, and Gu]{min2023multi}
Yifei Min, Jiafan He, Tianhao Wang, and Quanquan Gu.
\newblock Multi-agent reinforcement learning: Asynchronous communication and linear function approximation.
\newblock \emph{arXiv preprint arXiv:2305.06446}, 2023.

\bibitem[Mitra et~al.(2022)Mitra, Adibi, Pappas, and Hassani]{mitra2022collaborative}
Aritra Mitra, Arman Adibi, George~J Pappas, and Hamed Hassani.
\newblock Collaborative linear bandits with adversarial agents: Near-optimal regret bounds.
\newblock \emph{Advances in neural information processing systems}, 2022.

\bibitem[Mohri et~al.(2018)Mohri, Rostamizadeh, and Talwalkar]{mohri2018foundations}
Mehryar Mohri, Afshin Rostamizadeh, and Ameet Talwalkar.
\newblock \emph{Foundations of machine learning}.
\newblock MIT press, 2018.

\bibitem[Mohri et~al.(2019)Mohri, Sivek, and Suresh]{mohri2019agnostic}
Mehryar Mohri, Gary Sivek, and Ananda~Theertha Suresh.
\newblock Agnostic federated learning.
\newblock In \emph{International Conference on Machine Learning}, pp.\  4615--4625. PMLR, 2019.

\bibitem[Nesterov(2003)]{nesterov2003introductory}
Yurii Nesterov.
\newblock \emph{Introductory lectures on convex optimization: A basic course}, volume~87.
\newblock Springer Science \& Business Media, 2003.

\bibitem[Neu \& Olkhovskaya(2020)Neu and Olkhovskaya]{neu2020efficient}
Gergely Neu and Julia Olkhovskaya.
\newblock Efficient and robust algorithms for adversarial linear contextual bandits.
\newblock In \emph{Conference on Learning Theory}, pp.\  3049--3068. PMLR, 2020.

\bibitem[Pillutla et~al.(2022)Pillutla, Kakade, and Harchaoui]{pillutla2022robust}
Krishna Pillutla, Sham~M Kakade, and Zaid Harchaoui.
\newblock Robust aggregation for federated learning.
\newblock \emph{IEEE Transactions on Signal Processing}, 70:\penalty0 1142--1154, 2022.

\bibitem[R{\'e}da et~al.(2022)R{\'e}da, Vakili, and Kaufmann]{reda2022near}
Cl{\'e}mence R{\'e}da, Sattar Vakili, and Emilie Kaufmann.
\newblock Near-optimal collaborative learning in bandits.
\newblock In \emph{NeurIPS 2022-36th Conference on Neural Information Processing System}, 2022.

\bibitem[Reddi et~al.(2020)Reddi, Charles, Zaheer, Garrett, Rush, Kone{\v{c}}n{\`y}, Kumar, and McMahan]{reddi2020adaptive}
Sashank Reddi, Zachary Charles, Manzil Zaheer, Zachary Garrett, Keith Rush, Jakub Kone{\v{c}}n{\`y}, Sanjiv Kumar, and H~Brendan McMahan.
\newblock Adaptive federated optimization.
\newblock \emph{arXiv preprint arXiv:2003.00295}, 2020.

\bibitem[Salgia \& Zhao(2022)Salgia and Zhao]{salgia2022distributed}
Sudeep Salgia and Qing Zhao.
\newblock Distributed linear bandits under communication constraints.
\newblock \emph{arXiv preprint arXiv:2211.02212}, 2022.

\bibitem[Sen et~al.(2021)Sen, Rakhlin, Ying, Kidambi, Foster, Hill, and Dhillon]{sen2021top}
Rajat Sen, Alexander Rakhlin, Lexing Ying, Rahul Kidambi, Dean Foster, Daniel~N Hill, and Inderjit~S Dhillon.
\newblock Top-k extreme contextual bandits with arm hierarchy.
\newblock In \emph{International Conference on Machine Learning}, pp.\  9422--9433. PMLR, 2021.

\bibitem[Shi \& Shen(2021)Shi and Shen]{shi2021federated}
Chengshuai Shi and Cong Shen.
\newblock Federated multi-armed bandits.
\newblock In \emph{Proceedings of the AAAI Conference on Artificial Intelligence}, volume~35, pp.\  9603--9611, 2021.

\bibitem[Shi et~al.(2020)Shi, Xiong, Shen, and Yang]{shi2020decentralized}
Chengshuai Shi, Wei Xiong, Cong Shen, and Jing Yang.
\newblock Decentralized multi-player multi-armed bandits with no collision information.
\newblock In \emph{International Conference on Artificial Intelligence and Statistics}, pp.\  1519--1528. PMLR, 2020.

\bibitem[Shi et~al.(2021{\natexlab{a}})Shi, Shen, and Yang]{shi2021personal}
Chengshuai Shi, Cong Shen, and Jing Yang.
\newblock Federated multi-armed bandits with personalization.
\newblock In \emph{International Conference on Artificial Intelligence and Statistics}, pp.\  2917--2925. PMLR, 2021{\natexlab{a}}.

\bibitem[Shi et~al.(2021{\natexlab{b}})Shi, Xiong, Shen, and Yang]{shi2021heterogeneous}
Chengshuai Shi, Wei Xiong, Cong Shen, and Jing Yang.
\newblock Heterogeneous multi-player multi-armed bandits: Closing the gap and generalization.
\newblock \emph{Advances in neural information processing systems}, 34:\penalty0 22392--22404, 2021{\natexlab{b}}.

\bibitem[Shi et~al.(2023)Shi, Xiong, Shen, and Yang]{shi2023reward}
Chengshuai Shi, Wei Xiong, Cong Shen, and Jing Yang.
\newblock Reward teaching for federated multiarmed bandits.
\newblock \emph{IEEE Transactions on Signal Processing}, 71:\penalty0 4407--4422, 2023.

\bibitem[Simchi-Levi \& Xu(2022)Simchi-Levi and Xu]{simchi2022bypassing}
David Simchi-Levi and Yunzong Xu.
\newblock Bypassing the monster: A faster and simpler optimal algorithm for contextual bandits under realizability.
\newblock \emph{Mathematics of Operations Research}, 47\penalty0 (3):\penalty0 1904--1931, 2022.

\bibitem[Song et~al.(2023)Song, Khanduri, Zhang, Yi, and Hong]{song2023fedavg}
Bingqing Song, Prashant Khanduri, Xinwei Zhang, Jinfeng Yi, and Mingyi Hong.
\newblock Fedavg converges to zero training loss linearly for overparameterized multi-layer neural networks.
\newblock In \emph{International Conference on Machine Learning}, pp.\  32304--32330. PMLR, 2023.

\bibitem[Szorenyi et~al.(2013)Szorenyi, Busa-Fekete, Hegedus, Orm{\'a}ndi, Jelasity, and K{\'e}gl]{szorenyi2013gossip}
Balazs Szorenyi, R{\'o}bert Busa-Fekete, Istv{\'a}n Hegedus, R{\'o}bert Orm{\'a}ndi, M{\'a}rk Jelasity, and Bal{\'a}zs K{\'e}gl.
\newblock Gossip-based distributed stochastic bandit algorithms.
\newblock In \emph{International conference on machine learning}, pp.\  19--27. PMLR, 2013.

\bibitem[Tsoumakas et~al.(2008)Tsoumakas, Katakis, and Vlahavas]{tsoumakas2008effective}
Grigorios Tsoumakas, Ioannis Katakis, and Ioannis Vlahavas.
\newblock Effective and efficient multilabel classification in domains with large number of labels.
\newblock In \emph{Proc. ECML/PKDD 2008 Workshop on Mining Multidimensional Data (MMD’08)}, volume~21, pp.\  53--59, 2008.

\bibitem[van Berlo et~al.(2020)van Berlo, Saeed, and Ozcelebi]{van2020towards}
Bram van Berlo, Aaqib Saeed, and Tanir Ozcelebi.
\newblock Towards federated unsupervised representation learning.
\newblock In \emph{Proceedings of the third ACM international workshop on edge systems, analytics and networking}, pp.\  31--36, 2020.

\bibitem[Wang et~al.(2019)Wang, Hu, Chen, and Wang]{wang2019distributed}
Yuanhao Wang, Jiachen Hu, Xiaoyu Chen, and Liwei Wang.
\newblock Distributed bandit learning: Near-optimal regret with efficient communication.
\newblock \emph{arXiv preprint arXiv:1904.06309}, 2019.

\bibitem[Wei \& Luo(2021)Wei and Luo]{wei2021non}
Chen-Yu Wei and Haipeng Luo.
\newblock Non-stationary reinforcement learning without prior knowledge: An optimal black-box approach.
\newblock In \emph{Conference on Learning Theory}, pp.\  4300--4354. PMLR, 2021.

\bibitem[Wei et~al.(2020)Wei, Li, Ding, Ma, Yang, Farokhi, Jin, Quek, and Poor]{wei2020federated}
Kang Wei, Jun Li, Ming Ding, Chuan Ma, Howard~H Yang, Farhad Farokhi, Shi Jin, Tony~QS Quek, and H~Vincent Poor.
\newblock Federated learning with differential privacy: Algorithms and performance analysis.
\newblock \emph{IEEE Transactions on Information Forensics and Security}, 15:\penalty0 3454--3469, 2020.

\bibitem[Wei et~al.(2021)Wei, Li, Ding, Ma, Su, Zhang, and Poor]{wei2021user}
Kang Wei, Jun Li, Ming Ding, Chuan Ma, Hang Su, Bo~Zhang, and H~Vincent Poor.
\newblock User-level privacy-preserving federated learning: Analysis and performance optimization.
\newblock \emph{IEEE Transactions on Mobile Computing}, 21\penalty0 (9):\penalty0 3388--3401, 2021.

\bibitem[Wei \& Shen(2022)Wei and Shen]{wei2022federated}
Xizixiang Wei and Cong Shen.
\newblock Federated learning over noisy channels: Convergence analysis and design examples.
\newblock \emph{IEEE Transactions on Cognitive Communications and Networking}, 8\penalty0 (2):\penalty0 1253--1268, 2022.

\bibitem[Xin et~al.(2020)Xin, Khan, and Kar]{xin2020variance}
Ran Xin, Usman~A Khan, and Soummya Kar.
\newblock Variance-reduced decentralized stochastic optimization with accelerated convergence.
\newblock \emph{IEEE Transactions on Signal Processing}, 68:\penalty0 6255--6271, 2020.

\bibitem[Xu \& Zeevi(2020)Xu and Zeevi]{xu2020upper}
Yunbei Xu and Assaf Zeevi.
\newblock Upper counterfactual confidence bounds: a new optimism principle for contextual bandits.
\newblock \emph{arXiv preprint arXiv:2007.07876}, 2020.

\bibitem[Ye et~al.(2020)Ye, Xiong, and Zhang]{ye2020pmgt}
Haishan Ye, Wei Xiong, and Tong Zhang.
\newblock Pmgt-vr: A decentralized proximal-gradient algorithmic framework with variance reduction.
\newblock \emph{arXiv preprint arXiv:2012.15010}, 2020.

\bibitem[Yi \& Vojnovi{\'c}(2023)Yi and Vojnovi{\'c}]{yi2023doubly}
Jialin Yi and Milan Vojnovi{\'c}.
\newblock Doubly adversarial federated bandits.
\newblock \emph{arXiv preprint arXiv:2301.09223}, 2023.

\bibitem[Yin et~al.(2018)Yin, Chen, Kannan, and Bartlett]{yin2018byzantine}
Dong Yin, Yudong Chen, Ramchandran Kannan, and Peter Bartlett.
\newblock Byzantine-robust distributed learning: Towards optimal statistical rates.
\newblock In \emph{International Conference on Machine Learning}, pp.\  5650--5659. PMLR, 2018.

\bibitem[Yin et~al.(2021)Yin, Zhu, and Hu]{yin2021comprehensive}
Xuefei Yin, Yanming Zhu, and Jiankun Hu.
\newblock A comprehensive survey of privacy-preserving federated learning: A taxonomy, review, and future directions.
\newblock \emph{ACM Computing Surveys (CSUR)}, 54\penalty0 (6):\penalty0 1--36, 2021.

\bibitem[Zhang et~al.(2020)Zhang, Kuang, You, Shen, Xiao, Zhang, Wu, Zhuang, and Li]{zhang2020federated}
Fengda Zhang, Kun Kuang, Zhaoyang You, Tao Shen, Jun Xiao, Yin Zhang, Chao Wu, Yueting Zhuang, and Xiaolin Li.
\newblock Federated unsupervised representation learning.
\newblock \emph{arXiv preprint arXiv:2010.08982}, 2020.

\bibitem[Zhang(2023)]{zhang_2023}
Tong Zhang.
\newblock \emph{Mathematical Analysis of Machine Learning Algorithms}.
\newblock Cambridge University Press, 2023.

\bibitem[Zheng et~al.(2020)Zheng, Shen, and Chen]{zheng2020design}
Sihui Zheng, Cong Shen, and Xiang Chen.
\newblock Design and analysis of uplink and downlink communications for federated learning.
\newblock \emph{IEEE Journal on Selected Areas in Communications}, 39\penalty0 (7):\penalty0 2150--2167, 2020.

\bibitem[Zhou et~al.(2020)Zhou, Li, and Gu]{zhou2020neural}
Dongruo Zhou, Lihong Li, and Quanquan Gu.
\newblock Neural contextual bandits with ucb-based exploration.
\newblock In \emph{International Conference on Machine Learning}, pp.\  11492--11502. PMLR, 2020.

\bibitem[Zhou \& Chowdhury(2023)Zhou and Chowdhury]{zhou2023differentially}
Xingyu Zhou and Sayak~Ray Chowdhury.
\newblock On differentially private federated linear contextual bandits.
\newblock \emph{arXiv preprint arXiv:2302.13945}, 2023.

\bibitem[Zhu et~al.(2023)Zhu, Wang, Pang, Wang, Jiao, Song, and Jordan]{zhu2023byzantine}
Banghua Zhu, Lun Wang, Qi~Pang, Shuai Wang, Jiantao Jiao, Dawn Song, and Michael~I Jordan.
\newblock Byzantine-robust federated learning with optimal statistical rates.
\newblock In \emph{International Conference on Artificial Intelligence and Statistics}, pp.\  3151--3178. PMLR, 2023.

\bibitem[Zhu et~al.(2022)Zhu, Foster, Langford, and Mineiro]{zhu2022contextual}
Yinglun Zhu, Dylan~J Foster, John Langford, and Paul Mineiro.
\newblock Contextual bandits with large action spaces: Made practical.
\newblock In \emph{International Conference on Machine Learning}, pp.\  27428--27453. PMLR, 2022.

\bibitem[Zhu et~al.(2021)Zhu, Zhu, Liu, and Liu]{zhu2021federated}
Zhaowei Zhu, Jingxuan Zhu, Ji~Liu, and Yang Liu.
\newblock Federated bandit: A gossiping approach.
\newblock In \emph{Abstract Proceedings of the 2021 ACM SIGMETRICS/International Conference on Measurement and Modeling of Computer Systems}, pp.\  3--4, 2021.

\bibitem[Zhuang et~al.(2022)Zhuang, Wen, and Zhang]{zhuang2022divergence}
Weiming Zhuang, Yonggang Wen, and Shuai Zhang.
\newblock Divergence-aware federated self-supervised learning.
\newblock \emph{arXiv preprint arXiv:2204.04385}, 2022.

\bibitem[Zierahn et~al.(2023)Zierahn, van~der Hoeven, Cesa-Bianchi, and Neu]{zierahn2023nonstochastic}
Lukas Zierahn, Dirk van~der Hoeven, Nicolo Cesa-Bianchi, and Gergely Neu.
\newblock Nonstochastic contextual combinatorial bandits.
\newblock In \emph{International Conference on Artificial Intelligence and Statistics}, pp.\  8771--8813. PMLR, 2023.

\end{thebibliography}
\bibliographystyle{tmlr}

\newpage
\appendix
\section{Additional Discussions}
\subsection{Societal Impacts}
This work focuses on providing a new design for federated contextual bandits (FCB), which establishes a close relationship between FCB and FL. We do not foresee major negative societal impacts as FCB is a well-established research domain and this work largely investigates its theoretical aspects. Moreover, as discussed in Section~\ref{subsec:beyond}, FedIGW can conveniently incorporate appendages from FL studies to obtain appealing properties of privacy, robustness, fairness, and beyond, which we believe can contribute to a positive societal impact. %largely contribute to its practical implementations.

\subsection{Examples of FL Components in FCB Studies}\label{subapp:example}
An example of the FL components adopted in previous FCB studies is provided in the following, together with the renowned FedAvg protocol for comparison. Specifically, as in Remark~\ref{rmk:linear}, we consider the study of federated linear bandits with a known $d$-dimensional feature mapping $\phi(\cdot, \cdot)$. Then, Alg.~\ref{alg:fedlinear} illustrates the FL component commonly adopted in \cite{wang2019distributed,li2022asynchronous,dubey2020differentially,he2022simple}: the agents share compressed local data  (e.g., covariance matrices) to the server for aggregation, which happens in a one-shot fashion. A simplified version of FedAvg \citep{mcmahan2017communication} is presented in Alg.~\ref{alg:fedavg}, with client $m$'s local loss function denoted as $\hat{\Lc}_m(\cdot; \cdot)$ following Sec.~\ref{sec:FedIGW}. It can be observed that FedAvg takes an optimization perspective to perform multi-rounds of gradient descent distributively.
 
\begin{figure}[htb]
\begin{minipage}{0.5\textwidth}
    \begin{algorithm}[H]
    \caption{The FL component commonly adopted in existing studies on federated linear bandits: \textbf{one-shot aggregation of compressed local data}}
    \label{alg:fedlinear}
    \begin{algorithmic}[1]
        \Require $M$ clients with client $m$'s interaction dataset denoted as $\Sc_m = \{(x_{m,\tau_m}, a_{m,\tau_m}, r_{m, \tau_m}): \tau_m\in [n_m] \}$
        \State \textcolor{blue}{\texttt{Client $m$}}: with $\phi(x_{m,\tau_m}, a_{m,\tau_m})$ denoted as $\phi_{m,\tau_m}$, compute $V_{m} \gets \sum_{\tau_m \in [n_m]} \phi_{m,\tau_m}\phi_{m,\tau_m}^\top$ and $b_m \gets \sum_{\tau_m\in [n_m]}r_{m,\tau_m} \phi_{m,\tau_m}$
        \State \textcolor{blue}{\texttt{Client $m$}}: send $V_m$ and $b_m$ to the server
        \State \textcolor{magenta}{\texttt{Server}}: receive $V_m$ and $b_m$ from each client $m$
        \State \textcolor{magenta}{\texttt{Server}}: compute $V \gets \sum_{m\in [M]} V_m$ and $b\gets \sum_{m\in [M]} b_m$
        \State \textcolor{magenta}{\texttt{Server}}: send $V$ and $b$ to all clients
        \State \textcolor{blue}{\texttt{Client $m$}}: receive $V$ and $b$ from the server
    \end{algorithmic}
    \end{algorithm}
\end{minipage}
\begin{minipage}{0.5\textwidth}
    \begin{algorithm}[H]
    \caption{The (simplified) FedAvg algorithm as an example of the canonical FL framework: \textbf{multiple-round aggregation of local model parameters}}
    \label{alg:fedavg}
    \begin{algorithmic}[1]
    \Require $M$ clients with client $m$'s interaction dataset denoted as $\Sc_m = \{(x_{m,\tau_m}, a_{m,\tau_m}, r_{m, \tau_m}): \tau_m\in [n_m] \}$, learning rate $\eta$
    \For{$i = 1, 2, \cdots$}
	\State \textcolor{blue}{\texttt{Client $m$}}: update $\hat{\omega}_m' \gets \hat{\omega}_m - \nabla \hat{\Lc}_m(\hat{\omega}_m; \Sc_m)$
        \State \textcolor{blue}{\texttt{Client $m$}}: send $\Delta_m \gets \hat{\omega}_m' - \hat{\omega}_m$ to the server
        \State \textcolor{magenta}{\texttt{Server}}: receive $\Delta_m$  from each client $m$
        \State \textcolor{magenta}{\texttt{Server}}: with $\sum_{m\in [M]}n_m$ denoted as $n$, send $\hat{\omega} \gets \hat{\omega} - \eta\sum_{m\in [M]}\frac{n_m}{n}\Delta_m$ to all clients
        \State \textcolor{blue}{\texttt{Client $m$}}: receive $\hat{\omega}'$ from the server and set $\hat{\omega}_m \gets \hat{\omega}$    
    \EndFor
    \end{algorithmic}
    \end{algorithm}
\end{minipage}
\end{figure}

\subsection{Limitations and Future Works}
While this work proposes a novel, broadly applicable FCB design, i.e., FedIGW, there are still many interesting directions that are worth further exploring.

$\bullet$ \textbf{Paralleling CB and FL.} As mentioned in Section~\ref{sec:principle}, the current FL studies largely focus on learning from batched and static datasets. To accommodate such protocols, FCB designs typically follow a periodically alternating scheme as shown in Fig.~\ref{fig:principle}, which is thus the focus of this work. While such alternating designs are capable of achieving statistical and communication efficiency, there is still room for improvement: (1) the CB interactions need to wait for the completeness of a full FL process, which may be slow when computation resources are limited and communication delays are large; (2) it is desirable to use the CB data in a more timely fashion instead of accumulating to the end of an epoch.

As one variant of periodically alternating, we can have FedIGW interleave CB and FL as shown in Fig.~\ref{fig:interleaving}. This approach provides some buffer to perform FL without agents waiting for its completeness. Especially, in epoch $l$, on one hand, the agents perform FL with datasets from epoch $l-1$; on the other hand, they perform CB interactions following IGW with an estimated function $\widehat{f}^{l-2}$ learned during epoch $l-1$ via datasets from epoch $l-2$. In other words, there will be one epoch delay compared with the basic form of FedIGW, while this delay is used for the FL process.

Furthermore, a better approach is to have FL and CB fully paralleled as shown in Fig.~\ref{fig:parallel}. Then, neither of them needs to wait for the other part, while CB data can be processed more timely. As mentioned in Section~\ref{subsec:beyond}, we believe that the framework of FL with data streams proposed in a recent work of \citet{marfoq2023federated} could be a suitable tool, as the sequential CB interactions essentially provide data steams. We believe this direction is not only worth further exploring in FCB but perhaps more importantly, calls for more investigation in FL with data streams, where FCB can also serve as an important motivation application.

\begin{figure}[htb]
    \centering
        \subfigure[Interleaving]{\includegraphics[width=0.45\linewidth]{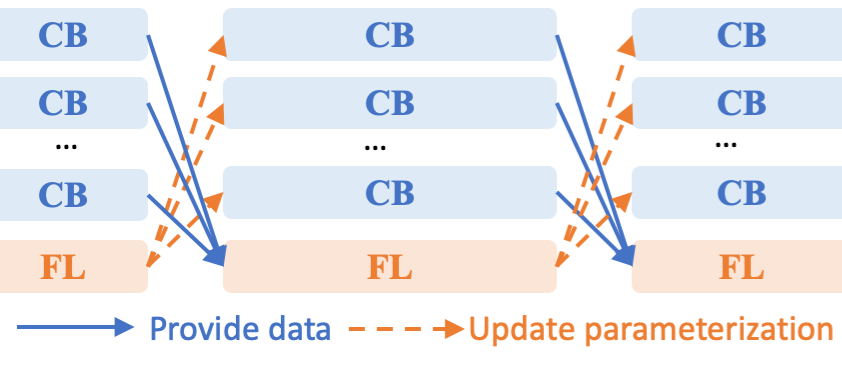}\label{fig:interleaving}}
        \hspace{0.2in}
        \subfigure[Fully Paralleling]{\includegraphics[width=0.45\linewidth]{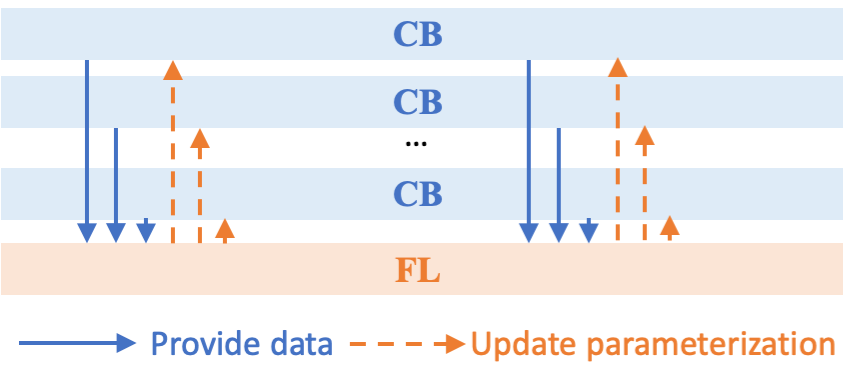}\label{fig:parallel}}
        \caption{\centering Different Styles of Connecting FL and CB in FCB.}
        %\label{fig:exp}
\end{figure}

$\bullet$ \textbf{Incorporating other FL advances.} Given the flexible FL choice in FedIGW, although this work has provided detailed discussions on incorporating many aspects of FL advancements (including canonical algorithmic designs, convergence analysis, and useful appendages), there are still many directions worth further exploration. For example, as mentioned in Section~\ref{subsec:formulation}, this work and most FCB investigations are focused on collaborating through a central server, while the case of communicating via a connected graph is less explored, where certain consensus errors commonly appear \citep{xin2020variance,ye2020pmgt}. It is worth noting that the design and analysis framework of FedIGW are both applicable in the later setting. Especially, the consensus error can be modeled as one part of the optimization error in Lemma~\ref{lem:error_decompose}. This further validates the value of the proposed FedIGW design and the general analysis framework while further specifications are left for future works.

Also, it would be great to leverage extra tools to save computations in the adopted FL protocol. Using local updates as in \citet{chou2020general} is one promising direction. These approaches are all feasible in FedIGW as long as the agents can obtain a learned reward function to perform IGW interactions. Their specific impacts can be captured via the established analysis framework through their own optimization errors.

$\bullet$ \textbf{Leveraging other CB designs.} With previous FCB studies largely focused on the CB component, this work is motivated to incorporate more advances from FL. Thus, we propose the FedIGW design which can leverage canonical protocols, convergence analyses, and flexible appendages from FL.

However, we also note that there are still many CB algorithms that remain under-explored in FCB, where UCB-based designs are dominating. For example, the simple greedy algorithm is shown to be efficient when the context generation contains certain exploration capabilities in \citep{han2020sequential}. Moreover, varying attempts have been made in \cite{xu2020upper, foster2020beyond, foster2020instance, zhu2022contextual} to design generally applicable CB algorithms with tight performance guarantees, e.g., handling infinite arms. It would be interesting to investigate how to bring these designs to the federated setting and whether such connections provide new opportunities and insights.

$\bullet$ \textbf{Complex environments.} This work is focused on a stationary environment with stochastic rewards, which is well motivated by practical applications and commonly adopted in FCB studies. To further broaden the applicability of FCB, we believe that it is also important to study adversarial or non-stationary environments. Many advances have been made in standard single-agent bandits, e.g., \cite{auer2002nonstochastic,neu2020efficient, zierahn2023nonstochastic, wei2021non}. A recent work \citep{yi2023doubly} investigates the federated adversarial environment in the tabular setting and further investigations are desired to provide further concrete designs and analyses.

$\bullet$ \textbf{Extension to RL.} It would also be meaningful to extend the current study of FCB to federated reinforcement learning (RL) as a further step in understanding the combination of FL and sequential decision-making. Some results have been reported in \cite{dubey2021provably,min2023multi,jin2022federated,fan2021fault, cisneros2023one}. We hope this work can serve as a starting point for more principled and generally applicable studies in federated RL.

\section{Additional Related Works}\label{app:related}
The studies on federated multi-armed bandits (FMAB) and federated contextual bandits (FCB) can be viewed as a version of the general multi-agent bandits  \citep{liu2010distributed, boursier2019sic,shi2020decentralized,shi2021heterogeneous} and parallelizing bandits \citep{chan2021parallelizing, karbasi2021parallelizing} that is more suitable for modern applications. We provide a more detailed review in the following.

$\bullet$ \textbf{Tabular.} There have been many studies on cooperative designs in multi-armed bandits (i.e., the tabular setting), e.g., \cite{hillel2013distributed, szorenyi2013gossip,landgren2016distributed,martinez2019decentralized}, focusing on different learning targets and different communication protocols (e.g., through a communication graph or with some randomly selected peers). Notably, in \cite{wang2019distributed}, communication-efficient designs are proposed via periodically aggregating local estimates and performing arm elimination globally. We here also discuss another line of works on FMAB \citep{shi2021federated,shi2021personal,reda2022near,zhu2021federated,chen2022federated,shi2023reward}. In their considered setting, the global rewards are (weighted) averages of local observations; however the former is not directly observable. With maximizing global rewards as the learning target, the agents need to collaboratively perform explorations and aggregate local information. Despite the model differences, the design principle of FedIGW may still be beneficial for studying this setting. Especially, it is worth considering replacing the UCB-based explorations commonly adopted in \citet{shi2021federated,shi2021personal,reda2022near,zhu2021federated,chen2022federated} with regression-based ones as in FedIGW to facilitate incorporation of FL studies.

$\bullet$  \textbf{Linear.} The most commonly studied FCB setting is federated linear bandits. There have been many investigations in this direction.  Especially, different environments have been tackled in different works, e.g., the finite-armed fixed-context setting \citep{wang2019distributed,huang2021federated}, the finite-armed stochastic-context setting \citep{amani2022distributed}, the finite-armed adversarial context setting \citep{fan2023federated}, the infinite-armed fixed-context setting \citep{salgia2022distributed}, and the infinite-armed adversarial-context setting \citep{wang2019distributed,dubey2020differentially,li2022asynchronous, he2022simple}. Furthermore, many other settings, e.g., unobserved context \citep{lin2023federated}, and additional properties, e.g., privacy \citep{dubey2020differentially,zhou2023differentially}, robustness \citep{jadbabaie2022byzantine}, have been investigated. As summarized in the main paper, these works mainly select arm elimination (AE) \citep{lattimore2020bandit} or LinUCB \citep{abbasi2011improved} as their CB designs, which require both model estimates and confidence bounds. Thus, in their designed FL protocols, compressed local data (e.g., aggregated local rewards and covariance matrices) are often directly shared to solve a global ridge regression and to construct tighter confidence bounds. Compared with these studies, FedIGW can effectively solve the finite-armed stochastic-context setting without sharing any raw or compressed local data but only communicate processed model parameters (e.g., gradients). More detailed discussions and concrete results are provided in Appendix~\ref{subapp:linear}.

A detailed comparison of the obtained regrets and the amounts of communicated real numbers is provided in Table~\ref{tab:linear}. It can be observed that adapting FedIGW to the specific case of linear bandits does not provide the same near-optimal performance as in previous works. This is not a surprise as (single-agent) IGW itself has not yet been shown to achieve the lower-bound performance of linear bandits, while the previous works are largely built upon the nearly optimal LinUCB design \citep{abbasi2011improved}. However, as noted in Remark~\ref{rmk:IGW}, IGW only requires a learned reward function, instead of complicated data analytics such as UCB, which grants it great flexibility to better incorporate FL advancements and handle more general scenarios beyond the linear setting. 

$\bullet$  \textbf{Generalized Linear and Kernelized.} As extensions of the linear reward functions, \cite{li2022generalized} considers the generalized-linear class, and \cite{li2022kernelized, li2023learning} study the kernelized one. The adopted basic techniques are similar to the aforementioned ones in federated linear bandits, while efforts are focused on fine-tuning communications (e.g., via Nystr\"om approximation \citep{li2022kernelized, li2023learning}). It is worth noting that \cite{li2022generalized} invokes the distributed accelerated gradient descent algorithm to solve their considered distributed optimization with a generalized linear function class, which can be viewed as a preliminary attempt of involving FL or distributed optimization designs in FCB. However, the motivation there is the lack of a closed-form solution as in the linear case, while \cite{li2022generalized} additionally needs to share the local covariance matrices to construct better confidence bounds. This work, instead, formally proposed FedIGW which can rely only on canonical FL framework and accommodate flexible FL choices.

$\bullet$ \textbf{Neural.} A recent work of \citet{dai2023federated} extends the advances on single-agent neural bandits \citep{zhou2020neural} to the federated setting, where the neural tangent kernel (NTK) analyses are incorporated. With NTK to ``linearize'' the considered over-parameterized neural network, \cite{dai2023federated} still largely follows the designs in the aforementioned federated linear bandits while some additional attempts have been made, e.g., an extra one-round averaging of model parameters besides aggregating NTK. This work, instead, takes a step further to fully leverage FL protocols, which often perform multiple (instead of one) rounds of model aggregations that are often necessary to guarantee convergence. Also, the optimization and generalization errors of a FedAvg variant with overparameterized neural networks are provided in \cite{huang2021fl}, which is conceivably compatible with FedIGW for the corresponding analyses. Moreover, as shown by the additional experimental results in Sec.~\ref{sec:exp}, FedIGW empirically outperforms FN-UCB \citep{dai2023federated} on different tasks and is more computationally efficient.

\begin{table}[tb]
    \caption{\centering A comparison of settings and results of federated linear bandits; note that FedIGW is not specifically designed and optimized to handle linear reward functions as previous designs.}
    \centering
    \begin{tabular}{c|c|c|c|c}
        \hline
       Reference  & Arms & Context & Regret & \# of Numbers Communicated\\
       \hline
       \cite{wang2019distributed} & Infinite & Fixed & $\tilde{O}(d\sqrt{MT})$ & $O((dM+d\log\log(d))\log(T))$\\
       \cite{he2022simple} & Infinite & Adversarial & $\tilde{O}(d\sqrt{MT})$ &  $O(d^3M^2\log(MT))$\\
       \cite{huang2021federated} & Finite & Fixed & $\tilde{O}(\sqrt{dMT})$ & $O(d^2+ dK)M\log(T))$ \\
       \cite{amani2022distributed}$^\dagger$ & Finite & Stochastic & $\tilde{O}(\sqrt{dMT})$ & $O(dM\log\log(MT))$\\
       FedIGW$^\ddagger$ & Finite & Stochastic & $\tilde{O}(\sqrt{dKMT})$ & $O(d^2M\log(T))$\\
       FedIGW$^\flat$ & Finite & Stochastic & $\tilde{O}(\sqrt{dKMT})$ & $O(d\log(d)\sqrt{M^3T})$\\
       \hline
    \end{tabular}
    \begin{center}
        $\dagger$: assuming a homogeneous and known context distribution for all agents;\\
        $\ddagger$: solving the global ridge regression via directly sharing aggregated local rewards and covariance matrices as in the other listed works;\\
        $\flat$: solving the global ridge regression via distributed accelerated gradient descent;\\
    \end{center}
    \label{tab:linear}
\end{table}

\section{Proofs for Section~\ref{subsec:general}}
\subsection{Notations}
We first introduce notations that are repeatedly used.
For the output function from the adopted FL protocol, we characterize its performance via the following definition of its excess risk, which is commonly adopted in the analysis of IGW-type CB algorithms \citep{simchi2022bypassing, sen2021top, ghosh2021model}.
    \begin{definition}\label{def:error_general}
        Let $p_{[M]}: = \{p_m: m\in [M]\}$ be a set of $M$ arbitrary independent arm selection distributions. Given an overall dataset $\Sc_{[M]} := \{\Sc_{m}: m\in [M]\}$ where each dataset $\Sc_m$ consists of $n_m$ training samples of the form  $(x_{m}, a_{m}; r_{m}(a_{m}))$ independently and identically drawn according to $(x_{m}, r_{m})\sim \Dc_m$, $a_{m}\sim p_{m}(\cdot|x_{m})$, the federated protocol $\textup{\texttt{FLroutine}}(\Sc_{[M]}) = \{\textup{\texttt{FLroutine}}_m(\Sc_m): m\in [M]\}$ returns a predictor $\widehat{f}(\cdot)$, and its excess risk is defined as
        \begin{align*}
             \Ec(\Fc; n_{[M]}) := \Eb_{S_{[M]}, \xi}\left[\sum_{m\in [M]} \frac{n_m}{n}\cdot \Eb_{x_m\sim \Dc^{\Xc_m}_{m}, a_m\sim p_m(\cdot|x_m)}\left[\left(\widehat{f}(x_m, a_m) - f^*(x_m, a_m)\right)^2\right]\right],
        \end{align*}
        where $n_{[M]}:=\{n_m:m\in [M]\}$ and $\xi$ denotes the random source in the potentially stochastic FL algorithm. We often abbreviate $\Ec(\Fc; n_{[M]})$ as $\Ec(n_{[M]})$ to simplify notations.
    \end{definition}
    This definition measures in expectation (w.r.t. the random data generation and the stochastic FL process) how far the output of the adopted FL protocol is from the true reward function on the weighted data distribution of all agents. Note that the excess risk bound $\Ec(n_{[M]})$ would typically rely on some other parameters in the adopted FL protocol (e.g., the step size and the number of iterations in gradient-based approaches), which are currently not specified for generality.

Then, let $\Upsilon^l$ denote the sigma-algebra generated by the history up to epoch $l$, i.e., $\{(x_{m,t_m}, a_{m,t_m}, r_{m,t_m}): m\in [M], t_m\in [t_m(\tau^l)]\}$, and the randomness in the adopted FL protocol up to epoch $l$, i.e., $\{\xi_{i}: i\in [l]\}$, where $\xi_i$ denotes the random source in epoch $i$. Then, we denote $l_m(t_m): = \min\{l\in \Nb: t_m \leq t_m(\tau^l)\}$ as the epoch that agent $m$'s $t_m$ belongs to. 
Also, let $\Psi_m := \Ac_m^{\Xc_m}$ denote the set of deterministic functions from $\Xc_m$ to $\Ac_m$ for agent $m$ and $\Psi_{[M]} := \times_{m\in [M]} \Psi_m$ the Cartesian product of $\{\Psi_m: m\in [M]\}$. Furthermore, for any action selection kernel $p_{[M]} = \{p_m: m\in [M]\}$, where $p_m(a_m|x_m)$ is the probability of selecting action $a_m \in \Ac$ given context $x_m$, and any policy $\pi_{[M]} = \{\pi_m: m\in [M]\}\in \Psi$, we define
\begin{align*}
    V_m(p_m, \pi_m) &:= \Eb_{x_m\sim \Dc^{\Xc_m}_m}\left[\frac{1}{p_m(\pi_m(x_m)|x_m)}\right],\\
    \Rc_m(\pi_m) &:= \Eb_{x_m \sim \Dc_{m}^{\Xc_m}} \left[f^*(x_m, \pi_m(x_m))\right],\\
    \widehat{\Rc}_{m}^l(\pi_m \mid \Upsilon^{l-1}) &:= \Eb_{x_m \sim \Dc_{m}^{\Xc_m}}\left[\widehat{f}^{l}(x_m, \pi_m(x_m))\mid \Upsilon^{l-1}\right],\\
    \Reg_m(\pi_m) &: = \Rc_m(\pi^*_m) - \Rc_m(\pi_m),\\
    \widehat{\Reg}^l_{m}(\pi_m\mid \Upsilon^{l-1})&: = \widehat{\Rc}^l_{m,t_m}(\widehat{\pi}^{l}_m\mid \Upsilon^{l-1}) - \widehat{\Rc}^l_{m,t_m}(\pi_m\mid \Upsilon^{l-1}).
\end{align*}
where $\widehat{\pi}^{l}_m(x_m) := \argmax_{a_m\in \Ac_m} \widehat{f}^l(x_m,a_m)$ for a given $\widehat{f}^l$ (determined by $\Upsilon^{l-1}$).

The following proofs are largely inspired by the single-agent contextual bandits work \citep{simchi2022bypassing}, while major changes have been made to accommodate the more complex federated system considered in this work.

\subsection{Proofs of Theorem~\ref{thm:global_regret}}
First, the following lemma characterizes the relation between the excess errors and the selected learning rates.
\begin{lemma}\label{lem:learning_rate}
For all $l>1$, it holds that
\begin{align*}
    &\Eb_{\Upsilon^{l-1}}\left[\sum_{m\in [M]} \frac{E^{l-1}_m}{\sum_{m'\in [M]}E^{l-1}_{m'}}\cdot \Eb_{x_m\sim \Dc^{\Xc_m}_{m}, a_m\sim p^{l-1}_m(\cdot|x_m)}\left[\left(\widehat{f}^l(x_m, a_m) - f^*(x_m, a_m)\right)^2 \mid \Upsilon^{l-1}\right]\right] \\
   & \leq \Ec(\Fc; E^{l-1}_{[M]}) = \frac{\sum_{m\in [M]}E^{l-1}_m K_m}{\sum_{m\in [M]}E^{l-1}_m(\gamma^l)^2}.
\end{align*}
\end{lemma}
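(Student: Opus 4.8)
The plan is to treat the two relations in the statement separately: the inequality, which is a direct consequence of Definition~\ref{def:error_general}, and the final equality, which is pure algebra from the prescribed choice of $\gamma^l$.

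For the inequality, the key observation I would exploit is that, conditioned on the history $\Upsilon^{l-2}$, the data collected during epoch $l-1$ exactly matches the i.i.d. sampling model required by Definition~\ref{def:error_general}. Indeed, the IGW kernel $p^{l-1}_m$ used throughout epoch $l-1$ is determined by $\widehat{f}^{l-1}$ and is therefore $\Upsilon^{l-2}$-measurable; hence, given $\Upsilon^{l-2}$, the samples $(x_{m,t_m}, a_{m,t_m}, r_{m,t_m})$ for $t_m$ in epoch $l-1$ are i.i.d. with $x_m \sim \Dc^{\Xc_m}_m$ and $a_m \sim p^{l-1}_m(\cdot\mid x_m)$. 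The function $\widehat{f}^l$ is then the output of $\texttt{FLroutine}(\Sc^{l-1}_{[M]})$ on exactly these datasets, with $n_m = E^{l-1}_m$ samples for agent $m$. Therefore, identifying $\Sc_m \leftrightarrow \Sc^{l-1}_m$, $p_m \leftrightarrow p^{l-1}_m$, $n_m \leftrightarrow E^{l-1}_m$, and $\widehat{f} \leftrightarrow \widehat{f}^l$, I would invoke Definition~\ref{def:error_general} (conditioned on $\Upsilon^{l-2}$) to equate the inner weighted squared error, averaged over the epoch-$l-1$ training draws, the FL randomness, and a fresh test sample, with the excess risk of $\widehat{f}^l$ under the realized kernel $p^{l-1}_{[M]}$. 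Since $\Ec(\Fc; E^{l-1}_{[M]})$ bounds this quantity uniformly over admissible collection kernels, taking the outer expectation over $\Upsilon^{l-2}$ by the tower property preserves the bound and yields the claimed inequality. The one point requiring care is the bookkeeping of the conditioning: the inner conditional expectation in the statement is taken given $\Upsilon^{l-1}$ (under which both $\widehat{f}^l$ and $p^{l-1}_m$ are frozen), and the outer $\Eb_{\Upsilon^{l-1}}$ then re-randomizes over the epoch-$l-1$ training data and FL noise, so that the composite expectation reproduces exactly the $\Eb_{\Sc_{[M]},\xi}$ and test-sample averages appearing in Definition~\ref{def:error_general}.

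The final equality is then immediate from the learning rate prescribed in Theorem~\ref{thm:global_regret}. Choosing $\gamma^l$ so that $(\gamma^l)^2 = \sum_{m}E^{l-1}_m K_m / (\sum_{m}E^{l-1}_m \, \Ec(E^{l-1}_{[M]}))$, I would compute $\sum_{m}E^{l-1}_m (\gamma^l)^2 = \sum_{m}E^{l-1}_m K_m / \Ec(E^{l-1}_{[M]})$, whence $\sum_{m}E^{l-1}_m K_m / (\sum_{m}E^{l-1}_m (\gamma^l)^2) = \Ec(E^{l-1}_{[M]})$, as stated.

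The main obstacle is conceptual rather than computational: correctly verifying that the history-dependent, but within-epoch frozen, IGW data-collection policy fits the clean i.i.d. template of Definition~\ref{def:error_general}, and that the nested conditioning on $\Upsilon^{l-1}$ versus $\Upsilon^{l-2}$ is handled so that no randomness is double-counted or omitted. Once this measurability and conditioning structure is pinned down, both the inequality and the equality follow directly.
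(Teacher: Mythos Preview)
Your proposal is correct and matches the paper's own proof, which is essentially a two-line sketch: the inequality is attributed directly to Definition~\ref{def:error_general}, and the equality is obtained by plugging in the choice of $\gamma^l$ from Theorem~\ref{thm:global_regret}. Your write-up simply fills in the measurability and conditioning details (the role of $\Upsilon^{l-2}$ versus $\Upsilon^{l-1}$, and why the epoch-$(l{-}1)$ data matches the i.i.d.\ template of the definition) that the paper leaves implicit.
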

\begin{proof}
    The first inequality is from the Assumption~\ref{def:error_general}, while the second is based on the choice of $\gamma^l$ in Theorem~\ref{thm:global_regret}, i.e.,
    \begin{align*}
        \gamma^ l = \sqrt{\frac{\sum_{m\in [M]}E^{l-1}_m K_m}{\sum_{m\in [M]}E^{l-1}_m\Ec(\Fc; E^{l-1}_{[M]})}}, 
    \end{align*}
    which leads to the lemma.
\end{proof}

Then, the following lemma bounds the estimated rewards $\widehat{\Rc}_m^l$ and true rewards $\Rc_m$. 
\begin{lemma}\label{lem:VE}
    For any epoch $l>1$, for any $\pi_m\in \Psi_m$, conditioned on $\Upsilon^{l-1}$, it holds that
    \begin{align*}
         \left|\widehat{\Rc}_{m}^l(\pi_m\mid \Upsilon^{l-1}) - \Rc_m(\pi_m)\right| &\leq  \sqrt{V_{m}(p^{l-1}_m, \pi_m\mid \Upsilon^{l-1}) } \sqrt{\Ec_m^{l-1}(\Upsilon^{l-1})},
    \end{align*}
\end{lemma}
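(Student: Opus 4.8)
The plan is to reduce the claim to a single application of the Cauchy--Schwarz inequality via an importance-weighting trick. First I would observe that, conditioned on $\Upsilon^{l-1}$, both the estimated reward function $\widehat{f}^l$ (learned from the epoch-$(l-1)$ data) and the sampling kernel $p_m^{l-1}$ (determined by $\widehat{f}^{l-1}$, hence $\Upsilon^{l-2}$-measurable and a fortiori $\Upsilon^{l-1}$-measurable) are deterministic, so the only remaining randomness in $\widehat{\Rc}_m^l$ and $\Rc_m$ is the fresh context $x_m\sim\Dc_m^{\Xc_m}$. Unfolding the definitions gives
$$\widehat{\Rc}_m^l(\pi_m\mid\Upsilon^{l-1}) - \Rc_m(\pi_m) = \Eb_{x_m\sim\Dc_m^{\Xc_m}}\left[\widehat{f}^l(x_m,\pi_m(x_m)) - f^*(x_m,\pi_m(x_m)) \mid \Upsilon^{l-1}\right],$$
and passing the absolute value inside the expectation (triangle inequality / Jensen) bounds the left-hand side by $\Eb_{x_m}\!\left[\,|\widehat{f}^l(x_m,\pi_m(x_m)) - f^*(x_m,\pi_m(x_m))|\,\right]$.

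The key step is then to insert the factor $1 = p_m^{l-1}(\pi_m(x_m)\mid x_m)^{-1/2}\cdot p_m^{l-1}(\pi_m(x_m)\mid x_m)^{1/2}$ inside this expectation and apply Cauchy--Schwarz over the single random variable $x_m$, which splits the bound into the product
$$\sqrt{\Eb_{x_m}\!\left[\,p_m^{l-1}(\pi_m(x_m)\mid x_m)^{-1}\,\right]}\cdot \sqrt{\Eb_{x_m}\!\left[\,p_m^{l-1}(\pi_m(x_m)\mid x_m)\big(\widehat{f}^l(x_m,\pi_m(x_m)) - f^*(x_m,\pi_m(x_m))\big)^2\,\right]}.$$
By the definition of $V_m$, the first factor is exactly $\sqrt{V_m(p_m^{l-1},\pi_m\mid\Upsilon^{l-1})}$.

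It then remains to identify the second factor with $\sqrt{\Ec_m^{l-1}(\Upsilon^{l-1})}$. Here I would use that the conditional per-agent excess risk is a full expectation over the drawn action, namely $\Ec_m^{l-1}(\Upsilon^{l-1}) = \Eb_{x_m}\!\left[\sum_{a_m\in\Ac_m} p_m^{l-1}(a_m\mid x_m)\big(\widehat{f}^l(x_m,a_m) - f^*(x_m,a_m)\big)^2\right]$, whereas the second Cauchy--Schwarz factor retains only the single summand $a_m=\pi_m(x_m)$. Since every summand is nonnegative, dropping all but one term can only decrease the value, so the second factor is at most $\sqrt{\Ec_m^{l-1}(\Upsilon^{l-1})}$; multiplying the two factors yields the claim.

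The argument is essentially mechanical once the importance-weighting decomposition is in place; the only point requiring care is the measurability bookkeeping, i.e., verifying that $p_m^{l-1}$ and $\widehat{f}^l$ are indeed $\Upsilon^{l-1}$-measurable so that conditioning genuinely freezes them and the above expectations are bona fide Cauchy--Schwarz inequalities in $x_m$ alone. I expect this to be the main (though minor) obstacle, with the remaining arithmetic following directly from the definitions of $V_m$ and $\Ec_m^{l-1}$.
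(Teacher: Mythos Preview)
Your proposal is correct and takes essentially the same approach as the paper. The paper proceeds in the reverse direction---starting from the product $V_m(p_m^{l-1},\pi_m)\cdot\Ec_m^{l-1}(\Upsilon^{l-1})$ and applying Cauchy--Schwarz in the form $\Eb[A]\Eb[B]\geq(\Eb[\sqrt{AB}])^2$, then dropping summands inside $B$---but the importance-weighting trick, the Cauchy--Schwarz step over $x_m$, and the ``retain only the $\pi_m(x_m)$ summand'' argument are identical to yours.
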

where $\Ec_m^{l-1}(\Upsilon^{l-1}) := \Eb_{x_m\sim \Dc^{\Xc_m}_m, a^{l-1}_m\sim p_m^{l-1}(\cdot|x_m)}\left[\left(\widehat{f}^{l}(x_{m}, a_{m}^{l-1}) - f^*(x_{m}, a_{m}^{l-1}) \right)^2\mid \Upsilon^{l-1}\right]$. 
\begin{proof}
    For simplicity, we abbreviate $\Eb_{x_m\sim \Dc^{\Xc_m}_m, a^{l-1}_m\sim p_m^{l-1}(\cdot|x_m)}[\cdot]$ as $\Eb_{x_m, a^{l-1}_m}[\cdot]$, and for any policy $\pi_m\in \Psi_m$, and any epoch $l>1$, we define
    \begin{align*}
        \Delta^{l}_{m}(\pi_m(x_m)):= \widehat{f}^{l}(x_m, \pi_m(x_m)) - f^*(x_m, \pi_m(x_m)) 
    \end{align*}
    which indicates that
    \begin{align*}
        \widehat{\Rc}^l_{m}(\pi_m\mid \Upsilon^{l-1}) - \Rc_m(\pi_m) = \Eb_{x_m} \left[\Delta^{l}_{m}(\pi_m(x_m)\mid \Upsilon^{l-1}\right],
    \end{align*}
    and
    \begin{align*}
        \Eb_{x_m, a_{m}^{l-1}}\left[\left(\Delta^{l}_{m}(a_{m}^{l-1})\right)^2\mid \Upsilon^{l-1}\right]\geq  \Eb_{x_m}\left[p^{l-1}_m(\pi_m(x_{m})|x_{m}) \left(\Delta^{l}_{m}(\pi_m(x_{m}))\right)^2\mid \Upsilon^{l-1}\right].
    \end{align*}

    Furthermore, conditioned on $\Upsilon^{l-1}$, we can obtain that
    \begin{align*}
        &V_{m}(p^{l-1}_m, \pi_m\mid \Upsilon^{l-1}) \cdot \Eb_{x_{m},a_{m}^{l-1}}\left[\left(\Delta^{l}_{m}(a_{m}^{l-1})\right)^2\mid \Upsilon^{l-1}\right]\\
        & = \Eb_{x_{m}}\left[\frac{1}{p^{l-1}_m(\pi_m(x_{m})|x_{m})}\mid \Upsilon^{l-1}\right]\Eb_{x_{m}, a_{m}^{l-1}}\left[\left(\Delta^{l}_{m}(a_{m}^{l-1})\right)^2\mid \Upsilon^{l-1}\right]\\
        & \geq \left(\Eb_{x_{m}}\left[\sqrt{\frac{1}{p^{l-1}_m(\pi_m(x_{m})|x_{m})}\Eb_{a_{m}^{l-1}}\left[\left(\Delta^{l}_{m}(a_{m}^{l-1})\right)^2\right]}\mid \Upsilon^{l-1}\right]\right)^2\\
        & \geq \left(\Eb_{x_{m}}\left[\sqrt{\frac{1}{p^{l-1}_m(\pi_m(x_{m})|x_{m})}p^{l-1}_m(\pi_m(x_{m})|x_{m})  \left(\Delta^{l}_{m}(\pi_m(x_{m}))\right)^2}\mid \Upsilon^{l-1}\right]\right)^2\\
        & = \left(\Eb_{x_{m}}\left[\left|\Delta^{l}_{m}(\pi_m(x_{m}))\right|\mid \Upsilon^{l-1}\right]\right)^2\\
        & \geq \left|\widehat{\Rc}_{m}^l(\pi_m\mid \Upsilon^{l-1})- \Rc_m(\pi_m)\right|^2.
    \end{align*}
    
    As a result, it holds that
    \begin{align*}
         \left|\widehat{\Rc}_{m}^l(\pi_m\mid \Upsilon^{l-1}) - \Rc_m(\pi_m)\right| &\leq  \sqrt{V_{m}(p^{l-1}_m, \pi_m\mid \Upsilon^{l-1})} \sqrt{\Ec_m^{l-1}(\Upsilon^{l-1})},
    \end{align*}
    where the last step we use the definition that
    \begin{align*}
        \Ec_m^{l-1}(\Upsilon^{l-1}) = \Eb_{x_{m}, a_{m}^{l-1}}\left[\left(\widehat{f}^{l}(x_{m}, a_{m}^{l-1}) - f^*(x_{m}, a_{m}^{l-1}) \right)^2\mid \Upsilon^{l-1}\right].
    \end{align*}
    This concludes the proof.
\end{proof}

Furthermore, the following lemma provides a characterization of the relation between the virtual loss $\widehat{\Reg}^l_m$ and the true loss $\Reg^l_m$.
\begin{lemma}\label{lem:true_virtual_loss}
    For any epochs $l \geq 1$, for any policies $\pi_{[M]}\in \Psi_{[M]}$, it holds that
    \begin{align*}
        \sum_{m\in [M]} E^l_m \Reg_m(\pi_m) \leq 2\sum_{m\in [M]} E^l_m  \Eb_{\Upsilon^{l-1}}\left[\widehat{\Reg}_{m}^l(\pi_m\mid \Upsilon^{l-1}) \right] + \eta^l,\\
        \sum_{m\in [M]} E^l_m \Eb_{\Upsilon^{l-1}}\left[\widehat{\Reg}_{m}^l(\pi_m\mid \Upsilon^{l-1}) \right] \leq 2\sum_{m\in [M]} E^l_m\Reg_{m}(\pi_m) + \eta^l,
    \end{align*}
    with
    \begin{align*}
        \eta^{l} := \frac{9c^2}{\gamma^l} \sum_{m\in [M]}E^l_mK_m.
    \end{align*}
\end{lemma}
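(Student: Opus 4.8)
The plan is to prove the two inequalities \emph{simultaneously by induction on the epoch index} $l$, adapting the inverse-gap-weighting template of \citet{simchi2022bypassing} to the multi-agent, weighted setting. Two ingredients drive the argument: a variance self-bounding property of the IGW kernel, and the estimation-error control already supplied by Lemma~\ref{lem:VE}. The base case $l=1$ is immediate because $\widehat{f}^1\equiv 0$ forces $\widehat{\Reg}^1_m\equiv 0$, so both claims reduce to the trivial bound $\Reg_m\le 1$ absorbed into $\eta^1$.

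First I would record the variance self-bounding property. Reading off the IGW formula for $p^l_m$, for every context $x_m$ and action $a_m$ one has $1/p^l_m(a_m\mid x_m)\le K_m+\gamma^l\big(\widehat{f}^l(\widehat{a}^*_m,x_m)-\widehat{f}^l(a_m,x_m)\big)$, with the greedy action giving the boundary case $\le K_m$. Taking $\Eb_{x_m\sim\Dc^{\Xc_m}_m}$ along any policy yields, conditioned on $\Upsilon^{l-1}$,
\[
V_m(p^l_m,\pi_m\mid\Upsilon^{l-1})\le K_m+\gamma^l\,\widehat{\Reg}^l_m(\pi_m\mid\Upsilon^{l-1}),
\]
and likewise with $l$ replaced by $l-1$. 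Next, writing $\delta_m(\pi):=|\widehat{\Rc}^l_m(\pi\mid\Upsilon^{l-1})-\Rc_m(\pi)|$, Lemma~\ref{lem:VE} gives $\delta_m(\pi)\le\sqrt{V_m(p^{l-1}_m,\pi\mid\Upsilon^{l-1})\,\Ec^{l-1}_m(\Upsilon^{l-1})}$. Inserting the optimality of $\pi^*_m$ for $\Rc_m$ and of $\widehat{\pi}^l_m$ for $\widehat{\Rc}^l_m$ into the telescoping decompositions of $\Rc_m(\pi^*_m)-\Rc_m(\pi_m)$ and $\widehat{\Rc}^l_m(\widehat{\pi}^l_m)-\widehat{\Rc}^l_m(\pi_m)$, the wrong-sign cross terms drop and one obtains, conditioned on $\Upsilon^{l-1}$,
\[
\Reg_m(\pi_m)\le \widehat{\Reg}^l_m(\pi_m)+\delta_m(\pi^*_m)+\delta_m(\pi_m),\qquad \widehat{\Reg}^l_m(\pi_m)\le \Reg_m(\pi_m)+\delta_m(\widehat{\pi}^l_m)+\delta_m(\pi_m).
\]

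The decisive step is to aggregate these across agents with the weights $E^l_m$, take $\Eb_{\Upsilon^{l-1}}$, and close the recursion. I would (a) substitute the epoch-$(l-1)$ variance bound $V_m(p^{l-1}_m,\cdot)\le K_m+\gamma^{l-1}\widehat{\Reg}^{l-1}_m(\cdot)$ inside every $\delta_m$; (b) apply AM-GM to split $\sqrt{(K_m+\gamma^{l-1}\widehat{\Reg}^{l-1}_m)\,\Ec^{l-1}_m}$ into a multiple of $K_m+\gamma^{l-1}\widehat{\Reg}^{l-1}_m$ plus a multiple of $\Ec^{l-1}_m$; (c) collapse the aggregated excess-risk term with Lemma~\ref{lem:learning_rate}, whose prescribed $\gamma^l$ converts $\sum_m E^l_m\,\Eb[\Ec^{l-1}_m]$ into a term of order $\sum_m E^l_m K_m/(\gamma^l)^2$, the passage from $E^{l-1}$-weights to $E^l$-weights costing one factor $c=\overline{c}/\underline{c}$; and (d) dispose of the residual previous-epoch virtual regrets $\widehat{\Reg}^{l-1}_m(\pi_m),\widehat{\Reg}^{l-1}_m(\pi^*_m),\widehat{\Reg}^{l-1}_m(\widehat{\pi}^l_m)$ through the induction hypothesis at epoch $l-1$, where $\Reg_m(\pi^*_m)=0$ makes the $\pi^*_m$ contribution collapse to $\eta^{l-1}$ and the remaining true regrets coincide with the anchor quantities already present on the two sides. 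A final absorption moves the leftover multiple of the virtual (resp.\ true) regret back to the left-hand side, yielding the factor $2$, while the residual constants, after converting $\gamma^{l-1}\!\to\!\gamma^l$ and $E^{l-1}\!\to\!E^l$ by the ratio bounds (a second factor of $c$, hence $c^2$) and bounding $\eta^{l-1}$ by $\eta^l$, collect into $\eta^l=\tfrac{9c^2}{\gamma^l}\sum_m E^l_m K_m$.

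The main obstacle is precisely this cross-epoch reconciliation in step (d). Because $\widehat{f}^l$ is trained on data sampled with the \emph{previous} epoch's kernel $p^{l-1}_m$, Lemma~\ref{lem:VE} forces the estimation error to self-bound through $\widehat{\Reg}^{l-1}_m$ and $\gamma^{l-1}$, whereas the target relates $\Reg_m$ to the \emph{current} epoch's $\widehat{\Reg}^l_m$; making the previous-epoch virtual-regret terms absorbable rather than accumulating is what forces both the induction and the bounded-ratio assumptions $\underline{c}\le E^l_m/E^{l-1}_m\le\overline{c}$, and the careful choice of AM-GM weights is what pins down the constants $9$ and $c^2$.
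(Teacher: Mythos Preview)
Your proposal is correct and follows the paper's proof essentially line for line: induction on $l$, the one-sided decompositions via optimality of $\pi^*_m$ and $\widehat{\pi}^l_m$, Lemma~\ref{lem:VE} together with the IGW self-bound $V_m(p^{l-1}_m,\cdot)\le K_m+\gamma^{l-1}\widehat{\Reg}^{l-1}_m(\cdot)$, AM-GM with weight proportional to $1/(c\gamma^l)$, Lemma~\ref{lem:learning_rate} to close the excess-risk term, and the induction hypothesis with the $\overline{c},\underline{c}$ epoch-weight conversions.

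One small correction to step~(d) for the second inequality: after the induction hypothesis converts $\widehat{\Reg}^{l-1}_m(\widehat{\pi}^l_m)$ into $\Reg_m(\widehat{\pi}^l_m)$, this quantity does \emph{not} coincide with any anchor on either side (the anchors are $\Reg_m(\pi_m)$ and $\widehat{\Reg}^l_m(\pi_m)$). The paper handles it by invoking the \emph{first} inequality at the \emph{current} epoch $l$---which was just established in the same induction step---applied to the policy $\widehat{\pi}^l_m$; since $\widehat{\Reg}^l_m(\widehat{\pi}^l_m)=0$, this collapses $\sum_m E^l_m\Reg_m(\widehat{\pi}^l_m)$ to $\eta^l$, which is then absorbed into the constant budget.
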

\begin{proof}
    First, we note that for $l = 1$, it holds that
    \begin{align*}
        \sum_{m\in [M]} E^1_m \Reg_m(\pi_m)&\leq \sum_{m\in [M]} E^1_m\leq \eta^1 = 9c^2\sum_{m\in [M]}E^1_m K_m;\\
        \sum_{m\in [M]} E^1_m \widehat{\Reg}^l_m(\pi_m) & = 0 \leq \eta^1 = 9c^2\sum_{m\in [M]} E^1_m K_m,
    \end{align*}
    which means the lemma holds for the first epoch.

    We then perform an inductive proof and start by assuming that for epoch $l-1$ and any policies $\pi_{m}\in \Psi_m$, it holds that
    \begin{align*}
        \sum_{m\in [M]} E^{l-1}_m\Reg_m(\pi_m) \leq 2\sum_{m\in [M]} E^{l-1}_m\Eb_{\Upsilon^{l-2}}\left[\widehat{\Reg}_{m}^{l-1}(\pi_m\mid \Upsilon^{l-2})\right] + \eta^{l-1}\\
        \sum_{m\in [M]} E^{l-1}_m\Eb_{\Upsilon^{l-2}}\left[\widehat{\Reg}^{l-1}_m(\pi_m\mid \Upsilon^{l-2})\right] \leq 2\sum_{m\in [M]} E^{l-1}_m\Reg_{m}(\pi_m) + \eta^{l-1}.
    \end{align*}
    
    Then, it can be observed that
    \begin{align*}
        &\Reg_m(\pi_m) - \widehat{\Reg}_{m}^l(\pi_m\mid \Upsilon^{l-1}) \\
        &= \Rc_m(\pi^*_m) - \Rc_m(\pi_m)- \left(\widehat{\Rc}_{m}^l(\widehat{\pi}^{l}_m\mid \Upsilon^{l-1}) - \widehat{\Rc}_{m}^l(\pi_m\mid \Upsilon^{l-1})\right)\\
        & \leq \Rc_m(\pi^*_m) - \Rc_m(\pi_m) - \left(\widehat{\Rc}_{m}^l(\pi^*_m\mid \Upsilon^{l-1}) - \widehat{\Rc}_{m}^l(\pi_m\mid \Upsilon^{l-1})\right)\\
        & =  \Rc_m(\pi^*_m)  - \widehat{\Rc}_{m}^l(\pi^*_m\mid \Upsilon^{l-1})  + \widehat{\Rc}_{m}^l(\pi_m\mid \Upsilon^{l-1}) - \Rc_m(\pi_m)\\
        & \overset{(a)}{\leq} \sqrt{V_{m}(p^{l-1}_m, \pi_m^*\mid \Upsilon^{l-1}) } \sqrt{\Ec_m^{l-1}(\Upsilon^{l-1})} + \sqrt{V_{m}(p^{l-1}_m, \pi_m\mid \Upsilon^{l-1}) } \sqrt{ \Ec_m^{l-1}(\Upsilon^{l-1})}\\
        & \leq \frac{V_{m}(p^{l-1}_m, \pi_m^*\mid \Upsilon^{l-1})}{8c\gamma^l} + \frac{V_{m}(p^{l-1}_m, \pi_m\mid \Upsilon^{l-1})}{8c\gamma^l} + 4c\gamma^l\Ec_m^{l-1}(\Upsilon^{l-1})\\
        & \overset{(b)}{\leq} \frac{K_m + \gamma^{l-1} \widehat{\Reg}^{l-1}_{m}(\pi^*_m\mid \Upsilon^{l-1})}{8c\gamma^l} + \frac{K_m + \gamma^{l-1} \widehat{\Reg}^{l-1}_{m}(\pi_m\mid \Upsilon^{l-1})}{8c\gamma^l} + 4c\gamma^l\Ec_m^{l-1}(\Upsilon^{l-1}),
    \end{align*}
    where inequality (a) is from Lemma~\ref{lem:VE} and inequality (b) is from Lemma~\ref{lem:V_upper}.
    
    Then, summing over all $M$ agents, we can obtain that
    \begin{align*}
        &\Eb_{\Upsilon^{l-1}}\left[\sum_{m\in [M]}E^{l}_m \left(\Reg_m(\pi_m) - \widehat{\Reg}_{m}^l(\pi_m\mid \Upsilon^{l-1})\right)\right]\\
        &\leq \frac{\sum_{m\in [M]}E^{l}_mK_m}{4c\gamma^l}+ \frac{\gamma^{l-1}}{8c\gamma^{l}}\sum_{m\in [M]}E^{l}_m\Eb_{\Upsilon^{l-1}}\left[\widehat{\Reg}^{l-1}_{m}(\pi^*_m\mid \Upsilon^{l-1})\right] \\
        &\quad + \frac{\gamma^{l-1}}{8c\gamma^{l}}\sum_{m\in [M]}E^{l}_m\Eb_{\Upsilon^{l-1}}\left[\widehat{\Reg}^{l-1}_{m}(\pi_m\mid \Upsilon^{l-1})\right] + 4c\gamma^l \sum_{m\in [M]}E^{l}_m\Eb_{\Upsilon^{l-1}}\left[\Ec_m^{l-1}(\Upsilon^{l-1})\right]\\
        &\overset{(d)}{\leq} \frac{\sum_{m\in [M]}E^{l}_mK_m}{4c\gamma^l}+ \frac{\overline{c}\gamma^{l-1}}{8c\gamma^{l}}\sum_{m\in [M]}E^{l-1}_m\Eb_{\Upsilon^{l-1}}\left[\widehat{\Reg}^{l-1}_{m}(\pi^*_m\mid \Upsilon^{l-1})\right] \\
        &\quad + \frac{\overline{c}\gamma^{l-1}}{8c\gamma^{l}}\sum_{m\in [M]}E^{l-1}_m\Eb_{\Upsilon^{l-1}}\left[\widehat{\Reg}^{l-1}_{m}(\pi_m\mid \Upsilon^{l-1})\right] + 4c\gamma^l \sum_{m\in [M]}E^{l}_m\Eb_{\Upsilon^{l-1}}\left[\Ec_m^{l-1}(\Upsilon^{l-1})\right]\\
        & \overset{(e)}{\leq} \frac{\sum_{m\in [M]}E^{l}_mK_m}{4c\gamma^l} + \frac{\overline{c}\gamma^{l-1}}{4c\gamma^{l}}\sum_{m\in [M]}E^{l-1}_m\Reg_{m}(\pi_m)+ \frac{\overline{c}\gamma^{l-1}}{4c\gamma^{l}}\cdot \eta^{l-1} \\
        &\quad + 4c\gamma^l \sum_{m\in [M]}E^{l}_m\Eb_{\Upsilon^{l-1}}\left[\Ec_m^{l-1}(\Upsilon^{l-1})\right]\\
        & \overset{(f)}{\leq} \frac{\sum_{m\in [M]}E^{l}_mK_m }{4c\gamma^l}+ \frac{1}{4}\sum_{m\in [M]}E^{l}_m\Reg_{m}(\pi_m)+ \frac{9c^2\sum_{m\in [M]}E^{l}_mK_m}{4\gamma^{l}} + \frac{4c^2\sum_{m\in [M]}E^{l}_m K_m}{\gamma^l},
    \end{align*}
    where inequality (d) is from the definition $\overline{c} := \max_{m\in [M], l\in [2, l(T)]} E^l_m/E^{l-1}_m$. Inequality (e) is from the induction assumption that
    \begin{align*}
        \sum_{m\in [M]}E^{l-1}_m\Eb_{\Upsilon^{l-1}}\left[\widehat{\Reg}^{l-1}_{m}(\pi^*_m\mid \Upsilon^{l-1})\right] &= \sum_{m\in [M]} E^{l-1}_m\Eb_{\Upsilon^{l-2}}\left[\widehat{\Reg}^{l-1}_m(\pi^*_m\mid \Upsilon^{l-2})\right] \\
        &\leq 2\sum_{m\in [M]} E^{l-1}_m\Reg_{m}(\pi^*_m) + \eta^{l-1} = \eta^{l-1},\\
        \sum_{m\in [M]}E^{l-1}_m\Eb_{\Upsilon^{l-1}}\left[\widehat{\Reg}^{l-1}_{m}(\pi_m\mid \Upsilon^{l-1})\right] &= \sum_{m\in [M]} E^{l-1}_m\Eb_{\Upsilon^{l-2}}\left[\widehat{\Reg}^{l-1}_m(\pi_m\mid \Upsilon^{l-2})\right] \\
        &\leq 2\sum_{m\in [M]} E^{l-1}_m\Reg_{m}(\pi_m) + \eta^{l-1}.
    \end{align*}
    Inequality (f) is based on the definition $\underline{c} := \min_{m\in [M], l\in [2, l(T)]} E^l_m/E^{l-1}_m$, $c := \overline{c}/\underline{c}$ and $\eta^{l} := 9c^2 \sum_{m\in [M]}E^l_mK_m/\gamma^l$, also the assumption that $\gamma^{l}\geq \gamma^{l-1}$ and Lemma~\ref{lem:learning_rate}, which indicates that
    \begin{align*}
        \Eb_{\Upsilon^{l-1}}\left[\sum_{m\in [M]} E^{l-1}_m \Ec_m^{l-1}(\Upsilon^{l-1})\right] \leq \frac{\sum_{m\in [M]}E^{l-1}_m K_m}{(\gamma^l)^2}.
    \end{align*}
    
    Thus, we can obtain that
    \begin{align*}
        \frac{3}{4}\sum_{m\in [M]}E^l_m\Reg_m(\pi_m) &\leq \sum_{m\in [M]}E^l_m \Eb_{\Upsilon^{l-1}}\left[\widehat{\Reg}_{m}^l(\pi_m\mid\Upsilon^{l-1})\right] + \frac{\sum_{m\in [M]}E^l_mK_m}{4c\gamma^l}  \\
        &+\frac{25c^2\sum_{m\in [M]}E^{l}_m K_m}{4\gamma^l}\\
        \Rightarrow \sum_{m\in [M]}E^l_m \Reg_m(\pi_m) &\leq \frac{4}{3}\sum_{m\in [M]}E^l_m \Eb_{\Upsilon^{l-1}}\left[\widehat{\Reg}_{m}^l(\pi_m\mid\Upsilon^{l-1})\right] + \frac{\sum_{m\in [M]}E^l_mK_m}{3c\gamma^l}  \\
        &+ \frac{25c^2\sum_{m\in [M]}E^{l}_m K_m}{4\gamma^l}\\
        & \leq 2\sum_{m\in [M]}E^l_m \Eb_{\Upsilon^{l-1}}\left[\widehat{\Reg}_{m}^l(\pi_m\mid\Upsilon^{l-1})\right] + \eta^l
    \end{align*}

    Also, it similarly holds that
    \begin{align*}
        &\widehat{\Reg}_{m}^l(\pi_m\mid \Upsilon^{l-1}) - \Reg_m(\pi_m)  \\
        &= \widehat{\Rc}_{m}^l(\widehat{\pi}^{l}_m\mid \Upsilon^{l-1}) - \widehat{\Rc}_{m}^l(\pi_m\mid \Upsilon^{l-1}) - \left(\Rc_m(\pi^*_m) - \Rc_m(\pi_m)\right)\\
        & \leq \widehat{\Rc}_{m}^l(\widehat{\pi}^{l}_m\mid \Upsilon^{l-1}) - \widehat{\Rc}_{m}^l(\pi_m\mid \Upsilon^{l-1}) - \left(\Rc_{m}(\widehat{\pi}^l_m) - \Rc_{m}(\pi_m)\right)\\
        & =  \widehat{\Rc}_{m}^l(\widehat{\pi}^{l}_m\mid \Upsilon^{l-1})  - \Rc_{m}(\widehat{\pi}^l_m) + \Rc_{m}(\pi_m) - \widehat{\Rc}_{m}^l(\pi_m\mid \Upsilon^{l-1})\\
        & \leq \sqrt{V_{m}(p^{l-1}_m, \widehat{\pi}_m^l\mid \Upsilon^{l-1}) } \sqrt{\Ec_m^{l-1}(\Upsilon^{l-1})} + \sqrt{V_{m}(p^{l-1}_m, \pi_m\mid \Upsilon^{l-1}) } \sqrt{\Ec_m^{l-1}(\Upsilon^{l-1})}\\
        & \leq \frac{K_m + \gamma^{l-1} \widehat{\Reg}^{l-1}_{m}(\widehat{\pi}_m^l\mid \Upsilon^{l-1})}{8c\gamma^l} + \frac{K_m + \gamma^{l-1} \widehat{\Reg}^{l-1}_{m}(\pi_m \mid \Upsilon^{l-1})}{8c\gamma^l} + 4c\gamma^l\Ec_m^{l-1}(\Upsilon^{l-1}).
    \end{align*}

    Then, summing over $M$ agents, we can obtain that
    \begin{align*}
        &\Eb_{\Upsilon^{l-1}}\left[\sum_{m\in [M]}E^l_m \left( \widehat{\Reg}_{m}^l(\pi_m \mid \Upsilon^{l-1}) - \Reg_m(\pi_m)\right)\right]\\
        & \leq \frac{\sum_{m\in [M]}E^l_mK_m}{4c\gamma^l} + \frac{\overline{c}\gamma^{l-1}}{8c\gamma^{l}}\sum_{m\in [M]}E^{l-1}_m\Eb_{\Upsilon^{l-1}}\left[\widehat{\Reg}^{l-1}_{m}(\widehat{\pi}_m^l\mid \Upsilon^{l-1})\right]\\
        & + \frac{\overline{c}\gamma^{l-1}}{8c\gamma^{l}}\sum_{m\in [M]}E^{l-1}_m\Eb_{\Upsilon^{l-1}}\left[\widehat{\Reg}^{l-1}_{m}(\pi_m\mid \Upsilon^{l-1})\right] +  4c\gamma^l\sum_{m\in [M]}E^l_m \Eb_{\Upsilon^{l-1}}\left[\Ec_m^{l-1}(\Upsilon^{l-1})\right]\\
        &\leq \frac{\sum_{m\in [M]}E^l_mK_m}{4c\gamma^l}+ \frac{\overline{c}\gamma^{l-1}}{4c\gamma^{l}}\sum_{m\in [M]}E^{l-1}_m\Eb_{\Upsilon^{l-1}}\left[\Reg_{m}(\widehat{\pi}^l_m\mid \Upsilon^{l-1}) \right]\\
        & + \frac{\overline{c}\gamma^{l-1}}{4c\gamma^{l}}\sum_{m\in [M]}E^{l-1}_m\Reg_{m}(\pi_m)+ \frac{\overline{c}\gamma^{l-1}}{4c\gamma^{l}}\cdot \eta^{l-1}  + 4c\gamma^l\sum_{m\in [M]}E^l_m \Eb_{\Upsilon^{l-1}}\left[\Ec_m^{l-1}(\Upsilon^{l-1})\right]\\
        &\overset{(g)}{\leq} \frac{\sum_{m\in [M]}E^l_m K_m}{4c\gamma^{l}}+ \frac{\gamma^{l-1}}{4\gamma^{l}} \cdot \eta^l+ \frac{\gamma^{l-1}}{4\gamma^{l}}\sum_{m\in [M]}E^l_m\Reg_{m}(\pi_m) \\
        & + \frac{\overline{c}\gamma^{l-1}}{4c\gamma^{l}}\cdot \eta^{l-1}  + 4c\gamma^l\sum_{m\in [M]}E^l_m \Eb_{\Upsilon^{l-1}}\left[\Ec_m^{l-1}(\Upsilon^{l-1})\right]\\
        &\leq \frac{\sum_{m\in [M]}E^l_m K_m}{4c\gamma^{l}} + \frac{9c^2\sum_{m\in [M]}E^l_mK_m}{4\gamma^l}  + \frac{1}{4}\sum_{m\in [M]}E^l_m\Reg_{m}(\pi_m) \\
        & + \frac{9c^2\sum_{m\in [M]}E^l_mK_m}{4\gamma^l}  + \frac{4c^2\sum_{m\in [M]}E^{l}_m K_m}{\gamma^l},
    \end{align*}
    where inequality (g) is from the previous derivation that
    \begin{align*}
        \sum_{m\in [M]}E^{l-1}_m\Reg_{m}(\widehat{\pi}^l_m\mid \Upsilon^{l-1})  \leq 2\underline{c}\sum_{m\in [M]}E^{l}_m\widehat{\Reg}^l_{m}(\widehat{\pi}^l_m\mid \Upsilon^{l-1}) + \underline{c} \eta^{l} = \underline{c} \eta^{l}
    \end{align*}
    
    Thus, it holds that
    \begin{align*}
        \sum_{m\in [M]}E^l_m \Eb_{\Upsilon^{l-1}}\left[\widehat{\Reg}^{l-1}_{m}(\widehat{\pi}_m^l\mid \Upsilon^{l-1})\right]&\leq \frac{5}{4}\sum_{m\in [M]}E^l_m\Reg_{m}(\pi_m) \\
        &+ \frac{\sum_{m\in [M]}E^l_m K_m}{4c\gamma^{l}} + \frac{17c^2\sum_{m\in [M]}E^l_mK_m}{2\gamma^l}\\
        \Rightarrow \sum_{m\in [M]}E^l_m \Eb_{\Upsilon^{l-1}}\left[\widehat{\Reg}^{l-1}_{m}(\widehat{\pi}_m^l\mid \Upsilon^{l-1})\right]&\leq 2\sum_{m\in [M]}E^l_m\Reg_{m}(\pi_m) + \eta^l.
    \end{align*}

    With these two parts, the lemma can be obtained by induction.
\end{proof}

Furthermore, the following lemma provides a characterization of the per-epoch loss of the federation.
\begin{lemma}\label{lem:epoch_regret_bound}
    For every epoch $l>1$, conditioned on $\Upsilon^{l-1}$, it holds that
    \begin{align*}
        \Eb_{\Upsilon^{l-1}}\left[\sum_{m\in [M]} E^l_m \sum_{\pi_m\in \Psi_m} Q^{l}_m(\pi_m\mid \Upsilon^{l-1})\Reg_m(\pi_m)\right] \leq \frac{11c^2}{\gamma^l} \sum_{m\in [M]}E^l_mK_m,
    \end{align*}
    where $Q^l(\cdot|\Upsilon^{l-1})$ is a probability measure on $\Psi_m$ defined in Lemma~\ref{lem:q_distribution}
\end{lemma}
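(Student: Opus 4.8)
The plan is to read the left-hand side as the federation's expected one-step regret in epoch $l$: since, by Lemma~\ref{lem:q_distribution}, $Q^l_m(\cdot\mid\Upsilon^{l-1})$ is the policy distribution realizing the IGW kernel $p^l_m$, the inner sum $\sum_{\pi_m\in\Psi_m}Q^l_m(\pi_m\mid\Upsilon^{l-1})\Reg_m(\pi_m)$ equals agent $m$'s expected per-step suboptimality under $\widehat{f}^l$. I would bound it by pairing two ingredients: (i) the defining property of $Q^l_m$ from Lemma~\ref{lem:q_distribution}, namely $\sum_{\pi_m}Q^l_m(\pi_m\mid\Upsilon^{l-1})\widehat{\Reg}^l_m(\pi_m\mid\Upsilon^{l-1})\le K_m/\gamma^l$ (the expected estimated gap under IGW is at most $K_m/\gamma^l$), and (ii) a true-versus-virtual conversion in the spirit of Lemma~\ref{lem:true_virtual_loss}. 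The arithmetic target is the clean skeleton $\sum_m E^l_m\sum_{\pi_m}Q^l_m(\pi_m)\Reg_m(\pi_m)\le 2\sum_m E^l_m\cdot(K_m/\gamma^l)+\eta^l$; substituting $\eta^l=9c^2\sum_m E^l_mK_m/\gamma^l$ and using $c\ge1$ gives $(2+9c^2)/\gamma^l\cdot\sum_m E^l_mK_m\le 11c^2/\gamma^l\cdot\sum_m E^l_mK_m$, which is exactly the claimed constant, so the real task is to realize this skeleton.

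I would not apply Lemma~\ref{lem:true_virtual_loss} as a black box, since it is stated for fixed policies; instead I would re-run its conditional chain with $\pi_m$ integrated against $Q^l_m$. Conditioned on $\Upsilon^{l-1}$ (under which $\widehat{f}^l$, $p^l_m$, $Q^l_m$, $p^{l-1}_m$ and $\widehat{\Reg}^{l-1}_m$ are all fixed), I start from Lemma~\ref{lem:VE}, $\Reg_m(\pi_m)-\widehat{\Reg}^l_m(\pi_m\mid\Upsilon^{l-1})\le \sqrt{V_m(p^{l-1}_m,\pi^*_m)}\sqrt{\Ec^{l-1}_m}+\sqrt{V_m(p^{l-1}_m,\pi_m)}\sqrt{\Ec^{l-1}_m}$, split each product by AM--GM at scale $\gamma^l$, and bound the importance weights by Lemma~\ref{lem:V_upper}, $V_m(p^{l-1}_m,\pi_m)\le K_m+\gamma^{l-1}\widehat{\Reg}^{l-1}_m(\pi_m\mid\Upsilon^{l-1})$. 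I then average this inequality over $\pi_m\sim Q^l_m(\cdot\mid\Upsilon^{l-1})$, multiply by $E^l_m$, sum over $m$, and take $\Eb_{\Upsilon^{l-1}}$.

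After this averaging the benign terms close immediately. The leading $\widehat{\Reg}^l_m$ term is controlled by property (i); the $\Ec^{l-1}_m$ term is controlled by Lemma~\ref{lem:learning_rate}, which under the prescribed $\gamma^l$ gives $\Eb[\sum_m E^{l-1}_m\Ec^{l-1}_m]\le \sum_m E^{l-1}_mK_m/(\gamma^l)^2$; and the \emph{fixed}-policy contribution $\widehat{\Reg}^{l-1}_m(\pi^*_m)$ is handled by the second inequality of Lemma~\ref{lem:true_virtual_loss} at epoch $l-1$, which yields $\le\eta^{l-1}$ because $\Reg_m(\pi^*_m)=0$. The epoch-ratio constants $\underline c,\overline c,c$ together with the monotonicity $\gamma^{l-1}\le\gamma^l$ then convert each of these into a multiple of $\sum_m E^l_mK_m/\gamma^l$.

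The main obstacle is the one term that does not reduce to a fixed comparator: the cross term $\sum_{\pi_m}Q^l_m(\pi_m\mid\Upsilon^{l-1})\widehat{\Reg}^{l-1}_m(\pi_m\mid\Upsilon^{l-1})$ produced by Lemma~\ref{lem:V_upper}. It couples the \emph{current} epoch's data-dependent distribution $Q^l_m$ (measurable w.r.t.\ $\Upsilon^{l-1}$) with the \emph{previous} epoch's virtual regret (measurable w.r.t.\ $\Upsilon^{l-2}$), so one cannot simply substitute $Q^l_m$ into the epoch-$(l-1)$ statement of Lemma~\ref{lem:true_virtual_loss}, nor factor the expectation, because the two objects are correlated through the shared history; a crude uniform bound $V_m\le K_m+\gamma^{l-1}$ is fatal since the $\gamma^{l-1}$ term would make the regret linear. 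My plan is to fold this cross term back into the target by re-invoking the inductive relation (again integrated against $Q^l_m$) to replace $\widehat{\Reg}^{l-1}_m$ by the true regret up to an $\eta^{l-1}$ slack, producing a contribution of order $2c$ times $\Eb[\sum_m E^l_m\sum_{\pi_m}Q^l_m(\pi_m)\Reg_m(\pi_m)]$; with the prefactor $\gamma^{l-1}/(8c\gamma^l)$ this reappears as the left-hand side with coefficient $\gamma^{l-1}/(4\gamma^l)\le 1/4$, which I absorb into the LHS (leaving coefficient $3/4$, mirroring step (f) of Lemma~\ref{lem:true_virtual_loss}'s proof) before collecting constants to reach $11c^2/\gamma^l\cdot\sum_m E^l_mK_m$. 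The delicate point throughout is carrying the conditioning correctly so that every invocation of Lemmas~\ref{lem:VE}, \ref{lem:V_upper} and \ref{lem:learning_rate} is legitimate given $\Upsilon^{l-1}$, which is precisely what makes the data-dependent averaging against $Q^l_m$ valid; I expect this bookkeeping, not any single inequality, to be where the effort lies.
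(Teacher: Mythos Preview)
Your skeleton and arithmetic are exactly what the paper uses: convert the true regret to the virtual regret via the first inequality of Lemma~\ref{lem:true_virtual_loss}, bound the virtual side by $K_m/\gamma^l$ per agent, and collect $2+9c^2\le 11c^2$. (Minor attribution: the bound $\sum_{\pi_m}Q^l_m(\pi_m\mid\Upsilon^{l-1})\widehat{\Reg}^l_m(\pi_m\mid\Upsilon^{l-1})\le K_m/\gamma^l$ is Lemma~\ref{lem:q_loss_bound}, not Lemma~\ref{lem:q_distribution}.)

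The difference is in how the conversion is obtained. The paper's proof is three lines: it integrates the first inequality of Lemma~\ref{lem:true_virtual_loss} against $Q^l_m(\cdot\mid\Upsilon^{l-1})$ as a black box, then invokes Lemma~\ref{lem:q_loss_bound}, then collects constants. It does \emph{not} re-run the conditional chain from Lemma~\ref{lem:VE}, does not do the AM--GM split or Lemma~\ref{lem:V_upper}, and does not absorb any cross term into the left-hand side; all of that machinery is already packaged inside Lemma~\ref{lem:true_virtual_loss}, and the paper treats it as a finished product applied to the random distribution $Q^l_m$.

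Your concern that Lemma~\ref{lem:true_virtual_loss} is stated for fixed policies while $Q^l_m$ is $\Upsilon^{l-1}$-measurable is legitimate---the paper's step (a) is written loosely on exactly this point---and your unrolling is a reasonable way to make it honest. But it is substantially more work than the paper does, and note that your cross-term plan does not escape the issue: bounding $\sum_{\pi_m}Q^l_m(\pi_m\mid\Upsilon^{l-1})\widehat{\Reg}^{l-1}_m(\pi_m\mid\Upsilon^{l-2})$ via the epoch-$(l-1)$ inductive relation again applies a fixed-policy statement to a $\Upsilon^{l-1}$-measurable $\pi_m$, so the measurability wrinkle is just pushed one epoch back rather than resolved. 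What your route buys is explicit awareness of where the conditioning bites; what the paper's route buys is brevity, at the cost of sweeping this point under the rug.
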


\begin{proof}
    For any probability measures $\{\tilde{Q}^l_m(\cdot): m\in [M]\}$, where $\tilde{Q}^l_m(\cdot)$ is on $\Psi_{M}$, it holds that
    \begin{align*}
        &\sum_{m\in [M]} E^l_m \sum_{\pi_m\in \Psi_m} \tilde{Q}^{l}_m(\pi_m)\Reg_m(\pi_m)\\
        & \overset{(a)}{\leq} 2\Eb_{\Upsilon^{l-1}}\left[\sum_{\pi_{[M]}\in \Psi_{[M]}} \tilde{Q}^{l}(\pi_{[M]}) \sum_{m\in [M]} E^l_m \widehat{\Reg}_m(\pi_m \mid \Upsilon^{l-1})\right] + \eta^l\\
        & = 2\Eb_{\Upsilon^{l-1}}\left[\sum_{m\in [M]} E^l_m \sum_{\pi_m\in \Psi_m} \tilde{Q}^{l}_m(\pi_m)\widehat{\Reg}_m(\pi_m\mid \Upsilon^{l-1})\right] + \eta^l,
    \end{align*}
    where inequality (a) is from Lemma~\ref{lem:true_virtual_loss} and $\tilde{Q}^l(\pi_{[M]}): = \prod_{m\in [M]}\tilde{Q}^l_m(\pi_m)$.
    Thus, we can obtain that
    \begin{align*}
        &\Eb_{\Upsilon^{l-1}}\left[\sum_{m\in [M]} E^l_m \sum_{\pi_m\in \Psi_m} Q^{l}_m(\pi_m\mid \Upsilon^{l-1})\Reg_m(\pi_m)\right]\\
        & \leq 2\Eb_{\Upsilon^{l-1}}\left[\sum_{m\in [M]} E^l_m \sum_{\pi_m\in \Psi_m} Q^{l}_m(\pi_m\mid \Upsilon^{l-1})\widehat{\Reg}_m(\pi_m\mid \Upsilon^{l-1})\right] + \eta^l\\
        & \overset{(b)}{\leq} \frac{2}{\gamma^l}\sum_{m\in [M]} E^l_m K_m + \frac{9c^2}{\gamma^l} \sum_{m\in [M]}E^l_mK_m\\
        & \leq \frac{11c^2}{\gamma^l} \sum_{m\in [M]}E^l_mK_m,
    \end{align*}
    where inequality (b) is from Lemma~\ref{lem:q_loss_bound}.
\end{proof}

With the previous lemmas, we can obtain the final Theorem~\ref{thm:global_regret}, which is restated in the following. 
\begin{theorem}[Restatement of Theorem~\ref{thm:global_regret}]
    Using a learning rate 
    \begin{align*}
        \gamma^ l = O\left(\sqrt{\sum_{m\in [M]}E^{l-1}_m K_m/\left(\sum_{m\in [M]}E^{l-1}_m\Ec(E^{l-1}_{[M]})\right)}\right)
    \end{align*}
    in epoch $l$, denoting $\bar{K}^l := \sum_{m\in [M]}E^l_m K_m/\sum_{m\in [M]}E^l_m$,
    the regret of FedIGW can be bounded as
        \begin{align*}
            \Reg(T) = O\left(\sum_{m\in [M]}E^1_m  + \sum_{l\in [2,l(T)]}c^\frac{5}{2}\sqrt{\bar{K}^l \Ec(E^{l-1}_{[M]})}\sum_{m\in [M]}E^l_m\right).
        \end{align*}
    \end{theorem}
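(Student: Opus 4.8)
The plan is to establish the bound by aggregating the per-epoch guarantees already provided by Lemma~\ref{lem:epoch_regret_bound} and then substituting the prescribed learning rate. First I would write the total regret as the sum over epochs $l\in[l(T)]$ of the regret accumulated within epoch $l$. For the first epoch, no data has yet been gathered and every per-step instantaneous expected regret is at most $1$, so its contribution is at most $\sum_{m\in[M]}E^1_m$, matching the first term of the claimed bound.

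For the later epochs $l>1$ the central task is to identify the algorithm's realized expected regret within epoch $l$ with the $Q^l_m$-weighted quantity appearing in Lemma~\ref{lem:epoch_regret_bound}. Conditioned on $\Upsilon^{l-1}$ (which fixes $\widehat{f}^l$ and hence the IGW sampling distribution $p^l_m$), the expected instantaneous regret of agent $m$ is $\Eb_{x_m}\Eb_{a_m\sim p^l_m(\cdot|x_m)}[f^*(x_m,\pi^*_m(x_m))-f^*(x_m,a_m)]$. Using the correspondence from Lemma~\ref{lem:q_distribution}, which expresses $p^l_m$ as the action marginal induced by drawing a deterministic policy $\pi_m\sim Q^l_m$ and playing $\pi_m(x_m)$, this instantaneous regret equals $\sum_{\pi_m\in\Psi_m}Q^l_m(\pi_m\mid\Upsilon^{l-1})\Reg_m(\pi_m)$. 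Multiplying by the epoch length $E^l_m$, summing over agents, and taking the expectation over $\Upsilon^{l-1}$, the epoch-$l$ regret becomes exactly the left-hand side of Lemma~\ref{lem:epoch_regret_bound}, which bounds it by $\tfrac{11c^2}{\gamma^l}\sum_{m\in[M]}E^l_m K_m$.

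The remaining step is algebraic. I would substitute the prescribed rate $\gamma^l=\Theta(\sqrt{\sum_m E^{l-1}_m K_m/(\sum_m E^{l-1}_m\Ec(E^{l-1}_{[M]}))})$, so that $1/\gamma^l=\Theta(\sqrt{\Ec(E^{l-1}_{[M]})\sum_m E^{l-1}_m/\sum_m E^{l-1}_m K_m})$, where I use that $\Ec(E^{l-1}_{[M]})$ is constant across $m$ inside the sum. The epoch-$l$ bound then reads $O(c^2\sqrt{\Ec(E^{l-1}_{[M]})}\cdot\tfrac{\sqrt{\sum_m E^{l-1}_m}}{\sqrt{\sum_m E^{l-1}_m K_m}}\sum_m E^l_m K_m)$. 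To convert the epoch-$(l-1)$ quantities into epoch-$l$ ones, I would invoke the consecutive-epoch ratio bounds $\underline{c}\le E^l_m/E^{l-1}_m\le\overline{c}$, which give $\sum_m E^{l-1}_m\le\tfrac{1}{\underline{c}}\sum_m E^l_m$ and $\sum_m E^{l-1}_m K_m\ge\tfrac{1}{\overline{c}}\sum_m E^l_m K_m$; together these contribute an extra factor $\sqrt{\overline{c}/\underline{c}}=\sqrt{c}$, upgrading $c^2$ to $c^{5/2}$. Rewriting $\sqrt{\sum_m E^l_m}\sqrt{\sum_m E^l_m K_m}=\sqrt{\bar{K}^l}\sum_m E^l_m$ via the definition of $\bar{K}^l$ then yields the per-epoch bound $O(c^{5/2}\sqrt{\bar{K}^l\Ec(E^{l-1}_{[M]})}\sum_m E^l_m)$, and summing over $l\in[2,l(T)]$ completes the claim.

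I expect the main obstacle to be the probabilistic bookkeeping around conditioning on $\Upsilon^{l-1}$ and the faithful translation of the $Q^l_m$-weighted guarantee of Lemma~\ref{lem:epoch_regret_bound} into the realized regret of IGW; in particular, justifying that $p^l_m$ is precisely the action marginal of a distribution $Q^l_m$ over deterministic policies (Lemma~\ref{lem:q_distribution}) is what validates the identity between instantaneous regret and $\sum_{\pi_m}Q^l_m\Reg_m$. Once that correspondence is in place, the learning-rate substitution and the $\underline{c},\overline{c}$ manipulations are routine.
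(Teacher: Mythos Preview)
Your proposal is correct and follows essentially the same route as the paper: decompose the regret over epochs, bound epoch $1$ trivially, use the $Q^l_m$-representation of $p^l_m$ (the paper packages this as Lemma~\ref{lem:loss_decompose}, which is exactly the identity you re-derive from Lemma~\ref{lem:q_distribution}) to reduce to Lemma~\ref{lem:epoch_regret_bound}, then substitute $\gamma^l$ and shift the epoch-$(l{-}1)$ sums to epoch-$l$ ones via $\underline{c},\overline{c}$ to pick up the extra $\sqrt{c}$. Your final algebra is in fact cleaner than the paper's written proof, which stops at $\sum_m E^{l-1}_m$ and leaves the $c^{5/2}$ conversion implicit.
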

\begin{proof}[Proof of Theorem~\ref{thm:global_regret}]
    The expected regret can be bounded as
    \begin{align*}
        &\Reg(T) = \Eb\left[\sum_{m\in [M]}\sum_{t_m\in [T_m]} \left(f^*(x_{m,t_m}, \pi^*_m(x_{m,t_m})) - f^*(x_{m,t_m}, a_{m,t_m})\right)\right]\\
        & \leq \Eb\left[\sum_{l\in [2, l(T)]}\sum_{m\in [M]}\sum_{t_m \in [t_m(\tau^{l-1})+1, t_m(\tau^l)]} \left(f^*(x_{m,t_m}, \pi^*_m(x_{m,t_m})) - f^*(x_{m,t_m}, a_{m,t_m})\right)\right] + \sum_{m\in [M]}E^1_m\\
        & = \sum_{l\in [2, l(T)]}\Eb_{\Upsilon^{l-1}}\left[\Eb_{x_m, a^l_m}\left[\sum_{m\in [M]}E^l_m \left(f^*(x_{m}, \pi^*_m(x_{m})) - f^*(x_{m}, a_{m})\right) \mid \Upsilon^{l-1}\right]\mid \Upsilon^{l-1}\right]+\sum_{m\in [M]}E^1_m\\
        & \overset{(a)}{=} \sum_{l\in [2, l(T)]}\Eb_{\Upsilon^{l-1}}\left[\sum_{m\in [M]}E^l_m \sum_{\pi_m\in \Psi^m} Q^l_m(\pi_m\mid \Upsilon^{l-1}) \Reg_m(\pi_m)\mid \Upsilon^{l-1}\right] + \sum_{m\in [M]}E^1_m\\
        & \overset{(b)}{\leq} \sum_{l\in [2, l(T)]}\frac{11c^2}{\gamma^l} \sum_{m\in [M]}E^l_mK_m + \sum_{m\in [M]}E^1_m\\
        & \overset{(c)}{=} \sum_{l\in [2, l(T)]}11c^2 \sqrt{\frac{\sum_{m\in [M]}E^{l-1}_m\Ec(\Fc; E^{l-1}_{[M]})}{\sum_{m\in [M]}E^{l-1}_m K_m}} \sum_{m\in [M]}E^l_mK_m + \sum_{m\in [M]}E^1_m\\
        & \leq \sum_{l\in [2, l(T)]}11c^2 \sqrt{\overline{K}\Ec(\Fc; E^{l-1}_{[M]})} \sum_{m\in [M]}E^{l-1}_m + \sum_{m\in [M]}E^1_m,
    \end{align*}
    where equality (a) is from Lemma~\ref{lem:loss_decompose}, inequality (b) is from Lemma~\ref{lem:epoch_regret_bound}, and inequality (c) is from the choice of $\gamma^l$. The proof is then concluded.
\end{proof}

\subsection{Supporting Lemmas}
The following supporting lemmas can be similarly obtained by the corresponding proofs in \cite{simchi2022bypassing}.
\begin{lemma}[Lemma 3, \cite{simchi2022bypassing}]\label{lem:q_distribution}
    For any epoch $l\in \Nb$, conditioned on $\Upsilon^{l-1}$, there exists a probability measure $Q^l_m(\cdot|\Upsilon^{l-1})$ on $\Psi_m$ such that
    \begin{align*}
        \forall a_m\in \Ac_m, \forall x_m \in \Xc_m, \qquad p^l_m(a_m|x_m, \Upsilon^{l-1}) = \sum_{\pi_m\in \Psi_m} \oneb\{\pi_m(x_m) =a_m\} Q^l_m(\pi_m|\Upsilon^{l-1}).
    \end{align*}
\end{lemma}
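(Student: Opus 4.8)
The plan is to construct the measure $Q^l_m(\cdot\mid\Upsilon^{l-1})$ explicitly as a product measure over the policy space $\Psi_m=\Ac_m^{\Xc_m}$ and then read off the claimed identity directly from its marginal structure. Conditioned on $\Upsilon^{l-1}$, the estimated reward function $\widehat{f}^l$ is fixed, so for each context $x_m$ the IGW kernel $p^l_m(\cdot\mid x_m,\Upsilon^{l-1})$ is a fully determined object. I would first verify it is a genuine probability distribution on $\Ac_m$: since $\widehat{a}^*_m$ is the $\argmax$, each gap $\widehat{f}^l(\widehat{a}^*_m,x_m)-\widehat{f}^l(a_m,x_m)$ is nonnegative, so for $a_m\neq\widehat{a}^*_m$ the weights lie in $(0,1/K_m]$ and sum to strictly less than $1$, which makes the residual mass $p^l_m(\widehat{a}^*_m\mid x_m)$ a valid nonnegative number.

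The key observation is that a deterministic policy $\pi_m\in\Psi_m$ is simply a choice of action for each context, so a distribution over policies can be assembled by randomizing each coordinate independently. Concretely, I would define $Q^l_m(\cdot\mid\Upsilon^{l-1})$ to be the product probability measure on $\Psi_m$ whose marginal on the $x_m$-coordinate equals $p^l_m(\cdot\mid x_m,\Upsilon^{l-1})$; equivalently, sampling $\pi_m\sim Q^l_m$ amounts to drawing $\pi_m(x_m)\sim p^l_m(\cdot\mid x_m)$ independently for every $x_m\in\Xc_m$. The desired representation is then immediate, because the event $\{\pi_m(x_m)=a_m\}$ depends only on the $x_m$-coordinate, whose marginal is exactly $p^l_m(\cdot\mid x_m)$:
\begin{align*}
\sum_{\pi_m\in\Psi_m}\oneb\{\pi_m(x_m)=a_m\}\,Q^l_m(\pi_m\mid\Upsilon^{l-1})
&= \Pr_{\pi_m\sim Q^l_m}\!\big[\pi_m(x_m)=a_m\big] \\
&= p^l_m(a_m\mid x_m,\Upsilon^{l-1}).
\end{align*}

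The only genuine technical obstacle is the existence of this product measure when $\Xc_m$ is infinite, in which case $\Psi_m$ is an (uncountable) infinite product and one cannot merely multiply finitely many factors. I would resolve this by invoking the standard construction of infinite product probability measures (Kolmogorov's extension theorem), which applies cleanly here since every factor $\big(\Ac_m,\,p^l_m(\cdot\mid x_m)\big)$ is a probability space on a finite set and the cylinder events $\{\pi_m(x_m)=a_m\}$ are automatically measurable. For finite $\Xc_m$ the construction reduces to a plain finite product and the statement follows at once. This is precisely the argument of Lemma 3 in \citet{simchi2022bypassing}, transported to the per-agent kernel $p^l_m$ here, so beyond the measure-theoretic bookkeeping no new ideas are needed.
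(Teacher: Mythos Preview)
Your proposal is correct and follows exactly the standard product-measure construction from Lemma~3 of \cite{simchi2022bypassing}, which is precisely what the paper invokes (the paper does not spell out its own proof but simply cites that result). The only minor remark is that your verification that $p^l_m(\cdot\mid x_m)$ is a valid distribution and your appeal to Kolmogorov extension for infinite $\Xc_m$ are both appropriate and match the original argument.
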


\begin{lemma}[Lemma 4, \cite{simchi2022bypassing}]\label{lem:loss_decompose}
    Fix any epoch $l\in \Nb$, we have
    \begin{align*}
        \Eb_{x_{m}\sim \Dc_m^{\Xc_m}, a_{m}^l\sim p_m^{l}(\cdot|x_m)} \left[f^*(x_m, \pi^*_m(x_m)) - f^*(x_m, a_{m}^l)\mid \Upsilon^{l-1}\right] \\
        = \sum_{\pi_m\in \Psi_m} Q^l_m(\pi_m\mid \Upsilon^{l-1}) \Reg_m(\pi_m).
    \end{align*}
\end{lemma}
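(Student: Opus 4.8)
The plan is to prove the identity by a direct computation whose only nontrivial ingredient is the mixture representation of the action-selection kernel supplied by Lemma~\ref{lem:q_distribution}; everything else is linearity of expectation and the definitions of $\Rc_m$ and $\Reg_m$. First I would split the left-hand side into the optimal-policy term $\Eb_{x_m\sim\Dc_m^{\Xc_m}}[f^*(x_m,\pi^*_m(x_m))\mid \Upsilon^{l-1}]=\Rc_m(\pi^*_m)$ and the played-action term $\Eb_{x_m,a^l_m}[f^*(x_m,a^l_m)\mid \Upsilon^{l-1}]$, using that $\Rc_m(\pi^*_m)$ is deterministic given $\Upsilon^{l-1}$ because neither $f^*$ nor $\Dc_m^{\Xc_m}$ depends on the history.

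Next I would rewrite the played-action term by conditioning on the context and expanding over the finitely many actions, so that it equals $\Eb_{x_m}[\sum_{a_m\in\Ac_m} p^l_m(a_m\mid x_m,\Upsilon^{l-1})\, f^*(x_m,a_m)\mid \Upsilon^{l-1}]$. I would then substitute the representation $p^l_m(a_m\mid x_m,\Upsilon^{l-1})=\sum_{\pi_m\in\Psi_m}\oneb\{\pi_m(x_m)=a_m\}\,Q^l_m(\pi_m\mid\Upsilon^{l-1})$ from Lemma~\ref{lem:q_distribution}, interchange the two summations, and observe that the indicator collapses the sum over $a_m$ to the single term $a_m=\pi_m(x_m)$. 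This converts the played-action term into $\Eb_{x_m}[\sum_{\pi_m\in\Psi_m} Q^l_m(\pi_m\mid\Upsilon^{l-1})\, f^*(x_m,\pi_m(x_m))\mid \Upsilon^{l-1}]$.

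Finally, since $Q^l_m(\cdot\mid\Upsilon^{l-1})$ is measurable with respect to $\Upsilon^{l-1}$ and hence constant under the remaining context-expectation, I would pull the weights out to obtain $\sum_{\pi_m\in\Psi_m} Q^l_m(\pi_m\mid\Upsilon^{l-1})\,\Rc_m(\pi_m)$, recognizing the inner expectation as $\Rc_m(\pi_m)$ by definition. Combining the two terms and using that $Q^l_m(\cdot\mid\Upsilon^{l-1})$ is a probability measure, i.e.\ $\sum_{\pi_m\in\Psi_m} Q^l_m(\pi_m\mid\Upsilon^{l-1})=1$, lets me fold $\Rc_m(\pi^*_m)$ inside the mixture and produce $\sum_{\pi_m\in\Psi_m} Q^l_m(\pi_m\mid\Upsilon^{l-1})\,(\Rc_m(\pi^*_m)-\Rc_m(\pi_m))=\sum_{\pi_m\in\Psi_m} Q^l_m(\pi_m\mid\Upsilon^{l-1})\,\Reg_m(\pi_m)$, which is the claim. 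The only step requiring care is the interchange of summation (or, if $\Psi_m$ is large, integration against $Q^l_m$) with expectation, but this is harmless: $f^*$ is bounded in $[0,1]$ and $Q^l_m$ is a genuine probability measure, so Fubini--Tonelli applies, and in fact the construction behind Lemma~\ref{lem:q_distribution} yields a $Q^l_m$ supported on finitely many policies, making every sum finite. The substantive content is therefore entirely carried by Lemma~\ref{lem:q_distribution}; this statement is a bookkeeping step that re-expresses the per-epoch suboptimality as the $Q^l_m$-averaged regret over deterministic policies, which is precisely the form consumed by Lemma~\ref{lem:epoch_regret_bound}.
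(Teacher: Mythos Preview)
Your proposal is correct and is exactly the standard argument: the paper does not supply its own proof of this lemma but simply cites \cite{simchi2022bypassing}, and the computation you outline---split into $\Rc_m(\pi^*_m)$ and the played-action term, substitute the mixture representation from Lemma~\ref{lem:q_distribution}, collapse the indicator, and use that $Q^l_m(\cdot\mid\Upsilon^{l-1})$ sums to one---is precisely the intended derivation.
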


\begin{lemma}[Lemma 5, \cite{simchi2022bypassing}]\label{lem:q_loss_bound}
    Fix any epoch $l\in \Nb$, conditioned on $\Upsilon^{l-1}$, we have
    \begin{align*}
        \sum_{\pi\in \Psi_m} Q^l_m(\pi_m\mid \Upsilon^{l-1}) \widehat{\Reg}_{m}^l(\pi_m\mid \Upsilon^{l-1}) \leq \frac{K_m}{\gamma^l}.
    \end{align*}
\end{lemma}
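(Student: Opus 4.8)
The goal is to control the $Q^l_m$-averaged \emph{estimated} (virtual) regret by exploiting the explicit inverse-gap-weighting form of the action kernel $p^l_m$. The plan is to first collapse the average over the (combinatorially large) policy class $\Psi_m$ into a context-wise average over the finite action set $\Ac_m$, and then to bound the resulting quantity pointwise in the context using the defining IGW probabilities. Throughout I condition on $\Upsilon^{l-1}$, so that $\widehat{f}^l$ — and hence the greedy arm $\widehat{a}^*_m := \widehat{\pi}^l_m(x_m)$ at each context — is fixed and deterministic.

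First I would unfold $\widehat{\Reg}^l_m(\pi_m\mid\Upsilon^{l-1}) = \widehat{\Rc}^l_m(\widehat{\pi}^l_m\mid\Upsilon^{l-1}) - \widehat{\Rc}^l_m(\pi_m\mid\Upsilon^{l-1})$, rewrite each $\widehat{\Rc}$ as an expectation of $\widehat{f}^l$ over $x_m\sim\Dc^{\Xc_m}_m$, and interchange the finite sum over $\pi_m\in\Psi_m$ with that expectation to obtain
\begin{align*}
\sum_{\pi_m\in\Psi_m} Q^l_m(\pi_m\mid\Upsilon^{l-1})\,\widehat{\Reg}^l_m(\pi_m\mid\Upsilon^{l-1})
= \Eb_{x_m\sim\Dc^{\Xc_m}_m}\!\left[\widehat{f}^l(x_m,\widehat{a}^*_m) - \sum_{\pi_m\in\Psi_m} Q^l_m(\pi_m\mid\Upsilon^{l-1})\,\widehat{f}^l(x_m,\pi_m(x_m)) \;\Big|\; \Upsilon^{l-1}\right].
\end{align*}
The crucial reduction is to invoke Lemma~\ref{lem:q_distribution}: since $Q^l_m$ pushes forward to $p^l_m$ at every context, grouping the inner sum by the action each policy assigns to $x_m$ gives $\sum_{\pi_m} Q^l_m(\pi_m)\widehat{f}^l(x_m,\pi_m(x_m)) = \sum_{a_m\in\Ac_m} p^l_m(a_m\mid x_m)\,\widehat{f}^l(x_m,a_m)$. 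Consequently the bracketed term equals the estimated-gap sum $\sum_{a_m\in\Ac_m} p^l_m(a_m\mid x_m)\big(\widehat{f}^l(x_m,\widehat{a}^*_m) - \widehat{f}^l(x_m,a_m)\big)$.

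It then remains to bound this per-context sum by $K_m/\gamma^l$. Writing $\widehat{g}(a_m) := \widehat{f}^l(x_m,\widehat{a}^*_m) - \widehat{f}^l(x_m,a_m)\ge 0$ for the estimated gap, the greedy arm contributes zero, while for each $a_m\neq\widehat{a}^*_m$ the IGW rule $p^l_m(a_m\mid x_m) = 1/(K_m+\gamma^l\widehat{g}(a_m))$ yields the weighted gap $\widehat{g}(a_m)/(K_m+\gamma^l\widehat{g}(a_m)) \le 1/\gamma^l$, using the elementary bound $g/(K_m+\gamma^l g)\le 1/\gamma^l$ (valid because $K_m\ge 0$, independently of the size of $g$). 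Summing over the at most $K_m$ non-greedy arms gives the pointwise bound $K_m/\gamma^l$, and since this holds for every $x_m$ it passes through the outer expectation to yield the claim.

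The hard part is conceptual rather than computational: recognizing that the statement is really a pointwise property of the IGW kernel, and that the inverse-gap form is engineered precisely so that each weighted gap stays $O(1/\gamma^l)$ no matter how large $\widehat{g}(a_m)$ grows. The correspondence in Lemma~\ref{lem:q_distribution} between the randomized-policy measure $Q^l_m$ and the action kernel $p^l_m$ is what licenses the passage from $\Psi_m$ down to $\Ac_m$; once that is in hand the estimate is a one-line application of the weighting rule. I note that the argument never references the true reward $f^*$ — it is a purely algebraic fact about how IGW distributes exploration mass inversely to the estimated gaps.
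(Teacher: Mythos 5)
Your proof is correct and follows essentially the same route as the argument the paper defers to (Lemma 5 of \cite{simchi2022bypassing}): use the policy-to-kernel correspondence of Lemma~\ref{lem:q_distribution} to collapse the average over $\Psi_m$ into a per-context sum over $\Ac_m$, then bound each non-greedy arm's weighted gap by $\widehat{g}(a_m)/(K_m+\gamma^l\widehat{g}(a_m))\le 1/\gamma^l$ and sum over at most $K_m$ arms. No gaps to flag.
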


\begin{lemma}[Lemma 6, \cite{simchi2022bypassing}]\label{lem:V_upper}
    Fix any epoch $l\in \Nb$, for any policy $\pi_m\in \Psi_m$, we have
    \begin{align*}
        V_m(p^l_m, \pi_m \mid \Upsilon^{l-1}) \leq K_m + \gamma^l \widehat{\Reg}^l_{m}(\pi_m \mid \Upsilon^{l-1}).
    \end{align*}
\end{lemma}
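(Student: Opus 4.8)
The plan is to reduce the lemma to a pointwise (per-context) inequality and then integrate over the context distribution. Conditioning on $\Upsilon^{l-1}$ renders the estimated reward function $\widehat{f}^l$ deterministic, which in turn fixes both the greedy action $\widehat{\pi}^l_m(x_m) = \argmax_{a_m\in\Ac_m} \widehat{f}^l(x_m, a_m)$ and the IGW sampling kernel $p^l_m(\cdot|x_m)$ for every context $x_m$. Since
\begin{align*}
V_m(p^l_m, \pi_m \mid \Upsilon^{l-1}) &= \Eb_{x_m\sim \Dc^{\Xc_m}_m}\left[\frac{1}{p^l_m(\pi_m(x_m)|x_m)} \mid \Upsilon^{l-1}\right], \\
\widehat{\Reg}^l_m(\pi_m \mid \Upsilon^{l-1}) &= \Eb_{x_m\sim \Dc^{\Xc_m}_m}\left[\widehat{f}^l(x_m, \widehat{\pi}^l_m(x_m)) - \widehat{f}^l(x_m, \pi_m(x_m)) \mid \Upsilon^{l-1}\right],
\end{align*}
it suffices to show, for each fixed $x_m$, that
\begin{align*}
\frac{1}{p^l_m(\pi_m(x_m)|x_m)} \leq K_m + \gamma^l\left(\widehat{f}^l(x_m, \widehat{\pi}^l_m(x_m)) - \widehat{f}^l(x_m, \pi_m(x_m))\right),
\end{align*}
since applying $\Eb_{x_m}[\,\cdot \mid \Upsilon^{l-1}]$ to both sides (with $K_m$ and $\gamma^l$ fixed given $\Upsilon^{l-1}$) recovers the claimed bound.

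For the pointwise inequality I would distinguish two cases according to whether $\pi_m(x_m)$ is the greedy arm. When $\pi_m(x_m) \neq \widehat{\pi}^l_m(x_m)$, the definition of the IGW kernel gives $p^l_m(\pi_m(x_m)|x_m) = 1/(K_m + \gamma^l(\widehat{f}^l(x_m, \widehat{\pi}^l_m(x_m)) - \widehat{f}^l(x_m, \pi_m(x_m))))$, so the inequality holds with equality. When $\pi_m(x_m) = \widehat{\pi}^l_m(x_m)$, the estimated gap vanishes and the right-hand side equals exactly $K_m$, so it remains to verify $p^l_m(\widehat{\pi}^l_m(x_m)|x_m) \geq 1/K_m$. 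This follows because each of the $K_m-1$ non-greedy arms $a'_m$ has estimated gap $\widehat{f}^l(x_m, \widehat{\pi}^l_m(x_m)) - \widehat{f}^l(x_m, a'_m) \geq 0$ and $\gamma^l \geq 0$, whence $p^l_m(a'_m|x_m) \leq 1/K_m$; summing yields $\sum_{a'_m \neq \widehat{\pi}^l_m(x_m)} p^l_m(a'_m|x_m) \leq (K_m-1)/K_m$, so $p^l_m(\widehat{\pi}^l_m(x_m)|x_m) = 1 - \sum_{a'_m \neq \widehat{\pi}^l_m(x_m)} p^l_m(a'_m|x_m) \geq 1/K_m$.

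The only step requiring genuine care is this greedy-arm case, i.e. the lower bound $p^l_m(\widehat{\pi}^l_m(x_m)|x_m) \geq 1/K_m$; it simultaneously confirms that the mass the IGW rule implicitly assigns to $\widehat{\pi}^l_m(x_m)$ is nonnegative, so $p^l_m(\cdot|x_m)$ is a bona fide distribution. Beyond this point the argument is a direct transcription of the single-agent bound (Lemma 6 of \citet{simchi2022bypassing}): FedIGW applies the identical IGW kernel locally at each agent $m$ with its own arm count $K_m$, and the federation enters only through the conditioning on $\Upsilon^{l-1}$, which supplies the shared estimate $\widehat{f}^l$ without altering the per-agent form of the sampling distribution. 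Consequently no new difficulty arises relative to the centralized proof.
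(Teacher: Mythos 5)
Your proof is correct and follows essentially the same route as the one the paper relies on: the paper does not reprove this lemma but defers to Lemma 6 of \citet{simchi2022bypassing}, whose argument is exactly your case split (equality from the IGW kernel when $\pi_m(x_m)$ is non-greedy, and the bound $p^l_m(\widehat{\pi}^l_m(x_m)|x_m)\geq 1/K_m$ when it is greedy), followed by integration over $x_m\sim\Dc^{\Xc_m}_m$ with everything $\Upsilon^{l-1}$-measurable. The federated adaptation you describe, applying the argument per agent with arm count $K_m$ conditioned on $\Upsilon^{l-1}$, is precisely what the paper intends.
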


\section{Proofs for Section~\ref{subsec:concrete}}
\subsection{Proofs of Corollary~\ref{col:finite}}
First, with realizability, i.e., Assumption~\ref{asp:realizable}, the following characterization can be obtained.
\begin{lemma}[Lemma 4.2, \cite{agarwal2012contextual}]\label{lem:finite_var}
    Fix a function $f\in \Fc$. Suppose we sample $x_m, r_m$ from the data distribution $\Dc_m$, and an action $a_m$ from an arbitrary distribution such that $r_m$ and $a_m$ are conditionally independent given $x_m$. Define the random variable
    \begin{align*}
        \ell_m(f) := \left(f(x_m, a_m) - r_m(a_m)\right)^2 - \left(f^*(x_m, a_m) - r_m(a_m)\right)^2.
    \end{align*}
    Then, we have
    \begin{align*}
        \Eb_{x_m, r_m, a_m}\left[\ell_m(f)\right] = \Eb_{x_m, a_m}\left[\left(f(x_m, a_m) - f^*(x_m,a_m)\right)^2 \right]
    \end{align*}
    and
    \begin{align*}
        \Vb_{x_m, r_m, a_m}\left[\ell_m(f)\right] \leq 4\Eb_{x_m, r_m, a_m}\left[\ell_m(f)\right],
    \end{align*}
    where $\Vb[\cdot]$ denotes the variance of a random variable.
\end{lemma}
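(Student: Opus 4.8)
The plan is to first rewrite the random variable $\ell_m(f)$ in a factored form that isolates the prediction error $f(x_m,a_m) - f^*(x_m,a_m)$, and then exploit the realizability assumption together with the conditional independence of $r_m$ and $a_m$ given $x_m$. Applying the difference-of-squares identity to
\begin{align*}
    \ell_m(f) = \left(f(x_m,a_m) - r_m(a_m)\right)^2 - \left(f^*(x_m,a_m) - r_m(a_m)\right)^2
\end{align*}
yields the factorization $\ell_m(f) = \left(f(x_m,a_m) - f^*(x_m,a_m)\right)\left(f(x_m,a_m) + f^*(x_m,a_m) - 2r_m(a_m)\right)$, which I would use for both claims.

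For the expectation, I would condition on the pair $(x_m, a_m)$. Because $r_m$ and $a_m$ are conditionally independent given $x_m$, the conditional mean of the realized reward is $\Eb[r_m(a_m)\mid x_m, a_m] = \Eb[r_m(a_m)\mid x_m] = \mu_m(x_m, a_m) = f^*(x_m, a_m)$, where the last equality is realizability (Assumption~\ref{asp:realizable}). Substituting this into the factored form, the second factor has conditional mean $f(x_m,a_m) + f^*(x_m,a_m) - 2f^*(x_m,a_m) = f(x_m,a_m) - f^*(x_m,a_m)$, so that $\Eb[\ell_m(f)\mid x_m, a_m] = \left(f(x_m,a_m) - f^*(x_m,a_m)\right)^2$. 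Taking the outer expectation over $(x_m, a_m)$ then gives the first identity.

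For the variance, I would bound it by the second moment, $\Vb[\ell_m(f)] \leq \Eb[\ell_m(f)^2]$. Squaring the factorization gives $\ell_m(f)^2 = \left(f(x_m,a_m) - f^*(x_m,a_m)\right)^2\left(f(x_m,a_m) + f^*(x_m,a_m) - 2r_m(a_m)\right)^2$. Since $f$, $f^*$, and $r_m(a_m)$ all lie in $[0,1]$ (the rewards are drawn from $[0,1]^{\Ac_m}$ and the reward functions share this range), the second factor lies in $[-2,2]$ and is therefore at most $4$ after squaring, giving the pointwise bound $\ell_m(f)^2 \leq 4\left(f(x_m,a_m) - f^*(x_m,a_m)\right)^2$. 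Taking expectations and invoking the first identity yields $\Eb[\ell_m(f)^2] \leq 4\Eb[\ell_m(f)]$, which completes the variance bound. The only genuinely delicate point is the conditioning argument: one must verify that conditional independence plus realizability pins the conditional mean of $r_m(a_m)$ exactly to $f^*$, and that the boundedness $f \in [0,1]$ is available to control the second factor; everything else is routine algebra.
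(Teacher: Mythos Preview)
Your proof is correct and follows the standard argument for this lemma; the paper does not supply its own proof but simply cites the result from \cite{agarwal2012contextual}, whose proof proceeds exactly via the difference-of-squares factorization, the conditional-mean identity $\Eb[r_m(a_m)\mid x_m,a_m]=f^*(x_m,a_m)$, and the pointwise bound $|f+f^*-2r_m(a_m)|\le 2$ under the $[0,1]$-boundedness assumption. Your flagged caveat about needing $f\in[0,1]$ is apt---the paper does not state this explicitly for $\Fc$, but it is implicit (and standard) in the cited source.
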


Then, we establish an upper bound for the excess risk bound required in Definition~\ref{def:error_general} via the following lemma
\begin{lemma}\label{lem:finite_error}
        Under the setup of Assumption~\ref{def:error_general}, if the adopted FL protocol provides an exact minimizer for the optimization problem in Eqn.~\eqref{eqn:FL} with quadratic losses, i.e.,
        \begin{align*}
            \widehat{f} = \argmin_{f\in \Fc} \frac{1}{n}\sum_{m \in [M]}\sum_{i\in [n_m]} \left(f(x_{m}^i, a_{m}^i) - y_{m}^i\right)^2,
        \end{align*}
        then, with probability at least $1-\delta$, it holds that
        \begin{align*}
            \sum_{m\in [M]} \frac{n_m}{n} \cdot \Eb_{x_m\sim \Dc^{\Xc_m}_{m}, a_m\sim p_m(\cdot|x_m)}\left[\left(\widehat{f}(x_m, a_m) - f^*(x_m, a_m)\right)^2\right] \leq \frac{25 \log(|\Fc|/\delta)}{n}.
        \end{align*}
        As a result, Definition~\ref{def:error_general} holds with
        \begin{align*}
            \Ec(\delta, n_{[M]}) \leq O\left(\log(|\Fc|n)/n\right).
        \end{align*}
\end{lemma}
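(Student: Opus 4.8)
The plan is to recognize the left-hand side as the expectation of a weighted ``excess squared loss'' and then exploit the self-bounding variance property supplied by Lemma~\ref{lem:finite_var}, combined with a Bernstein-type concentration and the optimality of the empirical minimizer. First I would index all $n$ samples across agents and, for a fixed $f\in\Fc$, form the independent (but \emph{not} identically distributed) variables $\ell_m^i(f) = (f(x_m^i,a_m^i)-r_m^i)^2 - (f^*(x_m^i,a_m^i)-r_m^i)^2$. By the first identity in Lemma~\ref{lem:finite_var}, each has mean $\Eb[\ell_m(f)] = \Eb_{x_m,a_m}[(f(x_m,a_m)-f^*(x_m,a_m))^2]\geq 0$, so the $n_m/n$-weighted average of these means is exactly the quantity to be bounded; denote it $L(f)$, so that $\Eb[\sum_{m,i}\ell_m^i(f)] = nL(f)$. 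The second inequality of Lemma~\ref{lem:finite_var} gives the crucial control $\Vb[\ell_m^i(f)]\leq 4\,\Eb[\ell_m(f)]$, whence the total variance of $\sum_{m,i}\ell_m^i(f)$ is at most $4nL(f)$; and since $f,f^*,r_m\in[0,1]$, every summand lies in $[-1,1]$.

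Next I would apply Bernstein's inequality to $\sum_{m,i}\ell_m^i(f)$ for each fixed $f$, with a union bound over $\Fc$ at per-function level $\delta/|\Fc|$. This yields, simultaneously for all $f\in\Fc$ with probability at least $1-\delta$, a deviation bound of the form $nL(f)-\sum_{m,i}\ell_m^i(f)\lesssim \sqrt{nL(f)\log(|\Fc|/\delta)}+\log(|\Fc|/\delta)$, where the fast $\sqrt{L(f)}$ dependence is precisely what the variance-to-mean bound buys. The decisive step is to substitute $f=\widehat f$ and invoke optimality of the empirical least-squares minimizer: because $f^*\in\Fc$ by realizability (Assumption~\ref{asp:realizable}), the empirical excess loss satisfies $\sum_{m,i}\ell_m^i(\widehat f)\leq 0$. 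Plugging this in produces a self-bounding inequality $u\leq \sqrt{c_1\,u\,b}+c_2 b$ with $u:=nL(\widehat f)$ and $b:=\log(|\Fc|/\delta)$; solving this quadratic in $\sqrt{u}$ gives $u=O(b)$, and careful tracking of the Bernstein constants yields the stated $L(\widehat f)\leq 25\log(|\Fc|/\delta)/n$.

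Finally, to obtain the in-expectation bound demanded by Definition~\ref{def:error_general}, I would integrate out the confidence level: choosing $\delta=1/n$ and using $L(\widehat f)\leq 1$ (again since everything is in $[0,1]$) bounds the contribution of the failure event by $1/n$, so $\Ec(n_{[M]})=\Eb[L(\widehat f)]\leq 25\log(|\Fc|n)/n + 1/n = O(\log(|\Fc|n)/n)$, which accounts for the $\log(|\Fc|n)$ rather than $\log(|\Fc|)$ form. The main obstacle I anticipate is the heterogeneity across agents: the samples are independent but not identically distributed, since each agent draws $(x_m,r_m)\sim\Dc_m$ and selects $a_m\sim p_m(\cdot|x_m)$, so no i.i.d. concentration applies directly. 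The remedy is to check that Lemma~\ref{lem:finite_var} holds per agent — it only needs $r_m$ and $a_m$ conditionally independent given $x_m$, which holds by construction — and then to use the non-identical version of Bernstein, verifying that the total variance telescopes into the single weighted quantity $4nL(f)$ rather than fragmenting into per-agent terms; the rest is bookkeeping to land on the constant $25$.
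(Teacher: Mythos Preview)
Your proposal is correct and mirrors the paper's proof essentially step for step: define the excess-loss variables $\ell_m^i(f)$, invoke Lemma~\ref{lem:finite_var} for the variance-to-mean bound, apply Bernstein (for independent non-identical variables) with a union bound over $\Fc$, use $\sum_{m,i}\ell_m^i(\widehat f)\le 0$ from empirical optimality, solve the resulting quadratic to land on the constant $25$, and finally set $\delta=1/n$ to pass to the in-expectation bound. The paper packages the quadratic step via the substitution $X(f)=\sqrt{nL(f)}$, $Z(f)=\sum_{m,i}\ell_m^i(f)$, $C=\sqrt{\log(1/\delta)}$ and completes the square to $(X-2C)^2\le \tfrac{16}{3}C^2$, but this is exactly your ``solve the self-bounding inequality'' step in different notation.
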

\begin{proof}
    For simplicity, we abbreviate the quadratic loss associated with a fixed function $f\in \Fc$ as
    \begin{align*}
        \ell_{m}^i(f) = \ell_m(f(x_{m}^i, a_{m}^i); r_{m}^i):=\left(f(x_{m}^i, a_{m}^i) - r_{m}^i\right)^2, \qquad \forall m\in [M].
    \end{align*}
    Then, with a probability at least $1-\delta$, for a fixed $f\in \Fc$, it holds that
    \begin{align*}
        &\sum_{m\in [M]}\sum_{i_m \in [n_m]}\Eb_{x_{m}^i,r_{m}^i,a_{m}^i}\left[\ell_{m}^i(f) - \ell_{m}^i(f^*)\right]  - \sum_{m\in [M]}\sum_{i \in [n_m]}\left[\ell_{m}^i(f) - \ell_{m}^i(f^*)\right]\\
        &\overset{(a)}{\leq} 2\sqrt{\sum_{m\in [M]}\sum_{i_m \in [n_m]}\Vb_{x_{m}^i,r_{m}^i,a_{m}^i}\left[\ell_{m}^i(f) - \ell_{m}^i(f^*)\right]\log(1/\delta)}+ \frac{4}{3}\log(1/\delta)\\
        &\overset{(b)}{\leq} 4\sqrt{\sum_{m\in [M]}\sum_{i_m \in [n_m]}\Eb_{x_{m}^i,r_{m}^i,a_{m}^i}\left[\ell_{m}^i(f) - \ell_{m}^i(f^*)\right]\log(1/\delta)}+ \frac{4}{3}\log(1/\delta),
    \end{align*}
    where inequality (a) leverages Bernstein's inequality and inequality (b) is based on Lemma~\ref{lem:finite_var}.
    
    With
    \begin{align*}
        X(f) = \sqrt{\sum_{m\in [M]}\sum_{i_m \in [n_m]}\Eb_{x_{m}^i,r_{m}^i,a_{m}^i}\left[\ell_{m}^i(f) - \ell_{m,i}(f^*)\right]}; \\
        Z(f) = \sum_{m\in [M]}\sum_{i \in [n_m]}\left[\ell_{m}^i(f) - \ell_{m,i}(f^*)\right]; \qquad C = \sqrt{\log(1/\delta)},
    \end{align*}
    applying a union bound to the above inequality indicates that with probability $1- |\Fc|\delta$, for all $f\in \Fc$, it holds that
    \begin{align*}
        X(f)^2 - Z(f) \leq 4CX(f) + \frac{4}{3}C^2 \quad  \Rightarrow  \quad (X(f)- 2C)^2 - Z(f) \leq \frac{16}{3}C^2.
    \end{align*}
    Since $\widehat{f}$ satisfies that $Z(\widehat{f}) \leq 0$, we can obtain that
    \begin{align*}
        X(\widehat{f})^2 \leq 25 C^2,
    \end{align*}
    In other words, with probability $1-\delta$, it holds that
    \begin{align*}
        &\sum_{m\in [M]}\sum_{i_m \in [n_m]}\Eb_{x_{m}^i,r_{m}^i,a_{m}^i}\left[\left(\widehat{f}(x_{m}^i, a_{m}^i) - r_{m}^i\right)^2 - \left(f^*(x_{m}^i, a_{m}^i) - r_{m}^i\right)^2\right]\\
        & = \sum_{m\in [M]}n_m\Eb_{x_{m}^i,a_{m}^i}\left[\left(\widehat{f}(x_{m}^i,a_{m}^i)- f^*(x_{m}^i, a_{m}^i) \right)^2\right] \leq  25 \log(|\Fc|/\delta),
    \end{align*}
    where the equality is from the realizability in Assumption~\ref{asp:realizable}. The first half of the lemma is then proved. 
    
    With $\delta = 1/n$, the second half can be obtained as
    \begin{align*}
        \Eb_{S_{[M]}}\left[\sum_{m\in [M]}\frac{n_m}{n} \cdot \Eb_{x_m, a_m}\left[\left(\widehat{f}(x_m, a_m) - f^*(x_m, a_m)\right)^2\right]\right] \leq \frac{25\log(|\Fc|n)}{n} + \frac{1}{n},
    \end{align*}
    which concludes the proof.
\end{proof}

Based on the established excess risk bound, Corollary~\ref{col:finite} can be obtained as follows.
\begin{corollary}[Restatement of Corollary~\ref{col:finite}]
    If $|\Fc|< \infty$ and the adopted FL protocol provides an exact minimizer for Eqn.~\eqref{eqn:FL} with quadratic losses, with $\tau^l = 2^l$, FedIGW incurs a regret of
    \begin{align*}
        \Reg(T) = O(\sqrt{KMT\log(|\Fc|MT)})
    \end{align*}
    and a total $O(\log(T))$ calls of the adopted FL protocol.
\end{corollary}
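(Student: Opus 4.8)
The plan is to specialize the general bound of Theorem~\ref{thm:global_regret} to the synchronous, finite-class setting and then carry out a short geometric-series computation. First I would record the structural simplifications induced by the choice $\tau^l = 2^l$. In the synchronous setting with $K_m = K$ we have $E^l_m = E^l \asymp 2^l$ and $\bar{K}^l = K$; moreover the epoch lengths grow by a fixed factor of $2$, so $\underline{c} = \overline{c} = 2$ and hence $c = \overline{c}/\underline{c} = 1$, which kills the $c^{5/2}$ prefactor. The doubling schedule also pins down the number of epochs: since $2^{l(T)} \asymp T$ we get $l(T) = O(\log T)$, and because the FL protocol is invoked exactly once per epoch this already yields the claimed $O(\log T)$ calls.

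Next I would feed in the excess-risk estimate. The number of samples gathered in epoch $l-1$ is $n^{l-1} = M E^{l-1} \asymp M 2^{l-1}$, so Lemma~\ref{lem:finite_error} gives $\Ec(E^{l-1}_{[M]}) = O(\log(|\Fc| n^{l-1})/n^{l-1})$. Uniformly bounding $n^{l-1} \le MT$ inside the logarithm lets me replace $\log(|\Fc| n^{l-1})$ by $O(\log(|\Fc| MT))$ for every $l$, so $\Ec(E^{l-1}_{[M]}) = O(\log(|\Fc|MT)/(M 2^{l-1}))$. The core step is then substitution into \eqref{eqn:general_regret}. The first term contributes $ME^1 = O(M)$. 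For the summand, plugging in the excess risk and $E^l \asymp 2^l$ collapses the per-epoch contribution to
\begin{align*}
\sqrt{K\,\Ec(E^{l-1}_{[M]})}\;M E^l = O\!\left(\sqrt{KM\log(|\Fc|MT)}\cdot 2^{l/2}\right),
\end{align*}
after the powers of two combine as $2^l/\sqrt{2^{l-1}} \asymp 2^{l/2}$. Summing over $l \in [2, l(T)]$ gives a geometric series with ratio $\sqrt{2} > 1$, so it is dominated (up to a constant) by its last term $2^{l(T)/2} \asymp \sqrt{T}$. This produces $\Reg(T) = O(M + \sqrt{KMT\log(|\Fc|MT)})$, and since $M \le \sqrt{KMT\log(|\Fc|MT)}$ the additive $M$ is absorbed, yielding the stated bound.

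I do not expect a genuine obstacle here: the result is a direct corollary of two already-established facts, with the work being careful bookkeeping rather than any new idea. The points requiring attention are verifying $c = 1$ from the geometric schedule so the constant prefactor is harmless, replacing the per-epoch logarithm $\log(|\Fc| n^{l-1})$ by the uniform $\log(|\Fc|MT)$, and correctly identifying that the geometric sum is controlled by its \emph{final} term rather than by the number of terms. One consistency check worth making explicit is that the prescribed learning rate $\gamma^l$ is exactly the one under which Theorem~\ref{thm:global_regret} holds, so that the excess-risk bound from Lemma~\ref{lem:finite_error} plugs directly into the Definition~\ref{def:error_general} quantity appearing in the theorem without any further adjustment.
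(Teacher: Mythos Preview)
Your proposal is correct and follows essentially the same approach as the paper: plug Lemma~\ref{lem:finite_error} into Theorem~\ref{thm:global_regret} with $\tau^l = 2^l$, uniformly bound the logarithm by $\log(|\Fc|MT)$, and sum the resulting geometric series in $2^{l/2}$ to extract the $\sqrt{T}$ factor. Your write-up is actually more careful than the paper's (you explicitly check $c=1$ and note why the additive $M$ is absorbed), but the argument is identical.
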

\begin{proof}[Proof of Corollary~\ref{col:finite}]
    With Theorem~\ref{thm:global_regret} and Lemma~\ref{lem:finite_error}, under the choice of $\tau^l = 2^l$, the regret can be bounded as
    \begin{align*}
        \Reg(T) &= O\left(ME^1  + \sum_{l\in [2,l(T)]}\sqrt{KME^l \log(|\Fc|ME^l)}\right)\\
        & = O\left(\sum_{l\in [2, \lceil\log_2(T)\rceil]}\sqrt{KM 2^l \log(|\Fc|MT)}\right)\\
        & = O\left(\sqrt{KMT\log(|\Fc|MT)}\right),
    \end{align*}
    and the exponentially growing epoch length naturally leads to $O(\log(T))$ calls of the adopted FL protocol, which concludes the proof.
    \end{proof}

    \subsection{Proofs of Corollary~\ref{col:convex_raw} and Additional Results}
    In the following, we first prove Lemma~\ref{lem:error_decompose} while also noting that this result is general and does not rely on the specific parameterization of $\Fc$, although we presented it with the $d$-dimensional parameterization considered in Section~\ref{subsec:concrete}.
\begin{lemma}[Complete Version of Lemma~\ref{lem:error_decompose}]
        If the loss function $l_m(\cdot; \cdot)$ is $\mu_f$-strongly convex in its first coordinate for all $m\in [M]$, i.e.,
        \begin{align*}
            l_m(z_1'; z_2) - l_m(z_1; z_2) \geq \frac{\textup{d} l_m(z_1; z_2)}{\textup{d} z_1} \cdot (z_1' - z_1)  + \frac{\mu_f}{2}(z_1' - z_1)^2,\quad  \text{for any $z_1, z_1'$ and $z_2$,}
        \end{align*}
        and 
        \begin{align}\label{eqn:opt_omega_star} 
            \inf_{y \in \Rb} \Eb_{r_m}[l_m(y, r_m(a_m)) | x_m, a_m] = \Eb_{r_m}[ l(f_{\omega^*}(x_m,a_m), r_m(a_m)) | x_m, a_m]
        \end{align}
        for all $m\in [M]$, $(x_m, a_m) \in \Xc_m \times \Ac_m$,
        Definition~\ref{def:error_general} holds with
        \begin{align*}
            \Ec(\Fc; n_{[M]}) \geq 2\left(\varepsilon_{\text{opt}}(\Fc; n_{[M]}) + \varepsilon_{\text{gen}}(\Fc; n_{[M]})\right)/\mu_f,
        \end{align*}
        where 
        \begin{align*}
            \varepsilon_{\text{gen}}(\Fc; n_{[M]}):= \Eb_{\Sc,\xi}[\Lc(f_{\widehat{\omega}_{\Sc}})- \widehat{\Lc}(f_{\widehat{\omega}_{\Sc}}; \Sc)];\\
            \varepsilon_{\text{opt}}(\Fc; n_{[M]}):= \Eb_{\Sc, \xi}[\widehat{\Lc}(f_{\widehat{\omega}_{\Sc}}; \Sc) - \widehat{\Lc}(f_{\omega^*_{\Sc}}; \Sc)].
        \end{align*}
    \end{lemma}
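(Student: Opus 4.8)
The plan is to convert the population excess risk of the learned predictor into a lower bound on the weighted squared estimation error appearing in Definition~\ref{def:error_general}, and then to split that population excess risk into an optimization part and a generalization part by inserting and cancelling the right reference quantities. First I would fix an agent $m$ and a context-action pair $(x_m, a_m)$ and introduce the conditional expected loss $g(y) := \Eb_{r_m}[l_m(y; r_m(a_m)) \mid x_m, a_m]$. Since each $l_m(\cdot; z_2)$ is $\mu_f$-strongly convex in its first argument, so is $g$, because strong convexity is preserved under averaging. The crucial use of the hypothesis \eqref{eqn:opt_omega_star} is that $y^\star := f_{\omega^*}(x_m, a_m) = f^*(x_m, a_m)$ is a global minimizer of $g$, so the first-order term in the strong-convexity expansion vanishes and I obtain the pointwise quadratic-growth bound $g(\widehat f(x_m,a_m)) - g(f^*(x_m,a_m)) \geq \tfrac{\mu_f}{2}\,(\widehat f(x_m,a_m) - f^*(x_m,a_m))^2$.

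Next I would take expectations: integrate over $x_m \sim \Dc_m^{\Xc_m}$ and $a_m \sim p_m(\cdot\mid x_m)$, weight by $n_m/n$, sum over $m \in [M]$, and finally take $\Eb_{\Sc, \xi}$ over the training data and the FL randomness (recall $\widehat f = f_{\widehat\omega_\Sc}$ is random through $\Sc$ and $\xi$). The right-hand side becomes exactly $\tfrac{\mu_f}{2}\,\Ec(\Fc; n_{[M]})$. For the left-hand side I would use that, following the definitions in \eqref{eqn:FL}, $\Lc(f_\omega) = \Eb_\Sc[\widehat\Lc(f_\omega;\Sc)] = \sum_{m} (n_m/n)\,\Eb_{x_m,a_m,r_m}[l_m(f_\omega(x_m,a_m); r_m(a_m))]$, so the left-hand side equals $\Eb_{\Sc,\xi}[\Lc(f_{\widehat\omega_\Sc}) - \Lc(f_{\omega^*})]$. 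This yields $\Ec(\Fc; n_{[M]}) \leq \tfrac{2}{\mu_f}\,\Eb_{\Sc,\xi}[\Lc(f_{\widehat\omega_\Sc}) - \Lc(f_{\omega^*})]$.

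Finally I would bound the population excess risk through the telescoping decomposition $\Lc(f_{\widehat\omega_\Sc}) - \Lc(f_{\omega^*}) = \big[\Lc(f_{\widehat\omega_\Sc}) - \widehat\Lc(f_{\widehat\omega_\Sc};\Sc)\big] + \big[\widehat\Lc(f_{\widehat\omega_\Sc};\Sc) - \widehat\Lc(f_{\omega^*_\Sc};\Sc)\big] + \big[\widehat\Lc(f_{\omega^*_\Sc};\Sc) - \widehat\Lc(f_{\omega^*};\Sc)\big] + \big[\widehat\Lc(f_{\omega^*};\Sc) - \Lc(f_{\omega^*})\big]$. In expectation the first bracket is $\varepsilon_{\text{gen}}(\Fc; n_{[M]})$ and the second is $\varepsilon_{\text{opt}}(\Fc; n_{[M]})$; the third is nonpositive because $\omega^*_\Sc = \argmin_\omega \widehat\Lc(f_\omega;\Sc)$; and the fourth has zero expectation because $\omega^*$ does not depend on $\Sc$ while $\Lc(f_{\omega^*}) = \Eb_\Sc[\widehat\Lc(f_{\omega^*};\Sc)]$. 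Combining these gives $\Ec(\Fc; n_{[M]}) \leq \tfrac{2}{\mu_f}\big(\varepsilon_{\text{opt}}(\Fc; n_{[M]}) + \varepsilon_{\text{gen}}(\Fc; n_{[M]})\big)$, matching the simplified statement.

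The main obstacle is the first step: recognizing that \eqref{eqn:opt_omega_star} is precisely the condition that makes $f^*$ the pointwise Bayes-optimal predictor for the loss, so that the linear term in the strong-convexity expansion is killed and the excess population loss becomes a clean quadratic lower bound on the estimation error. After that, the decomposition is routine, although one must be careful that the empirical-minimizer term is genuinely nonpositive (which requires $\omega^*_\Sc$ to be the \emph{exact} minimizer of $\widehat\Lc(\cdot;\Sc)$) and that the centering term has mean zero (which requires $\omega^*$ to be data-independent).
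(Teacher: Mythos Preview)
Your proposal is correct and follows essentially the same approach as the paper: both use strong convexity together with the optimality condition \eqref{eqn:opt_omega_star} to kill the linear term and obtain the quadratic-growth inequality, and then decompose the population excess risk via the empirical minimizer $\omega^*_\Sc$. The only cosmetic difference is that the paper compresses your last two brackets into a single step, writing directly $\Lc(f_{\omega^*}) = \Eb_\Sc[\widehat\Lc(f_{\omega^*};\Sc)] \geq \Eb_\Sc[\widehat\Lc(f_{\omega^*_\Sc};\Sc)]$, whereas you make the intermediate add-and-subtract of $\widehat\Lc(f_{\omega^*};\Sc)$ explicit.
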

    \begin{proof}
        First, for any $\widehat{\omega}_{\Sc}$, it holds that
        \begin{align*}
            &\Lc(f_{\widehat{\omega}_{\Sc}}) - \Lc(f_{\omega^*})\\
            & = \sum_{m\in [M]}\frac{n_m}{n}\Eb_{x_{m,i},a_{m,i},r_{m,i}}\left[\ell(f_{\widehat{\omega}_{\Sc}}(x_{m,i}, a_{m,i}); r_{m,i}) - \ell(f_{\omega^*}(x_{m,i}, a_{m,i}); r_{m,i})\right]\\
            & \geq \frac{\mu_f}{2}\sum_{m\in [M]}\frac{n_m}{n}\Eb_{x_{m,i},a_{m,i}}\left[\left(f_{\widehat{\omega}_{\Sc}}(x_{m,i}, a_{m,i}) - f_{\omega^*}(x_{m,i}, a_{m,i})\right)^2\right]
        \end{align*}
        where the inequality is due to the strong convexity of $\ell(\cdot; \cdot)$ w.r.t. its first coordinate and the optimality of $f_{\omega^*}$ assumed in Eqn.~\eqref{eqn:opt_omega_star}. Thus, we obtain that
        \begin{align*}
            \sum_{m\in [M]}\frac{n_m}{n}\Eb_{x_{m,i},a_{m,i}}\left[\left(f_{\widehat{\omega}_{\Sc}}(x_{m,i}, a_{m,i}) - f_{\omega^*}(x_{m,i}, a_{m,i})\right)^2\right] \leq \frac{2}{\mu_f}\left(\Lc(f_{\widehat{\omega}_{\Sc}}) - \Lc(f_{\omega^*})\right).
        \end{align*}
        Furthermore, it holds that
        \begin{align*}
            &\Eb_{\Sc, \xi}\left[\Lc(f_{\widehat{\omega}_{\Sc}})\right] - \Lc(f_{\omega^*}) \\&= \Eb_{\Sc, \xi}\left[\Lc(f_{\widehat{\omega}_{\Sc}})\right] - \Eb_{\Sc, \xi}\left[\widehat{\Lc}(f_{\widehat{\omega}_{\Sc}}; \Sc)\right] + \Eb_{\Sc, \xi}\left[\widehat{\Lc}(f_{\widehat{\omega}_{\Sc}}; \Sc)\right] - \Lc(f_{\omega^*})\\
            & \leq  \Eb_{\Sc, \xi}\left[\Lc(f_{\widehat{\omega}_{\Sc}})\right] - \Eb_{\Sc, \xi}\left[\widehat{\Lc}(f_{\widehat{\omega}_{\Sc}}; \Sc)\right] + \Eb_{\Sc, \xi}\left[\widehat{\Lc}(f_{\widehat{\omega}_{\Sc}}; \Sc)\right] - \Eb_{\Sc, \xi}\left[\widehat{\Lc}(f_{\omega^*_{\Sc}}; \Sc)\right],
            %& = \varepsilon_{\text{gen}} + \varepsilon_{\text{opt}}.
        \end{align*}
        where the last inequality is due to 
        \begin{align*}
            \Lc(f_{\omega^*}) = \Eb_{\Sc}\left[\widehat{\Lc}(f_{\omega^*}; \Sc)\right] \geq \Eb_{\Sc}\left[\widehat{\Lc}(f_{\omega^*_{\Sc}}; \Sc)\right].
        \end{align*}
        The proof is then concluded.
        \end{proof}

        Then, for the generalization error analyses, the following lemma can be obtained via standard proofs (e.g., Theorem 6.4 in \cite{zhang_2023}; Theorem 3.3 in \cite{mohri2018foundations}).
        \begin{lemma}\label{lem:rademacher}
            It holds that
            \begin{align*}
                \varepsilon_{\text{gen}}(\Fc; n_{[M]}): = \Eb_{\Sc,\xi}[\Lc(f_{\widehat{\omega}_{\Sc}})- \widehat{\Lc}(f_{\widehat{\omega}_{\Sc}}; \Sc)] \leq 2\Rf(\Fc; n_{[M]}).
            \end{align*}
            Here, the distributional-independent upper bound $\Rf(\Fc; n_{[M]})$ on the Rademacher complexity is defined as
            \begin{align}\label{eqn:rademacher}
                \Rf(\Fc; n_{[M]}) := \sup\left\{\Eb_{\Sc_{[M]}, \boldsymbol{\sigma}}\left[\sup_{\omega}\left\{\sum_{m\in [M]} \frac{1}{n} \sum_{i \in [n_m]}\sigma_{m,i}\cdot \ell_m(f_{\omega}(x_{m,i}, a_{m,i}); r_{m,i})\right\}\right]\right\},
            \end{align}
            where the outside supremum is over possible distributions of dataset $\Sc$ defined in Definition~\ref{def:error_general} and the expectation is w.r.t. the generation of dataset $\Sc_{[M]}$ following a fixed distribution and independent Rademacher random variables $\boldsymbol{\sigma} := \{\sigma_{m,i}: m\in [M], i\in [n_m]\}$. 
        \end{lemma}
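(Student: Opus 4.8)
The plan is to establish this by the classical two-step symmetrization argument, adapted to the multi-agent weighted empirical loss.

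First I would remove the data-dependence of the FL output. Since $\widehat{\omega}_{\Sc}$ is, for each realization of $(\Sc,\xi)$, a particular element of the parameter set, I bound
\begin{align*}
\Lc(f_{\widehat{\omega}_{\Sc}}) - \widehat{\Lc}(f_{\widehat{\omega}_{\Sc}}; \Sc) \leq \sup_{\omega}\left(\Lc(f_\omega) - \widehat{\Lc}(f_\omega; \Sc)\right),
\end{align*}
where the right-hand side no longer depends on $\xi$. Taking $\Eb_{\Sc,\xi}$ of the left-hand side then yields $\varepsilon_{\text{gen}}(\Fc; n_{[M]}) \leq \Eb_{\Sc}\big[\sup_\omega(\Lc(f_\omega) - \widehat{\Lc}(f_\omega;\Sc))\big]$. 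This step is precisely what makes the generalization bound insensitive to which FL routine produced $\widehat{\omega}_{\Sc}$, so that it can be combined modularly with the optimization error of Lemma~\ref{lem:error_decompose}.

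Next I would introduce a ghost sample. Using the definition $\Lc(f_\omega) = \Eb_{\Sc'}[\widehat{\Lc}(f_\omega;\Sc')]$ for an independent copy $\Sc'$ with the same per-agent sizes $n_m$, and pulling the inner supremum outside the expectation over $\Sc'$ by Jensen's inequality, I obtain
\begin{align*}
\Eb_{\Sc}\left[\sup_\omega\left(\Lc(f_\omega) - \widehat{\Lc}(f_\omega;\Sc)\right)\right] \leq \Eb_{\Sc,\Sc'}\left[\sup_\omega\left(\widehat{\Lc}(f_\omega;\Sc') - \widehat{\Lc}(f_\omega;\Sc)\right)\right].
\end{align*}
Because $\widehat{\Lc}(f_\omega;\Sc) = \frac{1}{n}\sum_{m\in[M]}\sum_{i\in[n_m]}\ell_m(f_\omega(x_{m,i},a_{m,i}); r_{m,i})$, the bracket is a sum over indices $(m,i)$ of differences of i.i.d.\ terms; swapping the $i$-th sample of agent $m$ with its ghost counterpart leaves the joint law unchanged, so I may insert independent Rademacher signs $\sigma_{m,i}$ without changing the expectation.

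Finally I would split the supremum of the signed difference using subadditivity of $\sup$ into a term involving only $\Sc'$ and a term involving only $\Sc$; since $-\sigma_{m,i}$ has the same distribution as $\sigma_{m,i}$, each term equals the empirical Rademacher complexity of $\Fc$ at the true data distribution, which is at most $\Rf(\Fc; n_{[M]})$ by the outer supremum over distributions in its definition in Eqn.~\eqref{eqn:rademacher}. Summing the two contributions gives the factor $2$ and completes the proof. The only point requiring care is the bookkeeping of the agent weights: the factor $n_m/n$ in $\widehat{\Lc}$ combines with the $1/n_m$ normalization of $\widehat{\Lc}_m$ so that the correct granularity for symmetrization is the individual sample index $(m,i)$ with uniform weight $1/n$, matching exactly the form of $\Rf(\Fc; n_{[M]})$. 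I expect this alignment, rather than any analytic difficulty, to be the main thing to get right.
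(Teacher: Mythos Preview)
Your proposal is correct and follows exactly the classical symmetrization argument that the paper invokes by reference (it does not give its own proof but cites Theorem~6.4 in \cite{zhang_2023} and Theorem~3.3 in \cite{mohri2018foundations}). Your observation that the weights $n_m/n$ cancel against the $1/n_m$ in $\widehat{\Lc}_m$, reducing everything to a uniform $1/n$ sum over the independent sample indices $(m,i)$, is precisely the adaptation needed for the multi-agent setting and matches the form of $\Rf(\Fc;n_{[M]})$ in Eqn.~\eqref{eqn:rademacher}.
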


        The optimization error of FedAvg \citep{mcmahan2017communication} and SCAFFOLD \citep{karimireddy2020scaffold} are presented in Appendices~\ref{subapp:FedAvg} and \ref{subapp:SCAFFOLD}. Combining the generalization error and optimization error via Lemma~\ref{lem:error_decompose} into Theorem~\ref{thm:global_regret}, Corollary~\ref{col:convex_raw} can be obtained, which is restated in the following.
        \begin{corollary}[Restatement of Corollary~\ref{col:convex_raw}]\label{col:convex_raw_full}
        Under the condition of Lemma~\ref{lem:error_decompose}, the regret of FedIGW can be bounded as
        \begin{align*}
            \Reg(T) = O\left(ME^1 + \sum_{l\in [2,l(T)]}\sqrt{K\left(\Rf^{l-1} + \varepsilon_{\text{opt}}^l)\right)/\mu_f}ME^l\right),
        \end{align*}
        where $\Rf^{l}: =\Rf(\Fc; \{E^l: m \in [M]\})$ and using $\rho^l$ rounds of agents-server communications (i.e., global aggregations) and $\kappa^l$ rounds of local updates in epoch $l$, under certain assumptions,
        \begin{itemize}[leftmargin=*]
            \item with \textbf{FedAvg} as $\texttt{FLroutine}(\cdot)$, if $\widehat{\Lc}_m(f_{\omega};\Sc^l_{[M]})$ is $\mu_{\omega}$-strongly convex and $\beta_{\omega}$-smooth w.r.t. $\omega$ for all $m\in [M]$ while the gradients are unbiased, have a $\sigma^2_b$-bounded variance and have a $G_b$-bounded dissimilarity, the output $f_{\widehat{\omega}^l}$ satisfies that $
            \varepsilon_{\text{opt}}^l:=\varepsilon_{opt}(\Fc; n^l_{[M]}) \leq \tilde{O}(\sigma_b^2(\mu_{\omega} \rho^l \kappa^l M)^{-1} + \beta_{\omega} G_b^2(\mu_{\omega}\rho^l)^{-2})$, when $\rho^l \geq \Omega(\beta_{\omega}/\mu_{\omega})$ (see Lemma~\ref{lem:FedAvg_complete} for the full statement);
            \item with \textbf{SCAFFOLD} as $\texttt{FLroutine}(\cdot)$, if $\widehat{\Lc}_m(f_{\omega};\Sc^l_{[M]})$ is $\mu_{\omega}$-strongly convex and $\beta_{\omega}$-smooth w.r.t. $\omega$ for all $m\in [M]$ while the gradients are unbiased and have a $\sigma^2_b$-bounded variance, the output $f_{\widehat{\omega}^l}$ satisfies that $
            \varepsilon_{\text{opt}}^l:=\varepsilon_{opt}(\Fc; n^l_{[M]}) \leq \tilde{O}(\sigma_b^2(\mu_{\omega} \rho^l \kappa^l M)^{-1})$, when $\rho^l \geq \Omega(\beta_{\omega}/\mu_{\omega})$ (see Lemma~\ref{lem:SCAFFOLD_complete} for the full statement);.
        \end{itemize}
        % Furthermore, the overall incurred communication round is $\sum_{l\in [L]}\rho^l$.
    \end{corollary}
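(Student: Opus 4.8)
The plan is to obtain Corollary~\ref{col:convex_raw_full} as a chain of substitutions into Theorem~\ref{thm:global_regret}, followed by quoting the two FL convergence results for the bulleted optimization-error guarantees. First I would specialize Theorem~\ref{thm:global_regret} to the synchronous setting with a shared arm count, where $E^l_m = E^l$ and $\bar{K}^l = K$ for every $m$, so that $\sum_{m\in[M]} E^l_m = ME^l$; under a constant-ratio (e.g.\ geometric) epoch schedule the ratios $E^l/E^{l-1}$ are constant, hence $\underline{c} = \overline{c}$, $c = O(1)$, and the $c^{5/2}$ prefactor is absorbed into the $O(\cdot)$. This collapses the general bound to $\Reg(T) = O(ME^1 + \sum_{l\in[2,l(T)]} \sqrt{K\,\Ec(E^{l-1}_{[M]})}\,ME^l)$, reducing the remaining task to controlling the scalar excess risk $\Ec(E^{l-1}_{[M]})$ for each epoch.

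Next I would decompose that excess risk. The standing $\mu_f$-strong-convexity hypothesis on $\ell_m(\cdot;\cdot)$ lets me apply Lemma~\ref{lem:error_decompose} to get $\Ec(E^{l-1}_{[M]}) \leq 2(\varepsilon_{\text{opt}}^{l-1} + \varepsilon_{\text{gen}}^{l-1})/\mu_f$, and Lemma~\ref{lem:rademacher} then replaces the generalization term by the distribution-independent Rademacher bound $\varepsilon_{\text{gen}}^{l-1} \leq 2\Rf^{l-1}$. Substituting both into the specialized regret and folding numerical constants into the $O(\cdot)$ produces the displayed bound $\Reg(T) = O(ME^1 + \sum_{l\in[2,l(T)]} \sqrt{K(\Rf^{l-1} + \varepsilon_{\text{opt}}^l)/\mu_f}\,ME^l)$. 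A pleasant feature of the convex case is that every quantity here lives in expectation (the excess risk of Definition~\ref{def:error_general}, the two error terms of Lemma~\ref{lem:error_decompose}, and the Rademacher bound), so no high-probability-to-expectation conversion is needed, unlike the finite-class argument. The only bookkeeping care is the epoch index: the $\texttt{FLroutine}(\cdot)$ call that emits $\widehat{f}^l$ consumes the $E^{l-1}$ samples of epoch $l-1$, so I would fix the convention that $\varepsilon_{\text{opt}}^l$ denotes the optimization error of that call (which therefore shares the sample count $E^{l-1}$ with $\Rf^{l-1}$).

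For the two bullets the remaining content is purely the convergence rate of the chosen routine on the $\mu_\omega$-strongly-convex, $\beta_\omega$-smooth empirical objective $\widehat{\Lc}(f_\omega;\Sc)$, which is where I would invoke the appendix analyses Lemma~\ref{lem:FedAvg_complete} and Lemma~\ref{lem:SCAFFOLD_complete}. For FedAvg, under unbiased gradients with $\sigma_b^2$-bounded variance and $G_b$-bounded dissimilarity, the local-SGD recursion gives $\varepsilon_{\text{opt}}^l \leq \tilde{O}(\sigma_b^2(\mu_\omega\rho^l\kappa^l M)^{-1} + \beta_\omega G_b^2(\mu_\omega\rho^l)^{-2})$ once $\rho^l \geq \Omega(\beta_\omega/\mu_\omega)$, where the $(\rho^l)^{-2}$ term is exactly the client-drift/heterogeneity penalty of plain averaging; for SCAFFOLD the control-variate correction cancels this dissimilarity dependence, leaving $\varepsilon_{\text{opt}}^l \leq \tilde{O}(\sigma_b^2(\mu_\omega\rho^l\kappa^l M)^{-1})$ under the same convexity/smoothness/variance hypotheses.

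I expect the genuinely substantive obstacle to reside entirely in those two appendix lemmas rather than in the corollary's assembly. Deriving the FedAvg heterogeneity term, and especially verifying that SCAFFOLD's control variates eliminate it, requires the full per-round descent and client-drift recursions over $\kappa^l$ local steps and $\rho^l$ aggregation rounds; the corollary's main display, by contrast, is mechanical once Theorem~\ref{thm:global_regret}, Lemma~\ref{lem:error_decompose}, and Lemma~\ref{lem:rademacher} are in hand. The one minor subtlety in wiring the appendix lemmas to $\varepsilon_{\text{opt}}$ is that some FL rates are naturally stated as a parameter gap $\Eb\|\widehat{\omega}_\Sc - \omega^*_\Sc\|^2$ whereas $\varepsilon_{\text{opt}}$ is the loss-value gap $\Eb[\widehat{\Lc}(f_{\widehat{\omega}_\Sc};\Sc) - \widehat{\Lc}(f_{\omega^*_\Sc};\Sc)]$, but $\beta_\omega$-smoothness of the empirical objective converts between the two up to a constant, so this introduces no new difficulty.
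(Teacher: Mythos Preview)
Your proposal is correct and matches the paper's own proof: the paper simply states that the corollary follows by combining Theorem~\ref{thm:global_regret} with Lemma~\ref{lem:error_decompose} and Lemma~\ref{lem:rademacher}, and then plugging in the FedAvg/SCAFFOLD convergence results from Lemmas~\ref{lem:FedAvg_complete} and~\ref{lem:SCAFFOLD_complete}. Your additional remarks on the epoch-index bookkeeping for $\varepsilon_{\text{opt}}^l$ and the parameter-gap vs.\ loss-gap conversion are accurate (and more explicit than the paper itself).
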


    By further setting a suitable number of global aggregations for each epoch such that the optimization error is on the same order as the generalization error, the following more specific corollary can obtained for FedAvg and SCAFFOLD, which can be easily extended for other FL designs.
    \begin{corollary}\label{col:convex}
       Under the conditions of Lemma~\ref{lem:error_decompose} and Corollary~\ref{col:convex_raw_full}, FedIGW incurs a regret of
       \begin{align*}
           \Reg(T) = O\left(ME^1 + \sum_{l\in [2,l(T)]}\sqrt{K\Rf^{l-1}/\mu_f}ME^l\right)
       \end{align*}
       with the following bounds on the rounds of communications
       \begin{align*}
           \tilde{O}\left(\sum_{l\in [l(T)]}\frac{\beta_{\omega}}{\mu_{\omega}} + \frac{\sigma_b^2}{\mu_{\omega}\Rf^l \kappa^l M} +  \sqrt{\frac{\beta_{\omega} G_b^2}{\mu_{\omega}^2\Rf^l}}\right) \qquad \text{(using FedAvg)};\\
           \tilde{O}\left(\sum_{l\in [l(T)]}\frac{\beta_{\omega}}{\mu_{\omega}} + \frac{\sigma_b^2}{\mu_{\omega}\Rf^l \kappa^l M} \right) \qquad \text{(using SCAFFOLD)},
       \end{align*}
       where $\Rf^{l}: = \Rf(\Fc_{[M]}, \{E^l: m\in [M]\})$ and $\kappa^l$ is the number of local updates in epoch $l$.
    \end{corollary}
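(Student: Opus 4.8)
The plan is to specialize Corollary~\ref{col:convex_raw_full} by spending, in each epoch, exactly as many communication rounds as are needed to push the optimization error down to the level of the (irreducible) generalization error. The starting point is the raw bound
\[
\Reg(T) = O\left(ME^1 + \sum_{l\in [2,l(T)]}\sqrt{K\left(\Rf^{l-1} + \varepsilon_{\text{opt}}^l\right)/\mu_f}\,ME^l\right),
\]
together with the observation (already noted after Corollary~\ref{col:convex_raw}) that $\Rf^{l-1}$ is a statistical floor that no amount of extra optimization can remove. Hence there is no reason to drive $\varepsilon_{\text{opt}}$ below its matching generalization term, and I would choose the number of global aggregations $\rho^l$ in epoch $l$ precisely so that the optimization error of that FL invocation is $O(\Rf^l)$. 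Since the run in epoch $l-1$ produces the estimator used in epoch $l$, this makes the epoch-$l$ regret summand satisfy $\Rf^{l-1} + \varepsilon_{\text{opt}}^l = O(\Rf^{l-1})$; substituting into the display above immediately collapses the regret to the claimed $O\big(ME^1 + \sum_{l\in[2,l(T)]}\sqrt{K\Rf^{l-1}/\mu_f}\,ME^l\big)$.

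The second step is to invert the optimization-error guarantees of Corollary~\ref{col:convex_raw_full} to read off the $\rho^l$ that realizes $\varepsilon_{\text{opt}}^l = O(\Rf^l)$. For FedAvg the guarantee is $\varepsilon_{\text{opt}}^l \le \tilde{O}\big(\sigma_b^2(\mu_\omega \rho^l \kappa^l M)^{-1} + \beta_\omega G_b^2(\mu_\omega \rho^l)^{-2}\big)$, valid once $\rho^l \ge \Omega(\beta_\omega/\mu_\omega)$. Forcing the first (variance) summand below $\Rf^l$ requires $\rho^l \gtrsim \sigma_b^2/(\mu_\omega \Rf^l \kappa^l M)$, while forcing the second (dissimilarity) summand below $\Rf^l$ requires $\rho^l \gtrsim \sqrt{\beta_\omega G_b^2/(\mu_\omega^2 \Rf^l)}$. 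Taking the maximum of these two thresholds together with the admissibility condition $\rho^l \ge \Omega(\beta_\omega/\mu_\omega)$ gives $\rho^l = \tilde{O}\big(\beta_\omega/\mu_\omega + \sigma_b^2/(\mu_\omega \Rf^l \kappa^l M) + \sqrt{\beta_\omega G_b^2/(\mu_\omega^2 \Rf^l)}\big)$. For SCAFFOLD the dissimilarity term is absent from the guarantee, so the identical inversion drops the square-root term and yields $\rho^l = \tilde{O}\big(\beta_\omega/\mu_\omega + \sigma_b^2/(\mu_\omega \Rf^l \kappa^l M)\big)$.

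Summing these per-epoch $\rho^l$ over $l\in[l(T)]$ then gives the two stated total communication bounds, one per protocol. The only genuine care-point is keeping the epoch indexing straight: $\varepsilon_{\text{opt}}^l$ in the regret is attached to the FL run at the \emph{close of epoch $l-1$} (feeding $\widehat{f}^l$), so matching the epoch-$l$ run's optimization error against $\Rf^l$ is exactly what makes the telescoped regret depend on $\Rf^{l-1}$ rather than $\Rf^l$. I would also verify that the side condition $\rho^l \ge \Omega(\beta_\omega/\mu_\omega)$ is harmlessly absorbed into the chosen $\rho^l$ (it appears as an explicit additive term), and that the monotone growth of the epoch lengths $E^l$ keeps $\Rf^l$ decreasing so the balancing is self-consistent across epochs. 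I expect no substantive estimate to be required here; the main obstacle is purely the bookkeeping between the $l$ and $l-1$ indexings, not any new inequality.
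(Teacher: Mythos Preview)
Your proposal is correct and follows essentially the same route as the paper's own proof: set $\rho^l$ in each epoch so that the FL optimization error is $O(\Rf^l)$, invert the FedAvg/SCAFFOLD rates from Corollary~\ref{col:convex_raw_full} term-by-term to read off the required $\rho^l$ (the admissibility threshold $\Omega(\beta_\omega/\mu_\omega)$ appearing as an additive piece), substitute back into the raw regret bound, and sum $\rho^l$ over epochs. Your explicit attention to the $l$ versus $l{-}1$ indexing is actually more careful than the paper, which glosses over it; in either reading the conclusion goes through because $\Rf^l \le \Rf^{l-1}$ under growing epoch lengths.
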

    \begin{proof}
        From Corollary~\ref{col:convex_raw_full}, when using FedAvg as the adopted FL protocol in FedIGW, the optimization error in epoch $l$ of form
        \begin{align*}
            \tilde{O}\left(\frac{\sigma_b^2}{\mu_{\omega}\rho^l \kappa^l M} + \frac{\beta_{\omega} G_b^2}{\mu_{\omega}^2(\rho^l)^2}\right),
        \end{align*}
        when $\rho^l = \Omega(\beta_{\omega}/\mu_{\omega})$. Thus, if the communication rounds
        \begin{align*}
            \rho^l = \tilde{\Theta}\left(\frac{\beta_{\omega}}{\mu_{\omega}} + \frac{\sigma_b^2}{\mu_{\omega}\Rf^l \kappa^l M} + \sqrt{\frac{\beta_{\omega} G_b^2}{\mu_{\omega}^2\Rf^l}}\right).
        \end{align*}
        we are guaranteed to have the optimization error on the order of $O(\Rf^{l})$. 
        
        Then, the regret in Corollary~\ref{col:convex_raw} is of order
        \begin{align*}
           \Reg(T) = O\left(ME^1 + \sum_{l\in [2,l(T)]}\sqrt{K\Rf^{l-1}/\mu_f}ME^l\right)
       \end{align*}
       while the overall communication rounds can be bounded as
       \begin{align*}
           \sum_{l\in [l(T)]} \rho^l = \tilde{O}\left(\sum_{l\in [l(T)]}\frac{\beta_{\omega}}{\mu_{\omega}} + \frac{\sigma_b^2}{\mu_{\omega}\Rf^l \kappa^l M} + \sqrt{\frac{\beta_{\omega} G_b^2}{\mu_{\omega}^2\Rf^l}}\right),
       \end{align*}
       which concludes the proof for FedAvg. The result of using SCAFFOLD can be similarly obtained.
    \end{proof}

    \subsection{A Linear Reward Function Class}\label{subapp:linear}
    We here provide a detailed discussion on the linear reward function class considered in Remark~\ref{rmk:linear} at the end of Section~\ref{subsec:concrete}. Especially, following standard assumptions in linear bandits \citep{abbasi2011improved} and federated linear bandits \citep{li2022asynchronous,he2022simple,amani2022distributed}, we consider $\mu_m(x_m, a_m) = \langle \phi(x_m, a_m), \omega^* \rangle$, where $\phi(\cdot)$ is a known $d$-dimensional mapping and $\omega^*$ is an unknown $d$-dimensional system parameter. 
    Then, it is sufficient to consider a linear function class $\Fc$, where $f_{\omega}(\cdot) = \langle \omega,  \phi(\cdot)\rangle$ and $f^*(\cdot) = \langle \omega^*,  \phi(\cdot)\rangle$. Moreover, for convenience, we assume that $\|\phi(x_m, a_m)\|_2\leq 1$ and $\|\omega^*\|_2 \leq 1$. 
    
    As mentioned in Remark~\ref{rmk:linear}, the FL problem can be formulated as a standard ridge regression with
    \begin{align*}
        \ell_m(f_\omega(x_m, a_m); r_m): = \left(\langle \omega, \phi(x_m, a_m)\rangle - r_m\right)^2 + \lambda \|\omega\|_2^2.
    \end{align*}
    In other words, Eqn.~\eqref{eqn:FL} can be restated as
    \begin{align}\label{eqn:ridge_regression}
        \min_{\omega\in \Rb^d} \widehat{\Lc}(f_{\omega}; \Sc) := \sum_{m\in [M]}\frac{1}{n}\sum_{i\in [n_m]}\left(\langle \omega, \phi(x_m^i, a_m^i)\rangle - r_m^i\right)^2 + \lambda \|\omega\|_2^2,
    \end{align}
    which has an exact minimizer as
    \begin{align}\label{eqn:minimizer_ridge}
        \omega^*_{\Sc} = \left(\frac{1}{n}\sum_{m\in [M]}\sum_{i\in [n_m]} \phi(x_m^i, a_m^i)\phi(x_m^i, a_m^i)^\top + \lambda I\right)^{-1} \left(\frac{1}{n}\sum_{m\in [M]}\sum_{i\in [n_m]}\phi(x_m^i, a_m^i) r_m^i\right).
    \end{align}
    
    We provide an excess risk bound required in Definition~\ref{def:error_general} through the following decomposition: 
    \begin{align*}
        &\Eb_{\Sc,\xi}\left[\sum_{m\in [M]}\frac{n_m}{n}\Eb_{x_m, a_m}\left(\langle \widehat{\omega}_{\Sc}, \phi(x_m, a_m)\rangle - \langle \omega^*, \phi(x_m, a_m)\rangle\right)^2\right]\\
        &\leq 2\Eb_{\Sc,\xi}\left[ \sum_{m\in [M]}\frac{n_m}{n}\Eb_{x_m, a_m}\left(\langle \widehat{\omega}_{\Sc}, \phi(x_m, a_m)\rangle - \langle \omega^*_{\Sc}, \phi(x_m, a_m)\rangle\right)^2 \right] \\
        &\quad + 2\Eb_{\Sc,\xi}\left[ \sum_{m\in [M]}\frac{n_m}{n}\Eb_{x_m, a_m}\left(\langle \omega^*_{\Sc}, \phi(x_m, a_m)\rangle - \langle \omega^*, \phi(x_m, a_m)\rangle \right)^2 \right]\\
        & = 2\Eb_{\Sc,\xi}\left[ \left\|\widehat{\omega}_{\Sc}- \omega^*_{\Sc}\right\|_{\Sigma}^2 \right] \\
        &\quad + 2\Eb_{\Sc}\left[ \sum_{m\in [M]}\frac{n_m}{n}\Eb_{x_m, a_m}\left(\langle \omega^*_{\Sc}, \phi(x_m, a_m)\rangle - \langle \omega^*, \phi(x_m, a_m)\rangle \right)^2 \right]\\
        & \leq 2\Eb_{\Sc,\xi}\left[ \lambda_{\max}(\Sigma) \left\|\widehat{\omega}_{\Sc}- \omega^*_{\Sc}\right\|_2^2 \right] & =: \text{term (A)} \\
        &\quad + 2\Eb_{\Sc}\left[ \sum_{m\in [M]}\frac{n_m}{n}\Eb_{x_m, a_m}\left(\langle \omega^*_{\Sc}, \phi(x_m, a_m)\rangle - \langle \omega^*, \phi(x_m, a_m)\rangle \right)^2 \right] & =: \text{term (B)}
    \end{align*}
    where
    \begin{align*}
        \Sigma := \sum_{m\in [M]}\frac{n_m}{n}\Eb_{x_m, a_m}\left[\phi(x_m, a_m) \phi(x_m, a_m)^\top\right]
    \end{align*}
    and $\lambda_{\max}(\Sigma)$ denotes the maximum eigenvalue of $\Sigma$. With $\|\phi(x,a)\|_2 \leq 1$, it can be verified that $\lambda_{\max}(\Sigma)\leq 1$. In the above decomposition, term (A) can be interpreted as the optimization error, while term (B) is the generalization error. 
    
    We can then plug in the aforementioned explicit formula of $\omega^*_{\Sc}$ into term (B) and demonstrate that $\text{term (B)} = \tilde{O}(d/n)$ with $\lambda = 1/n$ under the assumption that $\|\omega^*\|_2 \leq 1$ and $r_m\in [0,1]$ (e.g., following Theorem 9.35 in \cite{zhang_2023}). 

    For the ridge regression problem in Eqn.~\eqref{eqn:ridge_regression}, previous designs on federated linear bandits typically \citep{wang2019distributed, dubey2020differentially,li2022asynchronous,he2022simple,amani2022distributed} have agents collaboratively provide the exact minimizer in Eqn.~\eqref{eqn:minimizer_ridge} via directly communicating their local rewards aggregates, i.e., $\sum_{i\in [n_m]}\phi(x_m^i, a_m^i) r_m^i$, and local covariance matrices, i.e., $\sum_{i\in [n_m]} \phi(x_m^i, a_m^i)\phi(x_m^i, a_m^i)^\top$. Thus, one round of agent-server communication is sufficient, where $O(Md^2)$ real numbers are shared. However, directly sharing such compressed data is often undesired in FL studies due to privacy concerns. We refer to this protocol as the ``\textbf{direct method}'' for simplicity in the following discussions.
    
    With the flexible FL choice in FedIGW, it can accommodate many other efficient optimization algorithms. In particular, a distributed version of accelerated gradient descent (AGD) \citep{nesterov2003introductory} takes only $O(\sqrt{\kappa}\log(1/\varepsilon'))$ rounds of communications of gradients to have an optimization error of $\varepsilon'$, where $\kappa$ is the condition number (i.e., the ratio between the smooth and strongly convex parameter in the considered problem). With $\lambda = 1/n$, it holds that $\kappa = O(n)$; thus $O(\sqrt{n}\log(d/n))$ rounds of communications of gradients are sufficient to obtain an optimization error of order $\tilde{O}(d/n)$, where each agents' gradients are intuitively $d$-dimensional.

    With the above illustration, the following corollary regarding the performance FedIGW with a linear reward function class is then a straightforward extension from Theorem~\ref{thm:global_regret}.
    \begin{corollary}\label{col:linear}
        In the considered linear reward function class with shared true parameters, using the direct method or distributed AGD as the adopted FL protocol to solve the FL problem in Eqn.~\eqref{eqn:ridge_regression} and $\tau^l = 2^l$, FedIGW obtains a regret of 
        \begin{align*}
            \Reg(T)  = \tilde{O}\left(\sum_{l\in [\log_2(T)]}\sqrt{\frac{Kd}{M2^{l-1}}}M2^l\right) = \tilde{O}\left(\sqrt{MKdT}\right)
        \end{align*}
        and the amount of real numbers communicated can be bounded as
        \begin{align*}
            O\left(\sum_{l\in [\log_2(T)]}Md^2\right) &= O(Md^2\log(T)) \qquad &\text{(using the direct method)};\\
            O\left(\sum_{l\in [\log_2(T)]}Md\sqrt{M2^{l}}\log(d/(M2^{l}))\right) &= O(d\log(d)\sqrt{M^3T}) \qquad &\text{(using distributed AGD)}.
        \end{align*}
    \end{corollary}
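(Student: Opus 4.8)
The plan is to specialize the general regret bound of Theorem~\ref{thm:global_regret} to the linear class by feeding in the excess-risk bound $\Ec(E^{l-1}_{[M]})$ obtained from the term (A) / term (B) decomposition established just above the corollary, and then to evaluate the resulting geometric sums for both the regret and the communication cost.

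First I would pin down the per-epoch excess risk. The decomposition preceding the corollary bounds $\Ec(n_{[M]})$ by $2\cdot\text{term (A)}+2\cdot\text{term (B)}$, where term (B) $=\tilde{O}(d/n)$ under the choice $\lambda=1/n$ (the ridge-regression generalization bound, e.g.\ Theorem 9.35 in \cite{zhang_2023}). For the direct method the agents reconstruct the exact ridge minimizer $\omega^*_{\Sc}$ of Eqn.~\eqref{eqn:minimizer_ridge}, so $\widehat{\omega}_{\Sc}=\omega^*_{\Sc}$ and term (A) vanishes; for distributed AGD, running it until the optimization error reaches the statistical floor gives term (A) $=\tilde{O}(d/n)$ as well. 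In the synchronous system with $E^l_m\equiv E^l$, the dataset from epoch $l-1$ has $n=M E^{l-1}$ samples, so in either case $\Ec(E^{l-1}_{[M]})=\tilde{O}\!\big(d/(M E^{l-1})\big)$.

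Next I would substitute this into Eqn.~\eqref{eqn:general_regret}. With $\tau^l=2^l$ the epoch lengths satisfy $E^l=2^{l-1}$, hence $E^l/E^{l-1}=2$ for every $l\ge 2$, giving $\underline{c}=\overline{c}=2$ and $c=1$, so the factor $c^{5/2}$ is an absorbed constant; moreover $\bar{K}^l=K$, $\sum_{m}E^l_m=ME^l$, and the first-epoch term is $O(M)$. Each summand then reduces to $\sqrt{K\cdot d/(M E^{l-1})}\cdot M E^l=\tilde{O}(\sqrt{MKd}\cdot 2^{l/2})$, and since $\sum_{l\le \log_2 T}2^{l/2}$ is a geometric series dominated by its last term $\Theta(\sqrt{T})$, I obtain $\Reg(T)=\tilde{O}(\sqrt{MKdT})$. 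For the communication counts I would tally per-epoch costs: the direct method uses one aggregation round per epoch transmitting the $d\times d$ covariance matrices and $d$-dimensional reward aggregates, i.e.\ $O(Md^2)$ numbers, totaling $O(Md^2\log T)$; distributed AGD has condition number $\kappa=O(1/\lambda)=O(n)=O(M2^l)$ under $\|\phi\|_2\le 1$, so reaching error $\tilde{O}(d/n)$ needs $O(\sqrt{M2^l}\log(M2^l/d))$ rounds of $O(Md)$ numbers each, and the geometric sum $\sum_l Md\sqrt{M2^l}\log(\cdot)$ yields $O(d\log(d)\sqrt{M^3T})$.

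Essentially all the load-bearing content is inherited: term (B) and the AGD rate are quoted, and the regret machinery is Theorem~\ref{thm:global_regret}, so the corollary is mostly geometric-sum bookkeeping. The one point that deserves care — the closest thing to an obstacle — is verifying that the conditional-on-$\Upsilon^{l-1}$ excess-risk guarantee required by Definition~\ref{def:error_general} genuinely applies: within epoch $l-1$ the samples are i.i.d.\ from the history-fixed arm kernel $p^{l-1}_m$, so the distribution-independent term-(B) bound holds uniformly over that kernel, which is exactly the form consumed by Lemma~\ref{lem:learning_rate}. Everything else is the routine evaluation sketched above.
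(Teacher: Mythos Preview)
Your proposal is correct and follows essentially the same route as the paper: the paper states the corollary as ``a straightforward extension from Theorem~\ref{thm:global_regret}'' after establishing the term~(A)/term~(B) decomposition, the $\tilde O(d/n)$ ridge generalization bound for term~(B), and the $O(\sqrt{n}\log(n/d))$ AGD round count; you reproduce exactly this pipeline and then carry out the geometric-sum bookkeeping that the paper leaves implicit. Your extra remark about the conditional-on-$\Upsilon^{l-1}$ validity of the excess-risk bound is a careful point the paper does not spell out, but otherwise the arguments coincide.
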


    \section{Details of Section~\ref{sec:appendage}}
    \subsection{Personalized Learning: Details of Section~\ref{subsec:per}}\label{app:per}
    Additional details for the personalized learning setting in Section~\ref{subsec:per} are discussed here. In particular, the overall algorithm structure still follows Algorithm~\ref{alg:FedIGW_agent}, while the major difference is that a personalized FL problem is considered:
    \begin{align*}
        \min_{\omega^\alpha, \omega^\beta_{[M]}} \widehat{\Lc}(f_{\omega^\alpha, \omega^\beta_{[M]}}; \Sc_{[M]}) :=\sum_{m\in [M]} \frac{n_m}{n} \widehat{\Lc}_m(f_{\omega^\alpha, \omega^\beta_m}; \Sc_m),
    \end{align*}
    where
    \begin{align*}
        \widehat{\Lc}_m(f_{\omega^\alpha, \omega^\beta_m}; \Sc_m) := \frac{1}{n_m} \sum_{i\in [n_m]}\ell_m(f_{\omega^\alpha, \omega^\beta_m}(x_m^i, a_m^i); r_m^i).
    \end{align*}

    Furthermore, to bound the generalization error, similar to the Rademacher complexity in Eqn.~\eqref{eqn:rademacher}, a slightly different Rademacher complexity is introduced as
    \begin{align*}
        \Pf(\Fc_{[M]}; n_{[M]})=  \sup\left\{\Eb_{\Sc, \boldsymbol{\sigma}}\left[\sup_{\omega^\alpha, \omega^\beta_{[M]}}\left\{\sum_{m\in [M]} \frac{1}{n} \sum_{i \in [n_m]}\sigma_{m,i}\cdot \ell_m(f_{\omega_m}(x_m^i, a_m^i); r_m^i)\right\}\right]\right\},
    \end{align*}
    which is suitable for the considered personalized setting with parameters $[\omega^\alpha, \omega^\beta_{[M]}]$ involved. A similar notation is also adopted in \cite{mohri2019agnostic}.  
    
    The following corollary can then be established for the personalized version of FedIGW with the LSGD-PFL algorithm \citep{hanzely2021personalized} adopted to solve the personalized FL task.

    \begin{corollary}\label{col:convex_per}
       Under the conditions of Lemmas~\ref{lem:error_decompose} and \ref{lem:LSGD-PFL_full}, with LSGD-PFL as the adopted personalized FL protocol, FedIGW incurs a regret of
       \begin{align*}
           \Reg(T) = O\left(ME^1+ \sum_{l\in [2,l(T)]}\sqrt{K\Pf^{l-1}/\mu_f}ME^l\right)
       \end{align*}
       with 
       \begin{align*}
           \tilde{O}\left(\sum_{l\in [l(T)]} \max\{\beta_{\omega^\beta }(\kappa^l)^{-1}, \beta_{\omega^\alpha}\}\mu^{-1}_{\omega} + \sigma^2_b(\mu_{\omega} \kappa^{l} M \Pf^{l})^{-1} + \sqrt{\beta_{\omega^\alpha}(G^2+ \sigma^2)(\mu_{\omega}^2\Pf^{l})^{-1}}\right)
       \end{align*}rounds of communications, where $\Pf^{l}: = \Pf(\Fc_{[M]}, \{E^l: m\in [M]\})$ and $\kappa^l$ is the number of local updates in epoch $l$.
    \end{corollary}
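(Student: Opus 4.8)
The plan is to mirror the proof of Corollary~\ref{col:convex}, replacing the single shared predictor $\widehat{f}$ by the per-agent personalized predictors $\{f_{\widehat{\omega}^\alpha, \widehat{\omega}^\beta_m}: m\in [M]\}$ and tracking exactly where the personalized structure enters. First I would observe that the regret analysis leading to Theorem~\ref{thm:global_regret} is carried out agent-by-agent and only aggregated at the very end: every supporting lemma is phrased through the agent-specific estimate $\widehat{f}^l_m$ and the per-agent quantities $\Rc_m$, $\widehat{\Rc}^l_m$, $\Reg_m$. Under local realizability (Assumption~\ref{asp:local}), each agent's estimate $f_{\widehat{\omega}^\alpha, \widehat{\omega}^\beta_m}$ targets its own $f^*_m$, so the derivation applies essentially verbatim once the excess risk of Definition~\ref{def:error_general} is read with $\widehat{f}(x_m,a_m)$ replaced by $f_{\widehat{\omega}^\alpha,\widehat{\omega}^\beta_m}(x_m,a_m)$ and $f^*$ replaced by $f^*_m$. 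Hence the general regret bound of Theorem~\ref{thm:global_regret} continues to hold with this personalized excess risk, and it remains only to control that excess risk when LSGD-PFL is the adopted protocol.

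Next I would invoke the error decomposition of Lemma~\ref{lem:error_decompose}, which—as its complete version records—does not rely on the particular parameterization of the function class and therefore applies to the joint parameterization $[\omega^\alpha, \omega^\beta_{[M]}]$. This bounds the personalized excess risk by $2(\varepsilon_{\text{opt}} + \varepsilon_{\text{gen}})/\mu_f$. For the generalization term I would use the personalized uniform-convergence bound $\varepsilon_{\text{gen}}(\Fc_{[M]}; n_{[M]}) \leq 2\Pf(\Fc_{[M]}; n_{[M]})$, the exact analogue of Lemma~\ref{lem:rademacher} with the personalized Rademacher complexity $\Pf$ in place of $\Rf$; its proof is identical, since the supremum over $\omega$ is simply replaced by a supremum over $(\omega^\alpha, \omega^\beta_{[M]})$. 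For the optimization term I would plug in the convergence guarantee of LSGD-PFL from Lemma~\ref{lem:LSGD-PFL_full}, whose per-epoch error $\varepsilon_{\text{opt}}^l$ takes a three-part form governed by the variance $\sigma_b^2$, the shared-block smoothness $\beta_{\omega^\alpha}$ together with $G^2+\sigma^2$, and a burn-in requirement of the form $\rho^l \geq \Omega(\max\{\beta_{\omega^\beta}/\kappa^l, \beta_{\omega^\alpha}\}/\mu_\omega)$.

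With both pieces in hand I would then balance optimization against generalization epoch by epoch, exactly as in the proof of Corollary~\ref{col:convex}: choose $\rho^l$ so that each component of $\varepsilon_{\text{opt}}^l$ is at most $O(\Pf^l)$. Matching the variance component forces $\rho^l \gtrsim \sigma_b^2/(\mu_\omega \kappa^l M \Pf^l)$, matching the bias/dissimilarity component forces $\rho^l \gtrsim \sqrt{\beta_{\omega^\alpha}(G^2+\sigma^2)/(\mu_\omega^2 \Pf^l)}$, and the burn-in contributes the $\max\{\beta_{\omega^\beta}/\kappa^l, \beta_{\omega^\alpha}\}/\mu_\omega$ term; summing these three requirements over $l\in[l(T)]$ yields precisely the stated communication bound. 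Substituting $\varepsilon_{\text{opt}}^l = O(\Pf^l)$ back into the personalized form of Theorem~\ref{thm:global_regret} collapses the factor $\sqrt{K(\Pf^{l-1} + \varepsilon_{\text{opt}}^l)/\mu_f}$ to $\sqrt{K\Pf^{l-1}/\mu_f}$, producing the claimed regret.

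The \textbf{main obstacle} I anticipate is the first step: confirming that the machinery behind Theorem~\ref{thm:global_regret} genuinely decouples across agents in the personalized regime, so that a per-agent excess risk (each agent measured against its own $f^*_m$) may be substituted without disturbing the inductive coupling between virtual and true regret in Lemma~\ref{lem:true_virtual_loss}. Once that decoupling is verified, the remainder is the routine balancing calculation of Corollary~\ref{col:convex}, complicated only by the extra third term and the burn-in condition present in the LSGD-PFL guarantee but absent from the FedAvg and SCAFFOLD analyses.
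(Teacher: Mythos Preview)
Your proposal is correct and matches the paper's approach exactly: the paper's proof is a one-paragraph sketch stating that it ``largely follows that of Corollary~\ref{col:convex}''—decompose the excess risk via Lemma~\ref{lem:error_decompose}, bound generalization by the personalized Rademacher complexity $\Pf$, bound optimization via the LSGD-PFL guarantee of Lemma~\ref{lem:LSGD-PFL_full}, and balance so that optimization does not dominate generalization. You have supplied considerably more detail than the paper does (including the per-agent decoupling concern and the explicit matching of each term in $\rho^l$), but the route is identical.
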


    The proof largely follows that of Corollary~\ref{col:convex}:  decomposing excess risk to generalization and optimization errors; using Rademacher complexity to characterize the generalization error; using FL convergence analyses to characterize the optimization error; and combining them together such that the optimization error does not dominate the generalization error. As the LSGD-PFL protocol \citep{hanzely2021personalized} is adopted to solve the personalized FL task as an illustration, its corresponding convergence analyses should be incorporated, which is presented in Lemma~\ref{lem:LSGD-PFL_full}.

    \subsubsection{A Linear Reward Function Class}\label{subapp:per_linear}
    As an extension of the linear reward function in Appendix~\ref{subapp:linear}, we consider that 
    \begin{align*}
        \mu_m(x_m, a_m) = \langle \phi(x_m, a_m), \omega^*_m \rangle, \qquad \forall m\in [M], (x_m, a_m) \in \Xc_m \times \Ac_m,
    \end{align*}
    and the true model parameters $\{\omega^*_m: m\in [M]\}$ follow Assumption~\ref{asp:per}, i.e., $\omega^*_m = [\omega^{\alpha,*}, \omega^{*,\beta}_m]$ with $\omega^{\alpha,*}$ shared among all agents.

    It can be further realized that the above problem setting is identical to a $\tilde{d}$-dimensional linear system, where $\tilde{d}:= d^{\alpha}+ \sum_{m\in [M]}d^\beta_m$: the overall true model parameter is 
    \begin{align*}
        \tilde{\omega}^* = \left[\omega^{*,\alpha}, \omega^{*,\beta}_1, \cdots,  \omega^{*,\beta}_M\right] \in \Rb^{\tilde{d}}.
    \end{align*}
    and a correspondingly feature mapping $\tilde\phi(\cdot)$ is
    \begin{align*}
        \tilde{\phi}(x_m, a_m) = \left[\phi(x_m, a_m)_{[1:d^{\alpha}]}, \boldsymbol{O}_{d^{\beta}_1}, \cdots, \boldsymbol{O}_{d^{\beta}_{m-1}}, \phi(x_m, a_m)_{[d^{\alpha}+1: d_m]}, \boldsymbol{O}_{d^{\beta}_{m+1}}, \cdots, \boldsymbol{O}_{d^{\beta}_M}\right],
    \end{align*}
    i.e., an expanded version of the original feature, where $\phi(x_m, a_m)_{[i:j]} \in \Rb^{j-i+1}$ denotes the sub-vector containing $[i:j]$-th elements in $\phi(x_m, a_m)$ and $\boldsymbol{O}_{i}\in \Rb^i$ an $i$-dimensional null vector.

    With this reformulated problem, discussions from Appendix~\ref{subapp:linear} can be directly leveraged. Especially, Corollary~\ref{col:linear} indicates the following result.
    \begin{corollary}\label{col:linear_per}
        In the considered linear reward function class with partially true parameters, using distributed AGD as the adopted FL protocol to solve the FL problem in Eqn.~\eqref{eqn:ridge_regression} with reformulated feature mapping $\tilde{\phi}(\cdot)$ and $\tau^l = 2^l$, FedIGW incurs a regret of 
        \begin{align*}
            \Reg(T)  = \tilde{O}\left(\sqrt{MK\tilde{d}T}\right)
        \end{align*}
        and the amount of real numbers communicated can be bounded as $O(d^{\alpha}\log(d^\alpha)\sqrt{M^3T})$.
    \end{corollary}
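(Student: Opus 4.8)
The plan is to reduce the personalized problem to the non-personalized linear analysis of Corollary~\ref{col:linear} via the lifting $(\tilde{\omega}^*, \tilde{\phi})$ introduced just above the statement, and then to sharpen the communication accounting by exploiting the block-sparse structure of $\tilde{\phi}$.

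First I would verify that the lifting preserves realizability. By construction, $\tilde{\phi}(x_m, a_m)$ places the shared part $\phi(x_m, a_m)_{[1:d^\alpha]}$ in the first $d^\alpha$ coordinates and the personal part $\phi(x_m, a_m)_{[d^\alpha+1:d_m]}$ in agent $m$'s block, with zeros elsewhere, while $\tilde{\omega}^* = [\omega^{*,\alpha}, \omega^{*,\beta}_1, \dots, \omega^{*,\beta}_M]$. Hence $\langle \tilde{\omega}^*, \tilde{\phi}(x_m, a_m)\rangle = \langle \omega^{*,\alpha}, \phi(x_m, a_m)_{[1:d^\alpha]}\rangle + \langle \omega^{*,\beta}_m, \phi(x_m, a_m)_{[d^\alpha+1:d_m]}\rangle = \langle \omega^*_m, \phi(x_m, a_m)\rangle = \mu_m(x_m, a_m)$, so Assumption~\ref{asp:realizable} holds for the single lifted function $f_{\tilde{\omega}^*}$ on the $\tilde{d}$-dimensional feature. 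The personalized task is therefore literally an instance of the globally-shared linear setting of Appendix~\ref{subapp:linear} in dimension $\tilde{d}$, and solving the ridge problem in Eqn.~\eqref{eqn:ridge_regression} over $\Rb^{\tilde{d}}$ is exactly the personalized FL problem of Section~\ref{subsec:per}.

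Given this identification, the regret bound follows immediately from Corollary~\ref{col:linear} with $d$ replaced by $\tilde{d}$: the excess risk is $\tilde{O}(\tilde{d}/n)$, so with $\tau^l = 2^l$ Theorem~\ref{thm:global_regret} yields $\Reg(T) = \tilde{O}(\sqrt{MK\tilde{d}T})$. The only remaining---and main---step is the communication count. Naively substituting $d \to \tilde{d}$ would give the loose $O(\tilde{d}\log(\tilde{d})\sqrt{M^3T})$. The key observation is that agent $m$'s empirical loss depends on $\tilde{\omega}$ only through the shared coordinates and her own block, so its gradient vanishes on every other agent's personal block. Consequently the personal parameters $\omega^\beta_m$ require no cross-agent aggregation and are updated purely locally, and each distributed-AGD round transmits only the $d^\alpha$-dimensional shared-coordinate gradient per agent rather than the full $\tilde{d}$-dimensional one. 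This replaces the per-round cost $Md$ in Corollary~\ref{col:linear} by $Md^\alpha$; since the condition number (and hence the number $O(\sqrt{M2^l}\log(\cdot))$ of AGD rounds in epoch $l$) is unchanged by the lifting, summing over the $O(\log T)$ epochs gives the claimed $O(d^\alpha\log(d^\alpha)\sqrt{M^3T})$.

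The main obstacle will be making this communication argument rigorous: one must confirm that distributed AGD applied to the block-structured objective still attains optimization error $\tilde{O}(\tilde{d}/n)$ while only the shared gradient block is aggregated, i.e., that the local-only updates of $\omega^\beta_m$ do not slow convergence in the number of communication rounds. This is ensured because AGD's round complexity is governed by the condition number of the full objective, which with $\lambda = O(1/n)$ remains $O(n)$ independent of $\tilde{d}$, so the sparsity affects only the per-round payload and not the iteration count.
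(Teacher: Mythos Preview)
Your proposal is correct and follows the paper's own approach: reduce to Corollary~\ref{col:linear} via the $\tilde{d}$-dimensional lifting $(\tilde{\omega}^*,\tilde{\phi})$ to obtain the regret bound, then read off the communication cost. The paper in fact gives no proof beyond the sentence ``Corollary~\ref{col:linear} indicates the following result'', so your write-up is strictly more detailed. In particular, your block-sparsity argument---that $\nabla_{\omega^\beta_j}\widehat{\Lc}_m=0$ for $m\neq j$, hence only the $d^\alpha$ shared coordinates must be aggregated while AGD's iteration count (driven by the condition number $O(n)$, unaffected by the lifting since $\|\tilde{\phi}\|_2=\|\phi\|_2$) stays the same---is exactly the missing step needed to get $d^\alpha$ rather than $\tilde{d}$ in the communication bound, and the paper leaves it implicit.
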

    
    \subsection{Robustness, Privacy, and Beyond: Details of Section~\ref{subsec:beyond}}\label{app:beyond}
    We here provide some additional discussions on incorporating appendages in FL studies to provide robustness and privacy guarantees for FedIGW among some other directions, e.g., fairness guarantees \citep{mohri2019agnostic, du2021fairness}, client selections \citep{balakrishnan2022diverse,fraboni2021clustered}, and practical communication designs \citep{chen2021distributed, wei2022federated, zheng2020design}. The key is that as long as one FL protocol can provide an estimated function $\widehat{f}$ (which is used in IGW interactions), it can be adopted in FedIGW; thus the desirable properties of the selected FL protocol are naturally inherited to FedIGW. 
    
    For example, \cite{yin2018byzantine, pillutla2022robust, fu2019attack, li2021ditto, zhu2023byzantine} studied how to handle malicious agents, who can deviate arbitrarily from the FL protocol and tamper with their updates, during learning. The commonly adopted approach is to invoke certain robust estimators (e.g., median and trimmed mean). Under suitable assumptions, existing approaches have shown that as long as the proportion of malicious agents does not exceed a threshold (typically, $1/2$), the estimators calculated by federation can still converge within certain amounts of error due to the malicious agents. A recent work \citep{zhu2023byzantine} provides a summary of convergence rates with different robust estimators, which can be leveraged to establish theoretical understandings of FedIGW with robustness.

    On the privacy side, many mechanisms have also been studied in FL \citep{wei2020federated, yin2021comprehensive, liu2022privacy}, to guarantee differential privacy (DP), where the most common approach is to insert noises of suitable scales. Convergence rates have also been established under suitable assumptions, e.g., in \citet{wei2020federated, girgis2021shuffled, wei2021user}. With those analyses, the theoretical behavior of FedIGW with DP can also be similarly established as Corollaries~\ref{col:convex} and \ref{col:convex_per}.

    \section{Algorithm Sketches and Convergence Analyses of FL Designs}\label{app:FL}
    \subsection{FedAvg}\label{subapp:FedAvg}
    The FedAvg algorithm \citep{mcmahan2017communication} is one of the most standard and well-adopted FL protocol. Following it, agents perform local stochastic gradient descents (SGD) with their local objective functions for certain steps and then communicate the updated local models to the server; the server aggregates local models to a global one via a weighted average, which is then communicated to the agents to perform further local SGDs.

    Many theoretical analyses have been provided for FedAvg (e.g., \citet{li2020convergence}). We adopt the one from \citet{karimireddy2020scaffold}  in the following.
    \begin{lemma}[Theorem V in \citet{karimireddy2020scaffold} without client sampling]\label{lem:FedAvg_complete}
        For any dataset $\Sc$, if 
        \begin{itemize}
            \item $\widehat{\Lc}_m(f_{\omega};\Sc_m)$ is $\mu_{\omega}$-strongly convex w.r.t. $\omega$ (see Definition~\ref{def:strongly_convex}) for all $m\in [M]$; 
            \item $\widehat{\Lc}_m(f_{\omega};\Sc_m)$ is $\beta_{\omega}$-smooth w.r.t. $\omega$ (see Definition~\ref{def:smooth}) for all $m\in [M]$;
            \item the stochastic gradients are unbiased and have a $\sigma^2_b$-bounded variance (see Definition~\ref{def:bounded_variance});
            \item the gradients have $G_b$-bounded dissimilarity (see Definition~\ref{def:bounded_dissimilarity}),
        \end{itemize}
        with FedAvg as the adopted FL protocol, the output $\widehat{\omega}$ satisfies that
        \begin{align*}
            \Eb_{\xi}[\widehat{\Lc}(f_{\widehat{\omega}_{\Sc}}; \Sc) - \widehat{\Lc}(f_{\omega^*_{\Sc}}; \Sc) \mid \Sc] \leq   \tilde{O}\left(\frac{\sigma_b^2}{\mu_{\omega}\rho \kappa M} + \frac{\beta_{\omega} G_b^2}{\mu_{\omega}^2\rho^2} + \mu_{\omega} \|\omega^0 - \omega^*_{\Sc}\|_2^2 \exp\left(- \frac{\mu_{\omega}\rho}{16\beta_{\omega}}\right) \right)
        \end{align*}
        when $\rho \geq \frac{8\beta_{\omega}}{\mu_{\omega}}$, where $\rho$ denotes the round of communications (i.e., number of global aggregations), $\kappa$ is the number of local updates (i.e., SGD) between each communication, and $\omega^0$ is the initialization. Note that the last term which decays exponentially w.r.t. $\rho$ is omitted in Corollary~\ref{col:convex_raw_full} and the following derivations for simplicity.
    \end{lemma}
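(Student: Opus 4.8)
The plan is to recognize that Lemma~\ref{lem:FedAvg_complete} is exactly Theorem~V of \citet{karimireddy2020scaffold} specialized to full participation---every one of the $M$ agents communicates in each round, so the client-sampling variance term in their general statement drops out---after identifying the paper's global and per-client objectives $F$ and $F_m$ with $\widehat{\Lc}(\cdot\,;\Sc)$ and $\widehat{\Lc}_m(\cdot\,;\Sc_m)$. Concretely, I would check that the four hypotheses listed here ($\mu_\omega$-strong convexity and $\beta_\omega$-smoothness of each local objective in $\omega$, unbiased stochastic gradients with $\sigma_b^2$-bounded variance, and $G_b$-bounded gradient dissimilarity) are exactly the assumptions of their theorem, and that the convergence guarantee they prove for the function-value suboptimality is precisely the quantity $\Eb_\xi[\widehat{\Lc}(f_{\widehat{\omega}_\Sc};\Sc)-\widehat{\Lc}(f_{\omega^*_\Sc};\Sc)\mid\Sc]$ bounded here. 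With this dictionary in place the estimate is immediate; the remaining paragraphs sketch the argument behind their theorem so that the bound is not a black box.

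For a self-contained derivation I would follow the standard Local-SGD argument underlying that theorem. Abbreviate $F(\omega):=\widehat{\Lc}(f_{\omega};\Sc)$ and $F_m(\omega):=\widehat{\Lc}_m(f_{\omega};\Sc_m)$, so that $F=\sum_m (n_m/n)F_m$ and $\omega^*_\Sc=\argmin_\omega F(\omega)$. For communication round $r$ set $\omega^{r,0}_m=\omega^r$, let each agent take $\kappa$ local stochastic-gradient steps with step size $\eta$, and define the virtual average $\bar\omega^{r,k}:=\sum_m (n_m/n)\,\omega^{r,k}_m$, so $\bar\omega^{r,0}=\omega^r$ and $\bar\omega^{r,\kappa}=\omega^{r+1}$. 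First I would expand $\Eb\|\bar\omega^{r,k+1}-\omega^*_\Sc\|_2^2$ one local step at a time: the cross term is handled by combining $\mu_\omega$-strong convexity and $\beta_\omega$-smoothness of each $F_m$, yielding both a contraction and a negative multiple of the value gap $F(\bar\omega^{r,k})-F(\omega^*_\Sc)$, while the squared-norm term splits---using independence of the stochastic noise across agents---into a deterministic gradient piece and a variance piece of magnitude $\sum_m (n_m/n)^2\sigma_b^2=\sigma_b^2/M$ in the balanced case. Because the local gradients are taken at the drifted iterates $\omega^{r,k}_m$ rather than at $\bar\omega^{r,k}$, each step additionally pays a client-drift term $\sum_m (n_m/n)\Eb\|\omega^{r,k}_m-\bar\omega^{r,k}\|_2^2$.

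The crux---and the step I expect to be the main obstacle---is controlling this accumulated client drift. Over the $\kappa$ local steps, $\Eb\|\omega^{r,k}_m-\bar\omega^{r,k}\|_2^2$ grows like $\eta^2\kappa(\kappa G_b^2+\sigma_b^2)$; the heterogeneity constant $G_b$ enters precisely because, absent control variates, the local directions $\nabla F_m$ disagree across agents. This is exactly the drift that SCAFFOLD neutralizes, and it is the origin of the $\beta_\omega G_b^2/(\mu_\omega^2\rho^2)$ summand in the present bound, as opposed to the faster rate in the SCAFFOLD counterpart (Lemma~\ref{lem:SCAFFOLD_complete}). Substituting the drift estimate back into the per-step inequality and telescoping over the $\kappa$ local steps gives a per-round recursion of the form $\Eb\|\omega^{r+1}-\omega^*_\Sc\|_2^2 \le (1-\tfrac12\mu_\omega\eta\kappa)\,\Eb\|\omega^{r}-\omega^*_\Sc\|_2^2 - c\,\eta\kappa\,\Eb[F(\omega^r)-F(\omega^*_\Sc)] + \eta^2\kappa\,\sigma_b^2/M + \eta^3\kappa^2\beta_\omega G_b^2$ for an absolute constant $c>0$.

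Finally I would unroll this recursion across the $\rho$ rounds and select $\eta$ by the standard step-size tuning lemma that balances the geometric decay of the initialization error against the variance and drift floors, which is admissible once $\rho\ge 8\beta_\omega/\mu_\omega$ (ensuring $\eta\le 1/(8\beta_\omega\kappa)$). Optimizing $\eta$ trades the three contributions against one another and produces the advertised $\tilde{O}(\sigma_b^2(\mu_\omega\rho\kappa M)^{-1}+\beta_\omega G_b^2(\mu_\omega\rho)^{-2})$ together with the exponentially small initialization term; aggregating the value-gap terms collected along the way and invoking $\mu_\omega$-strong convexity one last time converts the distance bound into the stated guarantee on $F(\widehat\omega_\Sc)-F(\omega^*_\Sc)=\widehat{\Lc}(f_{\widehat{\omega}_\Sc};\Sc)-\widehat{\Lc}(f_{\omega^*_\Sc};\Sc)$. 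Since all constants are already recorded in \citet{karimireddy2020scaffold}, in the write-up I would simply cite their Theorem~V and note the full-participation specialization.
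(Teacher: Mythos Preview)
Your proposal is correct and matches the paper's approach: the paper gives no proof of this lemma at all, simply stating it as a direct import of Theorem~V from \citet{karimireddy2020scaffold} specialized to full participation, which is exactly what you recognize in your opening paragraph and concluding sentence. Your additional self-contained sketch of the Local-SGD argument (drift bound, per-round recursion, step-size tuning) goes well beyond what the paper provides and is a nice bonus, but for the purposes of this paper the citation alone suffices.
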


    A few definitions used above are made precise in the following, which are inherited from \cite{karimireddy2020scaffold} and presented here for completeness:
    \begin{definition}[Strongly Convex]\label{def:strongly_convex}
        $\widehat{\Lc}_m(f_{\omega};\Sc)$ is $\mu_{\omega}$-strongly convex w.r.t. $\omega$ for $\mu_{\omega}> 0$ if
        \begin{align*}
            \widehat{\Lc}_m(f_{\omega'};\Sc) - \widehat{\Lc}_m(f_{\omega};\Sc) \geq \left\langle \nabla_{\omega} \widehat{\Lc}_m(f_{\omega};\Sc), \omega' - \omega \right\rangle  + \frac{\mu_{\omega}}{2}\left\|\omega' - \omega\right\|_2^2, \quad \text{for any $\omega$ and $\omega'$.}
        \end{align*}
    \end{definition}
    \begin{definition}[Smooth]\label{def:smooth}
        $\widehat{\Lc}_m(f_{\omega};\Sc)$ is $\beta_{\omega}$-smooth w.r.t. $\omega$ for $\beta_{\omega}> 0$ if
        \begin{align*}
            \widehat{\Lc}_m(f_{\omega'};\Sc) - \widehat{\Lc}_m(f_{\omega};\Sc) \leq \left\langle \nabla_{\omega} \widehat{\Lc}_m(f_{\omega};\Sc), \omega' - \omega \right\rangle  + \frac{\beta_{\omega}}{2}\left\|\omega' - \omega\right\|_2^2, \quad \text{for any $\omega$ and $\omega'$.}
            % \left\|\nabla \widehat{\Lc}_m(f_{\omega};\Sc) - \nabla \widehat{\Lc}_m(f_{\omega'};\Sc)\right\|_2 \leq \beta_{\omega}\left\|\omega - \omega'\right\|_2, \quad \text{for any $\omega$ and $\omega'$.}
        \end{align*}
    \end{definition}
    \begin{definition}[Stochastic Gradients with Bounded Variances] \label{def:bounded_variance}
    The stochastic gradients have a $\sigma^2_b$-bounded variance if
        \begin{align*}
            \frac{1}{n_m}\sum_{i\in [n_m]}\left\|\nabla_{\omega} \ell_m(f_{\omega}(x_m^i, a_m^i); r_{m}^i) - \nabla_{\omega} \widehat{\Lc}_m(f_{\omega}; \Sc_m)\right\|_2^2 \leq \sigma_b^2, \quad \text{for any $\omega$ and $m$.}
        \end{align*}
    \end{definition}
    \begin{definition}[Gradients with Bounded Dissimilarity]\label{def:bounded_dissimilarity}
        The gradients have a $G_b$-bounded dissimilarity if
        \begin{align*}
            \frac{1}{M}\sum_{m \in [M]}\left\| \nabla_{\omega} \widehat{\Lc}_m(f_{\omega}; \Sc_m)\right\|_2^2 \leq G_b^2, \quad \text{for any $\omega$.}
        \end{align*}
    \end{definition}

    \subsection{SCAFFOLD}\label{subapp:SCAFFOLD}
    The SCAFFOLD algorithm is proposed in \citet{karimireddy2020scaffold}, which enhances FedAvg via leveraging variance reduction to correct drifts in heterogenous agents' local updates. The following result is established in \citet{karimireddy2020scaffold} to characterize the convergence of the SCAFFOLD protocol.
    \begin{lemma}[Theorem VII in \citet{karimireddy2020scaffold} without client sampling]\label{lem:SCAFFOLD_complete}
        For any dataset $\Sc$, if 
        \begin{itemize}
            \item $\widehat{\Lc}_m(f_{\omega};\Sc_m)$ is $\mu_{\omega}$-strongly convex w.r.t. $\omega$ (see Definition~\ref{def:strongly_convex}) for all $m\in [M]$; 
            \item $\widehat{\Lc}_m(f_{\omega};\Sc_m)$ is $\beta_{\omega}$-smooth w.r.t. $\omega$ (see Definition~\ref{def:smooth}) for all $m\in [M]$;
            \item the stochastic gradients are unbiased and have a $\sigma^2_b$-bounded variance (see Definition~\ref{def:bounded_variance}),
            % \item the gradients have $G_b$-bounded dissimilarity (see Definition~\ref{def:bounded_dissimilarity}),
        \end{itemize}
        with SCAFFOLD as the adopted FL protocol, the output $\widehat{\omega}$ satisfies that
        \begin{align*}
            \Eb_{\xi}[\widehat{\Lc}(f_{\widehat{\omega}_{\Sc}}; \Sc) - \widehat{\Lc}(f_{\omega^*_{\Sc}}; \Sc) \mid \Sc] \leq   \tilde{O}\left(\frac{\sigma_b^2}{\mu_{\omega}\rho \kappa M}  + \mu_{\omega} \tilde{D}^2 \exp\left(- \min\left\{\frac{\rho}{30}, \frac{\mu_{\omega}\rho}{162\beta_{\omega}}\right\}\right) \right)
        \end{align*}
        when $\rho \geq \max\{\frac{162\beta_{\omega}}{\mu_{\omega}}, 30\}$, where $\rho$ denotes the round of communications (i.e., number of global aggregations), $\kappa$ is the number of local updates (i.e., SGD) between each communication, $\tilde{D}^2$ is a distant measure w.r.t. the initialization defined in \citet{karimireddy2020scaffold}. Note that the last term which decays exponentially w.r.t. $\rho$ is omitted in Corollary~\ref{col:convex_raw_full} and the following derivations for simplicity.
    \end{lemma}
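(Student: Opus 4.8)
The plan is to obtain this lemma as a direct specialization of Theorem~VII in \citet{karimireddy2020scaffold} to the full-participation regime, in which every one of the $M$ clients updates in each of the $\rho$ communication rounds. Since the statement is for a \emph{fixed} dataset $\Sc$, the empirical local objectives $\widehat{\Lc}_m(f_\omega;\Sc_m)$ play the role of the per-client objectives in their analysis, and the three hypotheses---$\mu_\omega$-strong convexity, $\beta_\omega$-smoothness, and $\sigma_b^2$-bounded stochastic-gradient variance (Definitions~\ref{def:strongly_convex}, \ref{def:smooth} and \ref{def:bounded_variance})---match their standing assumptions exactly. Crucially, because all clients participate every round, the sampling-induced variance term in their general bound vanishes, and no bounded-dissimilarity quantity $G_b$ survives: this is precisely what the control-variate mechanism of SCAFFOLD is designed to remove, and it is the structural difference from the FedAvg bound in Lemma~\ref{lem:FedAvg_complete}.

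First I would set up the SCAFFOLD iterates: in each round, starting from the current global model $\omega$, each client $m$ runs $\kappa$ local SGD steps of the corrected form $y_m \gets y_m - \eta_l\big(g_m(y_m) - c_m + c\big)$, where $c_m$ is the client control variate, $c=\tfrac1M\sum_{m\in[M]} c_m$ the server control variate, and $g_m$ an unbiased stochastic gradient of $\widehat{\Lc}_m$. The central quantity to control is the \emph{client drift} $\Eb_\xi\big[\|y_m - \omega\|_2^2\big]$, the expected squared deviation of each local iterate from the round's global model. The main step is to show that, unlike in FedAvg, this drift is governed by the \emph{control-variate lag} $\tfrac1M\sum_{m\in[M]}\|c_m - \nabla\widehat{\Lc}_m(f_\omega;\Sc_m)\|_2^2$ rather than by the raw gradient dissimilarity; under $\beta_\omega$-smoothness and a sufficiently small local step size $\eta_l = O(1/(\kappa\beta_\omega))$, the drift contracts geometrically across the $\kappa$ local steps.

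Next I would construct a Lyapunov (potential) function that combines the global suboptimality $\widehat{\Lc}(f_{\omega};\Sc)-\widehat{\Lc}(f_{\omega^*_\Sc};\Sc)$ (or the squared distance $\|\omega - \omega^*_\Sc\|_2^2$) with the accumulated control-variate error, and derive a one-round recursion of the contraction form $\Phi^{r+1} \le (1-\Omega(\eta_g\eta_l\kappa\mu_\omega))\,\Phi^{r} + O(\eta_g^2\eta_l^2\kappa\,\sigma_b^2/M)$ for an appropriate global step size $\eta_g$. Unrolling over the $\rho$ rounds and optimizing the step sizes, subject to the stability constraint that forces $\rho \ge \max\{162\beta_\omega/\mu_\omega,\,30\}$, yields the two reported terms: the exponentially decaying initialization term $\mu_\omega\tilde{D}^2\exp(-\min\{\rho/30,\mu_\omega\rho/(162\beta_\omega)\})$ from the contraction factor, and the statistical floor $\tilde{O}(\sigma_b^2/(\mu_\omega\rho\kappa M))$ from the per-round noise accumulated over the $\rho\kappa M$ stochastic gradients. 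Converting the distance bound into the function-value bound of the statement uses $\beta_\omega$-smoothness together with strong convexity. The main obstacle is the joint tracking of the drift and the control-variate lag: the two are coupled through the local updates, so the potential must be chosen to absorb their cross terms, and the step-size window must be narrow enough that both the inner drift recursion and the outer contraction hold simultaneously---this is exactly the delicate bookkeeping of \citet{karimireddy2020scaffold}, which I would adapt to the full-participation case, dropping the client-sampling terms throughout.
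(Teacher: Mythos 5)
The paper does not actually prove this lemma: it is imported verbatim as Theorem~VII of \citet{karimireddy2020scaffold}, specialized to full client participation, with the sampling terms dropped and the exponential initialization term noted for later omission. Your sketch correctly reconstructs exactly that source argument---control-variate correction eliminating the dissimilarity term $G_b$, drift bounded by the control-variate lag, a Lyapunov contraction yielding the exponential term, and the $\sigma_b^2/(\mu_\omega\rho\kappa M)$ noise floor---so it is correct and follows essentially the same route the paper relies on by citation.
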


    \subsection{LSGD-PFL}\label{subapp:LSGD_PFL}
    The LSGD-PFL protocol is summarized in \citet{hanzely2021personalized}, which is a general design for personalized federated learning problems. It largely follows FedAvg \citep{mcmahan2017communication}, while only the globally shared parameters are communicated and aggregated. The following lemma is provided in \citet{hanzely2021personalized} to characterize the convergence of LSGD-PFL.
    \begin{lemma}[Theorem 1 in \cite{hanzely2021personalized}]\label{lem:LSGD-PFL_full}
        For any dataset $\Sc$, if
        \begin{itemize}
            \item $\widehat{\Lc}_m(f_{\omega_m};\Sc_m)$ is $\mu_{\omega}$-strongly convex w.r.t. $\omega_m$ (see Definition~\ref{def:strongly_convex}) for all $m\in [M]$;
            \item $\widehat{\Lc}_m(f_{\omega^{\alpha}, \omega_m^{\beta}};\Sc_m)$ is $\beta_{\omega^\alpha}$-smooth w.r.t. $\omega^\alpha$ and $M\beta_{\omega^\beta}$-smooth w.r.t. $\omega^\beta_m$ (see Definition~\ref{def:smooth}) for all $m\in [M]$;
            \item the stochastic gradients w.r.t. $\omega^{\alpha}$ is unbiased and have a $\sigma_b^2$-bounded variance (see Definition~\ref{def:bounded_variance});
            \item the stochastic gradients w.r.t. $\{\omega^{\beta}_m: m\in [M]\}$ is unbiased and have a $\sigma_b^2$-bounded variance (see Definition~\ref{def:bounded_variance});
            \item the gradients w.r.t. $\omega$ have $G_b$ bounded dissimilarity (see Definition~\ref{def:bounded_dissimilarity}),
        \end{itemize}
        with LSGD-PFL as the adopted FL protocol, the output $\widehat{\omega}$  has
        $\varepsilon_{\text{opt}}(\Fc_{[M]}; n_{[M]}) \leq \varepsilon'$ after
        \begin{align*}
            \tilde{O}\left(\frac{\max\{\beta_{\omega^\beta }\kappa^{-1}, \beta_{\omega^\alpha}\}}{\mu_{\omega}} + \frac{\sigma^2_b}{\mu_{\omega} \kappa M \varepsilon'} + \frac{1}{\mu_{\omega}}\sqrt{\frac{\beta_{\omega^\alpha}(G^2+ \sigma^2)}{\varepsilon'}}\right)
        \end{align*}rounds of communications, where $\kappa$ is the number of local updates.
    \end{lemma}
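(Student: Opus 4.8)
The plan is to recognize that this statement is a restatement of Theorem 1 in \citet{hanzely2021personalized} specialized to our notation, so that strictly speaking it may be invoked directly; I would nonetheless sketch the self-contained convergence argument one gives for LSGD-PFL under the stated strong-convexity, asymmetric-smoothness, bounded-variance, and bounded-dissimilarity hypotheses, and then translate the resulting parameter-distance guarantee into the function-value statement $\varepsilon_{\text{opt}}(\Fc_{[M]}; n_{[M]}) \le \varepsilon'$. First I would fix one communication round and set up the dynamics: each agent $m$ holds a local copy of the shared block $\omega^\alpha$ together with its personalized block $\omega^\beta_m$, runs $\kappa$ steps of local SGD on the joint local objective $\widehat{\Lc}_m(f_{\omega^\alpha, \omega^\beta_m}; \Sc_m)$ with step size $\eta$, after which the server averages only the shared blocks while every personalized block stays local. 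The natural Lyapunov function is the joint squared distance to the minimizer of the personalized empirical loss $\widehat{\Lc}(f_{\omega^\alpha,\omega^\beta_{[M]}};\Sc_{[M]})$,
\[
\Phi_r := \Eb_\xi\!\left[\|\omega^\alpha_r - \omega^{\alpha,*}_{\Sc}\|_2^2 + \tfrac{1}{M}\sum_{m\in[M]} \|\omega^\beta_{m,r} - \omega^{\beta,*}_{m,\Sc}\|_2^2\right],
\]
the structural point being that the personalized blocks behave like $M$ independent local-SGD trajectories that are never averaged, whereas the shared block behaves like federated averaged SGD subject to client drift.

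Next I would derive a one-round contraction. Combining $\mu_\omega$-strong convexity (which supplies the linear-rate factor $1-\Theta(\mu_\omega \eta \kappa)$), the asymmetric smoothness hypotheses, and unbiasedness of the stochastic gradients, I would obtain a recursion of the schematic form
\[
\Phi_{r+1} \le \bigl(1 - c\,\mu_\omega \eta \kappa\bigr)\Phi_r + C_1\,\frac{\eta^2 \kappa \sigma_b^2}{M} + C_2\,\eta\,\mathcal{D}_r,
\]
where $\mathcal{D}_r$ is the aggregate client drift, i.e.\ the mean squared deviation of each agent's local shared-block iterates from the round's starting point. The $\sigma_b^2/M$ scaling in the variance term reflects that only the shared-block gradients are averaged over the $M$ agents; the factor $M$ in the assumed $M\beta_{\omega^\beta}$-smoothness of the personalized blocks is precisely what prevents the unaveraged personalized updates from inflating this term.

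The crux, and the step I expect to be hardest, is bounding the drift $\mathcal{D}_r$. Running $\kappa$ unsynchronized local steps lets the agents' shared-block iterates diverge, and I would control this by iterating the local-update inequality, using $\beta_{\omega^\alpha}$-smoothness and the $G_b$-bounded dissimilarity (Definition~\ref{def:bounded_dissimilarity}) together with the bounded variance, to show $\mathcal{D}_r \le O\!\bigl(\eta^2 \kappa^2 (G_b^2 + \sigma_b^2)\bigr)$ for sufficiently small $\eta$. The subtlety is that $\omega^\alpha$ and $\omega^\beta_m$ are coupled inside each $\widehat{\Lc}_m$, so the drift of the shared block cannot be analyzed in isolation from the simultaneously moving personalized block; the asymmetric smoothness constants are exactly what decouple the two, ensuring that only the shared-block dissimilarity enters the $\sqrt{\beta_{\omega^\alpha}(G^2+\sigma^2)}$ contribution and that the personalized drift does not appear at all.

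Finally I would choose $\eta = \min\{1/(\text{effective smoothness}),\, \tilde\Theta(1/(\mu_\omega \kappa \rho))\}$ — which forces $\rho \ge \Omega(\max\{\beta_{\omega^\beta}/\kappa, \beta_{\omega^\alpha}\}/\mu_\omega)$ so that the step size is admissible — substitute the drift bound, and telescope the recursion over $\rho$ rounds. Unrolling the geometric contraction collapses the initialization distance into an exponentially decaying term (omitted in the statement, as already done for FedAvg and SCAFFOLD in Lemmas~\ref{lem:FedAvg_complete} and \ref{lem:SCAFFOLD_complete}), leaving the variance and drift contributions; solving $\Phi_\rho \le \varepsilon'$ for $\rho$ and converting the parameter gap to a function-value gap via $\beta_{\omega^\alpha}$-smoothness yields the three additive terms in the claimed communication bound, namely the conditioning floor, the $(\kappa M)^{-1}$ variance term, and the $\sqrt{\,\cdot\,/\varepsilon'}$ drift term, respectively. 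Since the argument is a direct specialization of \citet{hanzely2021personalized}, I would cite that theorem for the precise constants rather than re-deriving every factor.
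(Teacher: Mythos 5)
The paper provides no proof of this lemma at all: it is imported verbatim as Theorem~1 of \citet{hanzely2021personalized} (with the exponentially decaying initialization term dropped, as the paper also does for FedAvg and SCAFFOLD), so your decision to invoke the citation directly is exactly the paper's approach. Your supplementary sketch of the Lyapunov-contraction/drift argument is a plausible reconstruction of the cited proof and is consistent with the three terms in the stated bound, but it goes beyond anything the paper contains and is not needed.
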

    
    \section{Experiment Details}\label{app:exp}
    This section first provides a comprehensive description of the experimental settings and procedures. {\bf The codes and detailed instructions have been uploaded in the supplementary materials so as to execute the experiments and reproduce the results.}

    \textbf{Experimental details.} In the experiments, the system is designed as a synchronous one, i.e., $t_m(t) = t, \forall m\in [M]$, and for both tasks, two-layer multi-layer perceptrons (MLPs) with a hidden layer having a constant $256$ width are used to approximate the reward functions. 

    For practical conveniences, instead of selecting a theoretically sound but sophisticated choice of $\gamma$ for FedIGW as in Theorem~\ref{thm:global_regret}, we set it as a constant hyper-parameter and perform some preliminary manual selections with the final adopted values reported in Table~\ref{tab:hyper}. We believe this approach is more practically appealing as it does not need to scale $\gamma$ consistently; a similar choice of using constant $\gamma$'s is also adopted in \cite{agarwal2023empirical}. {Also, the temperature parameter $\zeta$ used in softmax can be found in Table~\ref{tab:hyper}.}

        \begin{table}[htb]
        \centering
        \caption{Hyperparameter choices for FedIGW in Bibtex and Delicious}
        \begin{tabular}{c|c|c|c|c|c}
        \hline
            Task & Learning Rate & Batch Size & Communications & Parameter  $\gamma$ & {Parameter $\zeta$}\\
            Bibtex &  0.1 & 64 & 100 & 7000 & {0.02}\\
            Delicious & 0.2 & 64 & 100 & 7000 & {0.02}\\
        \hline
        \end{tabular}
        \label{tab:hyper}
    \end{table}
    
    Multiple standard FL protocols including FedAvg \citep{mcmahan2017communication}, SCAFFOLD \citep{karimireddy2020scaffold} and FedProx \citep{li2020federated} are adopted as the FL component in FedIGW. During each FL process, the local batch size, the number of communications, and the local learning rate are specified in Table~\ref{tab:hyper}. Moreover, the epoch length is designed to be growing exponentially as in Corollaries~\ref{col:finite}, \ref{col:linear} and \ref{col:linear_per}, i.e., $\tau^l = 2^l$, while culminating at an upper limit of $4096$ to maintain timely updates. {The same FedAvg setup is also used in experiments with greedy and softmax to ensure fair comparisons.}

    \textbf{Additional comparisons with single-agent baselines.} In Fig.~\ref{fig:exp_main}, comparisons between FedIGW and the state-of-the-art FN-UCB are provided, demonstrating the superiority of FedIGW. Here we further report Fig.~\ref{fig:exp_single}, containing comparisons between FedIGW and two single-agent baselines:
    \begin{itemize}
        \item \textit{AGR.} The adaptive greedy (AGR) algorithm \citep{chakrabarti2008mortal} is selected as one of the single-agent baselines due to its strong empirical performance on Bibtex and Delicious reported in \cite{cortes2018adapting}. The algorithmic details can be found in \cite{cortes2018adapting}, and we also leveraged the code provided in \cite{cortes2018adapting} to build this baseline.

        \item \textit{FALCON.} The other single-agent baseline, FALCON, is proposed in \cite{simchi2022bypassing}, which is essentially the single-agent version of FedIGW. We still adopt the same algorithmic configurations as FedIGW (i.e., epoch length, parameter $\gamma$, local batch size, and local learning rate) except that the MLP is optimized locally instead of in a federation, i.e., there are no communications.
    \end{itemize}

    It can be observed that FedIGW (with $M=10$ participating agents and the basic FedAvg) can outperform the two single-agent baselines on both tasks, demonstrating the benefit of learning in a federation.

    \begin{figure}[htb]
	\setlength{\abovecaptionskip}{-2pt} 
        \centering
        \subfigure{
        \includegraphics[width=0.48\linewidth]{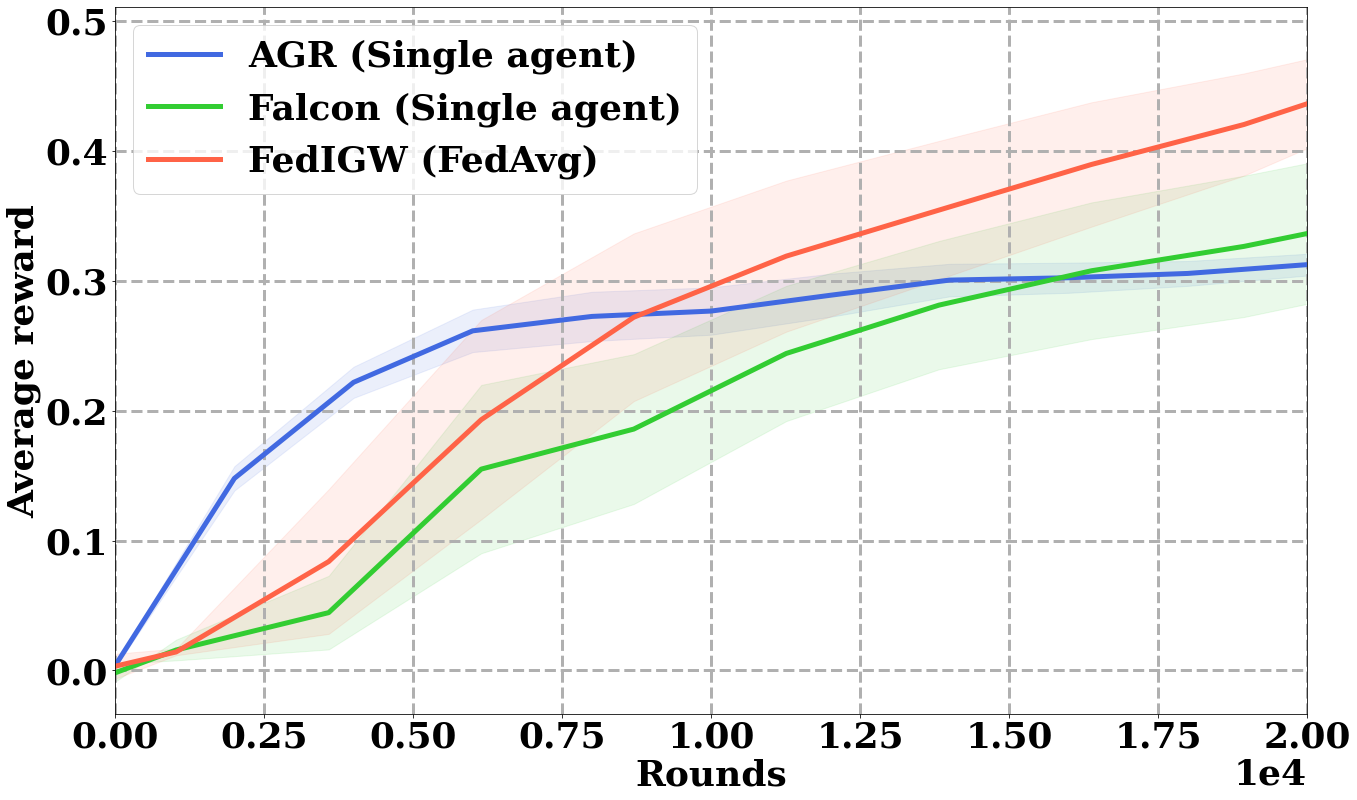}}
        \subfigure{
        \includegraphics[width=0.48\linewidth]{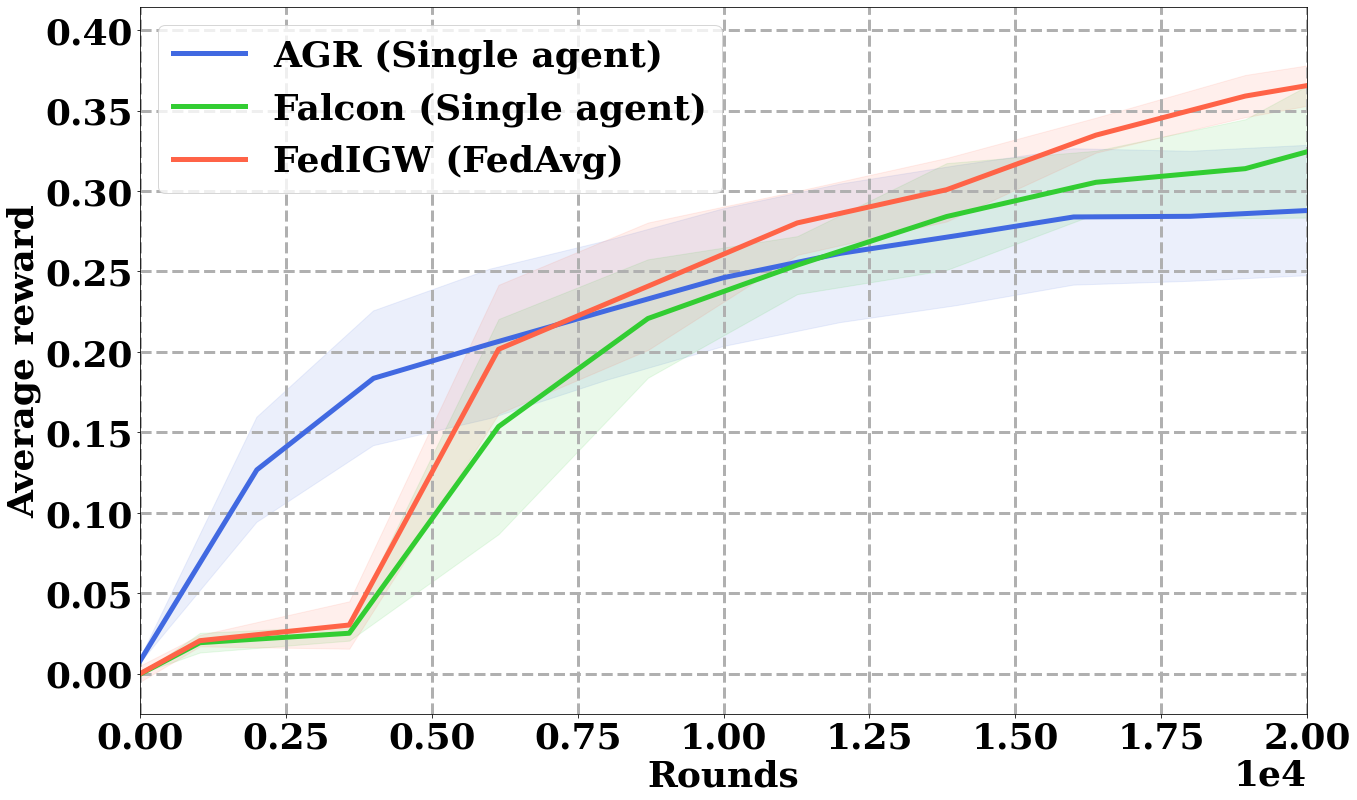}}
        \caption{\centering The averaged reward collected by each agent via FedIGW (with FedAvg and $M = 10$ participating agents) and two single-agent baselines on Bibtex (left) and Delicious (right) datasets.}
        \label{fig:exp_single}%
    \end{figure}

\end{document}